\newcommand{\eqlin}{\sim_\textnormal{lin}}
\newcommand{\eqperm}{\sim_\textnormal{perm}}
\newcommand{\eqcon}{\sim_\textnormal{con}}
\def\1{\bm{1}}
\def\bff{{\mathbf{f}}}
\def\bfv{{\mathbf{v}}}
\def\bfT{{\mathbf{T}}}
\def\bflambda{{\bm{\lambda}}}
\DeclareMathAlphabet{\mathsfit}{\encodingdefault}{\sfdefault}{m}{sl}
\SetMathAlphabet{\mathsfit}{bold}{\encodingdefault}{\sfdefault}{bx}{n}
\def\gA{{\mathcal{A}}}
\def\gI{{\mathcal{I}}}
\def\gJ{{\mathcal{J}}}
\def\gL{{\mathcal{L}}}
\def\gO{{\mathcal{O}}}
\def\gX{{\mathcal{X}}}
\def\gY{{\mathcal{Y}}}
\def\gZ{{\mathcal{Z}}}
\def\sE{{\mathbb{E}}}
\def\sN{{\mathbb{N}}}
\def\sP{{\mathbb{P}}}
\def\sR{{\mathbb{R}}}
\def\sV{{\mathbb{V}}}
\def\sZ{{\mathbb{Z}}}
\newcommand{\vecspan}{\mathrm{span}}
\newtheorem{assumption}{Assumption}
\newtheorem{theorem}{Theorem}
\newtheorem{lemma}[theorem]{Lemma}
\newtheorem{proposition}[theorem]{Proposition}
\newtheorem{definition}[theorem]{Definition}
\definecolor{mydarkblue}{RGB}{0, 0, 139}
\DeclareMathSymbol{\shortminus}{\mathbin}{AMSa}{"39}
\title{Partial Disentanglement via Mechanism Sparsity}
\author[1]{Sébastien~Lachapelle}
\author[1,2]{Simon~Lacoste-Julien}
\affil[1]{%
    Mila \& DIRO, Université de Montréal
}
\affil[2]{%
    Canada CIFAR AI Chair
}
\begin{document}
\maketitle

\begin{abstract}
    \textit{Disentanglement via mechanism sparsity} was introduced recently as a principled approach to extract latent factors without supervision when the causal graph relating them in time is sparse, and/or when actions are observed and affect them sparsely. However, this theory applies only to ground-truth graphs satisfying a specific criterion. In this work, we introduce a generalization of this theory which applies to any ground-truth graph and specifies \textit{qualitatively} how disentangled the learned representation is expected to be, via a new equivalence relation over models we call \textit{consistency}. This equivalence captures which factors are expected to remain entangled and which are not based on the specific form of the ground-truth graph. We call this weaker form of identifiability \textit{partial disentanglement}. The graphical criterion that allows \textit{complete} disentanglement, proposed in an earlier work, can be derived as a special case of our theory. Finally, we enforce graph sparsity with \textit{constrained optimization} and illustrate our theory and algorithm in simulations.
\end{abstract}

\section{Introduction}
\label{sec:introduction}
The need for \textit{robustness}, \textit{transferability} and \textit{explainability} in machine learning is motivating recent efforts to develop systems that capture some form of causal understanding~\citep{Pearl2018TheSP, scholkopf2019causality, InducGoyal2021}. Driven by this goal, the emerging field of \textit{causal representation learning}~\citep{scholkopf2021causal} proposes methods that attempt to reconcile the strengths of deep representation learning, which excels on high-dimensional \textit{low-level} observations like images, with the framework of causality, which offers a formal language to describe and reason about causal relationships between \textit{high-level} variables, e.g. object positions.

The notion of \textit{identifiability} plays a special role in this quest to more interpretability and robustness, since models that aim at both extracting the causal variables and learning their causal relationships can easily be overdetermined, thus loosing all hope of being interpretable. The name of the game is thus to come up with inductive biases that sufficiently restrict the model class to be identifiable, while remaining sufficiently expressive to model a complex environment.

Building from the identifiability analyses of the recent literature on nonlinear ICA~\citep{TCL2016, PCL17, HyvarinenST19, iVAEkhemakhem20a, ice-beem20}, the work of~\citet{lachapelle2022disentanglement} proposed \textit{mechanism sparsity regularization} as an inductive bias to identify the causal latent factors. The authors showed how learning without supervision simultaneously both the latent factors and the sparse causal graph relating them can induce disentanglement, as long as technical conditions are satisfied, including a novel criterion on the ground-truth causal graph. A key distinction between other works that also learn a dependency graph over latent variables~\citep{Yang_2021_CVPR, yao2022learning} and ``disentanglement via mechanism sparsity" is that, in the latter, \textit{disentanglement is driven by sparsity regularization}, which allows to identify model classes which are usually not identifiable without this regularization. 

\vspace{-0.1cm}
\paragraph{Contributions:} In this work, we extend the theory of \textit{disentanglement via mechanism sparsity} introduced by~\citet{lachapelle2022disentanglement}. Instead of requiring a graphical criterion to guarantee complete disentanglement, our theory applies to arbitrary ground-truth graphs and specifies \textit{qualitatively} how disentangled the learned representation is expected to be, via a new equivalence relation over models we call \textit{consistency}~(Def.~\ref{def:consistent_models}). This equivalence relation captures which variables are expected to remain entangled and which are not, hence the term \textit{partial disentanglement}. This allows, for example, to precisely express the fact that we cannot typically identify the basis in which the position of an object is expressed, but can typically disentangle it from the other objects nonetheless. The graphical criterion of~\citet{lachapelle2022disentanglement}, which allows \textit{complete} disentanglement, can be derived as a special case of our theory. We also propose to enforce sparsity via \textit{constrained optimization} instead of regularization, following~\citet{gallego2021flexible}. We finally illustrate our theory in simulations.

Our contribution fits nicely into the framework of \citet{ahuja2022properties}, which shows how, in general, the equivariances of the transition mechanisms characterize how identifiable the representation is. \citet{CITRIS} and \citet{ahuja2022sparse} consider settings similar to ours (by interpreting actions as interventions), but the former assumes the intervention targets are known and the notion of \textit{sparse perturbation} of the latter is closer to~\citet{pmlr-v119-locatello20a}. Also, \citet{CITRIS},~\citet{vonkugelgen2021selfsupervised}~and~\citet{ahuja2022sparse} allow for a form of block disentanglement similar to our notion of partial disentanglement. We refer the reader to~\citet{lachapelle2022disentanglement} for a more extensive review of the recent literature on disentanglement and nonlinear ICA.


\vspace{-1mm}
\section{Background}
\subsection{A latent causal model}\label{sec:model}
This subsection is an almost exact transcription of the model exposition of~\citet{lachapelle2022disentanglement} which introduced it. 

We observe the realization of a sequence of $d_x$-dimensional random vectors $\{X^t\}_{t=1}^{T}$ and a sequence of $d_a$-dimensional auxiliary vectors~\citep{HyvarinenST19} $\{A^{t}\}_{t=0}^{T-1}$. The coordinates of $A^{t}$ are either discrete or continuous and can potentially represent, for example, an action taken by an agent, or a one-hot vector indexing which intervention the corresponding observation was taken from. From now on, we will refer to $A^t$ as the action vector. We assume the observations $\{X^t\}$ are generated from a sequence of latent $d_z$-dimensional continuous random vectors $\{Z^t\}_{t=1}^T$ via the equation $X^t = \bff(Z^t) + N^t$ where $N^t \sim \mathcal{N}(0, \sigma^2 I)$ are mutually independent across time and independent of all $Z^t$ and $A^{t}$. Throughout, we assume $d_z \leq d_x$ and that $\bff: \gZ \rightarrow \gX$ is a diffeomorphism where $\gZ$ is the support of $Z^t$ for all $t$, and $\gX := \bff(\gZ)$, i.e.\ the image of $\gZ$ under $\bff$. 
We suppose that each factor $Z_i^t$ represents interpretable information about the observation, e.g. for high-dimensional images, the coordinates $Z_i^t$ might be the position of an object, its color, or its orientation in space. We denote $Z^{\leq t} := [Z^1\ \cdots\ Z^t] \in \sR^{d_z \times t}$ and analogously for $Z^{< t}$ and other random vectors.

Following previous work on nonlinear ICA~\citep{HyvarinenST19, iVAEkhemakhem20a}, we assume 
\vspace{-2mm}
\begin{align}\label{eq:cond_indep}
    p(z^t \mid z^{<t}, a^{<t}) = \prod_{i = 1}^{d_z} p(z_i^t \mid z^{<t}, a^{<t}) \, ,
\end{align}
where each $p(z_i^t \mid z^{<t}, a^{<t})$ is in the \textit{exponential family}~\citep{WainwrightJordan08}, i.e. $p(z_i^t \mid z^{<t}, a^{<t}) \propto$
\begin{align} 
    h_i(z^t_i)\exp&\{\bfT_i(z^t_i)^\top  \bflambda_i(G_i^z \odot z^{<t}, G_i^a \odot a^{<t})\} \, . \label{eq:z_transition}
\end{align}
Note that this family includes many well-known distributions such as the Gaussian and beta distributions. In the Gaussian case, the \textit{sufficient statistic} is $\bfT_i(z) := (z, z^2)$ and the \textit{base measure} is $h_i(z) := \frac{1}{\sqrt{2\pi}}$. The function $\bflambda_i(G_i^z \odot z^{<t}, G_i^a \odot a^{<t})$ outputs the \textit{natural parameter} vector for the conditional distribution and can be itself parametrized, for instance, by a multi-layer perceptron (MLP) or a recurrent neural network (RNN). \citet{lachapelle2022disentanglement} refers to the functions $\bflambda_i$ as the \textit{mechanisms} or the \textit{transition functions}. In the Gaussian case, the natural parameter is two-dimensional and is related to the usual parameters $\mu$ and $\sigma^2$ via the equation $(\lambda_1, \lambda_2) = (\frac{\mu}{\sigma^2}, -\frac{1}{\sigma^2})$. We will denote by $k$ the dimensionality of the natural parameter and that of the sufficient statistic (which are equal). The binary vectors $G_i^z \in \{0,1\}^{d_z}$ and $G_i^a \in \{0,1\}^{d_a}$ act as masks selecting the direct parents of $z^t_i$. The Hadamard product $\odot$ is applied element-wise and broadcasted along the time dimension. Let $G^z := [G^z_1\ \cdots\ G^z_{d_z}]^\top \in \sR^{d_z \times d_z}$, $G^a := [G^a_1\ \cdots\ G^a_{d_a}]^\top \in \sR^{d_z \times d_a}$, $G:= [G^z\ G^a]$ which is the adjacency matrix of the causal graph. Indeed, \eqref{eq:cond_indep}~\&~\eqref{eq:z_transition} describes a \textit{causal graphical model} over the unobserved variables $Z^{\leq T}$ conditioned on the auxiliary variables $A^{<T}$.  

Let $\bflambda(z^{<t}, a^{<t}) \in \sR^{kd_z}$ be the concatenation of all $\bflambda_i(G_i^z \odot z^{<t}, G_i^{a} \odot a^{<t})$ and similarly for $\bfT(z^t) \in \sR^{kd_z}$. Note that $\bflambda(z^{<t}, a^{<t})$ depends on $G$, implicitly to simplify the notation.
 
The learnable parameters are $\theta := (\bff, \bflambda, G)$, which induce a conditional probability distribution $\sP_{X^{\leq T}\mid a; \theta}$ over $X^{\leq T}$, given $A^{<T}=a$. Let $\gA\subset \sR^{d_a}$ be the set of possible values $A^t$ can take. We assume $p(a^{<T})$ has probability mass over all $\gA^T$. This could arise, for instance, when $A^t$ is sampled from a policy $\pi(a^t \mid z^t)$ distribution with probability mass everywhere in $\gA$.

\subsection{Model equivalence and complete disentanglement}\label{sec:rep_equ}
Given how expressive the model of Sec.~\ref{sec:model} is, there is no hope of fully identifying the model from observations. Fortunately, we will see that it is unnecessary to do so to maintain interpretability. We now recall notions of model equivalence from~\citet{iVAEkhemakhem20a}~\&~\citet{lachapelle2022disentanglement}. In what follows, we overload the notation by defining ${\bff^{-1}(z^{<t}) := [\bff^{-1}(z^{1})\ \cdots\ \bff^{-1}(z^{t-1})]}$.
\begin{definition}[Linear equivalence] \label{def:linear_eq} Let $\gX := \bff(\gZ)$ and $\tilde{\gX} := \tilde{\bff}(\gZ)$, i.e., the image of the support of $Z^t$ under $\bff$ and $\tilde{\bff}$, respectively. We say $\theta$ is \textbf{linearly equivalent} to $\tilde{\theta}$, denoted $\theta \eqlin \tilde\theta$, if and only if $\mathcal{X} = \tilde{\mathcal{X}}$ and there exists an invertible matrix $L \in \sR^{kd_z \times kd_z}$ and vectors $b,c \in \sR^{kd_z}$ such that
\begin{enumerate}
\item for all $x \in \mathcal{X}$, 
$$\bfT(\bff^{-1}(x)) = L\bfT(\tilde{\bff}^{-1}(x)) + b$$
\item and, for all $t \in \{1, ..., T\}, x^{<t} \in \gX^{t-1}, a^{<t} \in \gA^t$, 
$$ L^\top \bflambda(\bff^{-1}(x^{<t}), a^{<t}) + c = \tilde{\bflambda}(\tilde{\bff}^{-1}(x^{<t}), a^{<t})\, .$$
\end{enumerate}
\end{definition}
To interpret this definition, we consider the special case where $p(z^t \mid z^{<t}, a^{<t})$ follows a Gaussian distribution with variance fixed to one. In that case, $\bfT(z) := z$ and $\bflambda$ outputs the usual mean parameter $\mu$ (here, $k=1$), and thus, the first condition above requires that one can go from the representation~$\tilde\bff^{-1}(x)$ to the other representation $\bff^{-1}(x)$ via an invertible affine transformation. The second condition on $\bflambda$ and $\tilde\bflambda$ is analogous.

To make sure the latent factors of two different models can be interpreted in the same way, we need something stronger than linear equivalence, since the matrix $L$ can still ``mix up'' different latent factors. The following equivalence relation, adapted from~\citet{lachapelle2022disentanglement}, does not allow for mixing. Here we assume $k=1$ to lighten the notation.

\begin{definition}[``Up to permutation'' equivalence, $k=1$]\label{def:perm_eq}
We say two models $\theta := (\bff, \bflambda, G)$ and $\tilde{\theta}:= (\tilde \bff,\tilde \bflambda,\tilde G)$ are \textbf{equivalent up to permutation}, denoted $\theta \eqperm \tilde\theta$, if and only if there exists a permutation matrix $P$ such that
\begin{enumerate}
    \item $G^z =  P^\top \tilde G^z P$ and $G^a = P^\top \tilde G^a$\, , and
    \item $\theta \eqlin \tilde{\theta}$ (Def.~\ref{def:linear_eq}) with $L = DP^\top$, where the matrix $D$ is invertible and diagonal.
\end{enumerate}
\end{definition}
Coming back to the Gaussian case with a fixed variance, equivalence up to permutation means that there exists a permutation $\pi$ such that each coordinate~$i$ of one representation is equal to the scaled and shifted coordinate $\pi(i)$ of the other. \citet{lachapelle2022disentanglement} defines \textit{disentanglement} as follows (we specify ``complete'' to contrast with ``partial'' later on).

\begin{definition}[Complete disentanglement]\label{def:disentanglement}
Given a ground-truth model $\theta$, we say a learned model $\hat{\theta}$ is \textbf{completely disentangled} when $\theta \eqperm \hat{\theta}$.
\end{definition}

We will see later how complete disentanglement can be relaxed to something which falls between linear equivalence and permutation equivalence.

\subsection{Linear identifiability}\label{sec:linear}
Starting now, the reader should think of $\theta$ as the \textit{ground-truth parameter} and $\hat{\theta}$ as a \textit{learned parameter}. The following theorem is an adaptation and minor extension of Thm.~1 from~\cite{iVAEkhemakhem20a} by~\citet{lachapelle2022disentanglement}. A proof can be found in the latter.
\begin{theorem}[Conditions for linear identifiability - \citet{iVAEkhemakhem20a, lachapelle2022disentanglement}]\label{thm:linear}
Suppose we have two models as described in Sec.~\ref{sec:model} with parameters ${\theta = (\bff, \bflambda, G)}$ and $\hat{\theta} =( \hat{\bff}, \hat{\bflambda}, \hat{G})$ for a fixed sequence length $T$. Suppose the following assumptions hold:
\begin{enumerate}
    \item For all $i \in \{1, ..., d_z\}$, the sufficient statistic $\bfT_i$ is minimal (Def.~\ref{def:minimal_statistic}).
    \item \textbf{[Sufficient variability]} There exist $(z_{(p)}, a_{(p)})_{p=0}^{kd_z}$ in their respective supports such that the $kd_z$-dimensional vectors ${(\bflambda(z_{(p)}, a_{(p)}) - \bflambda(z_{(0)}, a_{(0)})})_{p=1}^{kd_z}$ are linearly independent.
\end{enumerate}
Then, we have linear identifiability: $\mathbb{P}_{X^{\leq T} \mid a; \theta} = \mathbb{P}_{X^{\leq T} \mid a;\hat{\theta}}$ for all $a \in \gA^T$ implies $\theta \eqlin \hat{\theta}$.
\end{theorem}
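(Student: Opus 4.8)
The plan is to follow the three-step template of the identifiability proof in~\citet{iVAEkhemakhem20a}: deconvolve the noise, change variables and substitute the exponential-family form, then run a linear-algebra argument powered by minimality and sufficient variability. First I would strip off the Gaussian noise. Writing $X^{\leq T}\mid a = (\bff(Z^1),\dots,\bff(Z^T)) + (N^1,\dots,N^T)$ with $(N^1,\dots,N^T)\sim\mathcal{N}(\vzero,\sigma^2 I)$ independent of the first term, $\mathbb{P}_{X^{\leq T}\mid a;\theta}$ is the convolution on $\sR^{d_x T}$ of this fixed Gaussian with $\mu^a_\theta$, the law of $(\bff(Z^1),\dots,\bff(Z^T))$ given $A^{<T}=a$ (supported on $\gX^T$), and similarly for $\hat\theta$. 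Since a Gaussian has nowhere-vanishing characteristic function, the hypothesis $\mathbb{P}_{X^{\leq T}\mid a;\theta}=\mathbb{P}_{X^{\leq T}\mid a;\hat\theta}$ forces $\mu^a_\theta=\mu^a_{\hat\theta}$ for every $a$; comparing supports gives $\gX=\tilde\gX$, the first requirement of Def.~\ref{def:linear_eq}.

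Because $\bff,\hat\bff$ are diffeomorphisms onto $\gX$, the common measure $\mu^a_\theta=\mu^a_{\hat\theta}$ has a density w.r.t.\ the $d_z T$-dimensional Hausdorff measure on $\gX^T$; pulling back by $\bff^{-1}$ and $\hat\bff^{-1}$, writing $v:=\hat\bff^{-1}\circ\bff$ (a diffeomorphism of $\gZ$), taking logarithms and substituting~\eqref{eq:cond_indep}--\eqref{eq:z_transition}, I get an identity valid for all $z^{\leq T}\in\gZ^T$, $a^{<T}\in\gA^T$:
\[
\sum_{t=1}^{T}\Big[\bfT(z^t)^\top\bflambda(z^{<t},a^{<t}) + g(z^t) - \psi(z^{<t},a^{<t})\Big]
= \sum_{t=1}^{T}\Big[\bfT(v(z^t))^\top\hat\bflambda(v(z^{<t}),a^{<t}) + \hat g(v(z^t)) - \hat\psi(v(z^{<t}),a^{<t})\Big],
\]
where $g$ (resp.\ $\hat g$) gathers the base measures $\sum_i\log h_i$ and the change-of-variables Jacobian factor of $\bff$ (resp.\ $\hat\bff$) --- neither depends on the actions --- and $\psi,\hat\psi$ are the log-normalizers; note that $\bfT_i$ and $h_i$ are the \emph{same} fixed functions for both models, since only $(\bff,\bflambda,G)$ is learned.

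Next I would freeze the history $(z^{<T},a^{<T})$ at each of the $kd_z+1$ points $(z_{(p)},a_{(p)})$ supplied by the sufficient-variability assumption while letting the last latent $z^T$ range over $\gZ$ (legitimate because the transition densities are positive on $\gZ$, so any history co-occurs with any current value), and subtract the $p=0$ equation from the $p$-th one: the terms $g(z^T)$ and $\hat g(v(z^T))$ are common to both sides and cancel, and stacking $p=1,\dots,kd_z$ yields $M^\top\bfT(z^T)=\hat M^\top\bfT(v(z^T))+w$ for a constant vector $w$, with $M$ the matrix of increments $\bflambda(z_{(p)},a_{(p)})-\bflambda(z_{(0)},a_{(0)})$ and $\hat M$ the corresponding $\hat\bflambda$-increments. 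By the sufficient-variability assumption $M$ is invertible, so $\bfT(\bff^{-1}(x))=L\,\bfT(\hat\bff^{-1}(x))+b$ on $\gX$ with $L:=(\hat M M^{-1})^\top$ --- this is condition~1 of Def.~\ref{def:linear_eq}. That $L$ is invertible uses minimality: if $\{\bfT(z):z\in\gZ\}$ lay in a proper affine subspace of $\sR^{kd_z}$, then fixing all coordinates of $z$ but one would make some $\bfT_i$ affinely dependent on the constant $1$, contradicting minimality of $\bfT_i$; hence the increments $\bfT(z)-\bfT(z')$ span $\sR^{kd_z}$, and as they equal $L\big(\bfT(v(z))-\bfT(v(z'))\big)$, $L$ is surjective, hence invertible.

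Finally, substituting $\bfT=L\,\bfT\circ v+b$ back into the displayed identity and rearranging puts it in the form $\sum_{t=1}^{T}\bfT(v(z^t))^\top\Phi(z^{<t},a^{<t}) = R(z^{\leq T},a^{<T})$, where $\Phi(z^{<t},a^{<t}):=L^\top\bflambda(z^{<t},a^{<t})-\hat\bflambda(v(z^{<t}),a^{<t})$ and $R$ has no $\bfT(v(z^t))$ factor. Since the last latent $z^T$ enters only the $t=T$ summand on the left (latents influence mechanisms only at strictly later steps) and enters $R$ only through the action-free term $\hat g(v(z^T))-g(z^T)$, differencing over $a^{T-1}$ removes that term, and then varying $z^T$ --- using the affine-span property again --- forces $\Phi(z^{<T},\cdot)$ to be independent of $a^{T-1}$; iterating over $a^{T-2},\dots,a^0$ and then over the coordinates of $z^{<T}$ shows $\Phi(z^{<T},a^{<T})$ equals a constant $-c$, i.e.\ $L^\top\bflambda(\bff^{-1}(x^{<T}),a^{<T})+c=\hat\bflambda(\hat\bff^{-1}(x^{<T}),a^{<T})$, and the shorter histories ($t<T$) follow by induction on the sequence length, since marginalizing the $X$-identity over $x^T$ reproduces the same hypothesis with $T$ replaced by $T-1$. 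Together with $\gX=\tilde\gX$ this gives condition~2, hence $\theta\eqlin\hat\theta$. I expect this last step --- the time-series bookkeeping --- to be the main obstacle: because each mechanism sees the whole past, isolating one time slice forces one to track carefully which terms depend on the current latent, on past latents, or on the actions, and to check that $L$ and $v$ are consistent across induction levels; this is exactly what turns the i.i.d.\ argument of~\citet{iVAEkhemakhem20a} into the present one. The deconvolution is a secondary technicality when $d_z<d_x$, since the pushforwards sit on a Lebesgue-null submanifold and one must justify the densities and promote the a.e.\ identity to an everywhere identity using smoothness of $\bff,\hat\bff,\bflambda,\hat\bflambda$.
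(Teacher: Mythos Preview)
Your proposal is correct and follows the same three-stage template as the proof in \citet{lachapelle2022disentanglement} (which the paper cites for its proof): Fourier deconvolution of the Gaussian noise, change of variables plus substitution of the exponential-family densities, then a linear-algebra step driven by sufficient variability (for the relation on $\bfT$) and minimality (for invertibility of $L$ and, later, for extracting the constant $c$). The one structural difference worth noting is exactly where you anticipated difficulty. You work with the full product identity $\prod_{t=1}^T p(z^t\mid z^{<t},a^{<t}) = \prod_{t=1}^T \hat p(v(z^t)\mid v(z^{<t}),a^{<t})\,|\det Dv(z^t)|$ and then peel off time slices by differencing over actions and inducting on $T$; the paper instead \emph{first} reduces to the single-step identity $p(z^{t_0}\mid z^{<t_0},a^{<t_0}) = \hat p(v(z^{t_0})\mid v(z^{<t_0}),a^{<t_0})\,|\det Dv(z^{t_0})|$ for every $t_0$, obtained by integrating the product identity successively over $z^T,\dots,z^{t_0+1}$ and dividing the identity at $t_0$ by the one at $t_0-1$. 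This one trick dissolves what you flagged as ``the main obstacle'': with the single-step identity in hand, the $z^t$-dependent terms and the $(z^{<t},a^{<t})$-dependent terms separate cleanly, so one can take differences directly (even across variability points $(z_{(p)},a_{(p)})$ coming from \emph{different} time steps, which your formulation does not obviously accommodate), and the $\bflambda$ relation falls out by the same minimality-based invertibility argument you used for $L$, with no induction on $T$ and no need to reconcile $L,v$ across levels.
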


The most important assumption is sufficient variability, which states that the ground-truth transition function $\lambda$ should be ``sufficiently complex''.

\section{Partial Disentanglement via Mechanism Sparsity}
\label{sec:main}
\subsection{Partial disentanglement and consistent models}
We now give a very simple definition of partial disentanglement, as something which lives strictly between linear equivalence and equivalence up to permutation:

\begin{definition}[Partial disentanglement]\label{def:partial_disentanglement}
Given a ground-truth model $\theta$, we say a learned model $\hat\theta$ is \textbf{partially disentangled} when $\theta \eqlin \hat\theta$ with $L$ having at least one zero component and $\theta \not\eqperm \hat\theta$.
\end{definition}
This definition of partial disentanglement ranges from models that are almost completely entangled, i.e. those with a very dense $L$, to ones that are very close to being completely disentangled, i.e. those with a very sparse $L$. Where a learned model falls on this continuum will depend on the ground-truth graph $G$ underlying the data generating process. To specify precisely where the zero entries of $L$ will be, we will introduce a new equivalence relation over models we call \textit{consistency}. In order to do so, we first need to define the property of \textit{$S$-consistency} for matrices.
\begin{definition}[$S$-consistency]\label{def:S-consistent}
Given a binary matrix $S \in \{0,1\}^{m \times n}$, a matrix $C \in \sR^{m \times m}$ is \textbf{$S$-consistent} when 
\begin{align}
    \forall i,j,\ [\mathbbm{1} - S(\mathbbm{1} - S)^\top]^+_{i,j} = 0 \implies C_{i,j} = 0\, , \label{eq:S_consistency}
\end{align}
where $[\cdot]^+ := \max\{0, \cdot\}$ and $\mathbbm{1}$ is a matrix filled with ones (assuming implicitly its correct size).
\end{definition}

We will interpret this definition later on in Sec.~\ref{sec:example}. For now, it is enough to understand that an $S$-consistent matrix has zeros where the binary matrix $[\mathbbm{1} - S(\mathbbm{1} - S)^\top]^+$ has zeros. We can now define the novel \textit{consistency equivalence relation} over models:

\begin{definition}[Consistency equivalence, $k=1$]\label{def:consistent_models}
We say two models $\theta := (\bff, \bflambda, G)$ and $\tilde{\theta}:= (\tilde \bff,\tilde \bflambda,\tilde G)$ are \textbf{consistent}, denoted $\theta \eqcon \tilde\theta$, if and only if there exists a permutation matrix $P$ such that
\begin{enumerate}
    \item $G^z =  P^\top \tilde G^z P$ and $G^a = P^\top \tilde G^a$\, , and
    \item $\theta \eqlin \tilde{\theta}$ (Def.~\ref{def:linear_eq}) with $L = CP^\top$, where the matrix $C$ is $G^z\text{-consistent}$, $(G^z)^\top$-consistent and $G^a$-consistent (Def.~\ref{def:S-consistent}).
\end{enumerate}
\end{definition}

We demonstrate in App.~\ref{sec:proof_consistence_equivalence} that the consistency relation over models is indeed an equivalence relation, as claimed in the the above definition. This follows from the perhaps surprising fact that the set of invertible $S$-consistent matrices forms a \textit{group} under matrix multiplication (see Thm.~\ref{thm:group_of_S_consistent}). 

The equivalence $\eqperm$ is stronger than $\eqcon$, since a diagonal matrix is always $S$-consistent, for any $S$. To see this, notice that $[\mathbbm{1} - S(\mathbbm{1} - S)^\top]^+_{i,i} = 1$ for all $S$ and $i$. 



\subsection{Identifying the equivalence class of consistent models}\label{sec:sparsity_implies_disentanglement}

We now present the main theorem of this work which can be seen as a generalization of Thm.~5 from \citet{lachapelle2022disentanglement}. It states that, under some conditions, a perfectly fitted and maximally sparse model $\hat \theta$ will be \textit{consistent} to the ground-truth distribution $\theta$, i.e. $\theta \eqcon \hat\theta$ (Def.~\ref{def:consistent_models}). It means we know \textit{qualitatively} how disentangled the learned representation is expected to be, based on the graph $G$. See App.~\ref{sec:combine_thm2_thm3} for a proof.

\begin{theorem}[Disentanglement via mechanism sparsity]\label{thm:combined}
Suppose we have two models as described in Sec.~\ref{sec:model} with parameters $\theta = (\bff, \bflambda, G)$ and $\hat{\theta} = (\hat{\bff}, \hat{\bflambda}, \hat{G})$ representing the same distribution, i.e. $\sP_{X^{\leq T} \mid a ; \theta} = \sP_{X^{\leq T} \mid a ; \hat{\theta}}$ for all $a\in \gA^T$. Suppose the assumptions of Thm.~\ref{thm:linear} hold and that, 
\begin{enumerate}
    \item The sufficient statistic $\bfT$ is $d_z$-dimensional ($k=1$) and is a diffeomorphism from $\gZ$ to $\bfT(\gZ)$. \label{ass:diffeomorphism}
    \item \textbf{[Sufficient time-variability]} The Jacobian of the ground-truth transition function~$\bflambda$ with respect to $z$ varies ``sufficiently'', as formalized in App.~\ref{sec:combine_thm2_thm3}.
    \item \textbf{[Sufficient action-variability]} The ground-truth transition function~$\bflambda$ is affected ``sufficiently strongly'' by each individual action $a_\ell$, as formalized in App.~\ref{sec:combine_thm2_thm3}.
    \item \textbf{[Sparsity]} $||\hat{G}||_0 \leq ||G||_0$. \label{ass:combined_sparse}
\end{enumerate}
Then, $\hat \theta$ is consistent with $\theta$, i.e. $\theta \eqcon \hat\theta$ (Def.~\ref{def:consistent_models}).
\end{theorem}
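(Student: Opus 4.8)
The plan is to combine Thm.~\ref{thm:linear} with two differential arguments that convert the sparsity budget on $\hat G$ into structural constraints on the linear map relating the two latent spaces: one argument driven by sufficient time-variability (constraining the map relative to $G^z$) and one driven by sufficient action-variability (constraining it relative to $G^a$).

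First I would invoke Thm.~\ref{thm:linear}: since its assumptions hold and $\sP_{X^{\leq T}\mid a;\theta} = \sP_{X^{\leq T}\mid a;\hat\theta}$ for all $a \in \gA^T$, we get $\theta \eqlin \hat\theta$, i.e.\ an invertible $L \in \sR^{d_z\times d_z}$ (here $k=1$) and $b,c \in \sR^{d_z}$ with $\bfT(\bff^{-1}(x)) = L\,\bfT(\hat\bff^{-1}(x)) + b$ on $\gX$ and $L^\top \bflambda(\bff^{-1}(x^{<t}),a^{<t}) + c = \hat\bflambda(\hat\bff^{-1}(x^{<t}),a^{<t})$ for all admissible arguments. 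Using item~\ref{ass:diffeomorphism} (that $\bfT$ is a diffeomorphism onto its image, and note it acts coordinate-wise, so it preserves coordinate sparsity patterns), the change of variables $v := \bfT(\bff^{-1}(x))$, $\hat v := \bfT(\hat\bff^{-1}(x))$ is a legitimate reparametrization of the latents with $v = L\hat v + b$, and the mechanism identity becomes a relation between the two transition functions expressed in these coordinates. What remains is to produce a permutation matrix $P$ with $G^z = P^\top \hat G^z P$ and $G^a = P^\top \hat G^a$, and to show that $C := LP$ is $G^z$-, $(G^z)^\top$- and $G^a$-consistent (Def.~\ref{def:S-consistent}); the fact that invertible $S$-consistent matrices form a group (Thm.~\ref{thm:group_of_S_consistent}) is what makes this a coherent notion and ensures $\eqcon$ is a genuine equivalence.

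Next I would differentiate the mechanism identity. Differentiating $L^\top \bflambda + c = \hat\bflambda$ with respect to the past latents $\hat v^{<t}$ yields, at every reachable base point, an identity of the form $\widehat{\mathrm{Jac}}_z = L^\top\,\mathrm{Jac}_z\,L$, where $\mathrm{Jac}_z$ is the Jacobian of $\bflambda$ with respect to $z^{<t}$ (aggregated over time steps), hence supported on $G^z$, and $\widehat{\mathrm{Jac}}_z$ is supported on $\hat G^z$; similarly, differentiating with respect to $a^{<t}$ gives $\widehat{\mathrm{Jac}}_a = L^\top\,\mathrm{Jac}_a$ with $\mathrm{Jac}_a$ supported on $G^a$ and $\widehat{\mathrm{Jac}}_a$ supported on $\hat G^a$. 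The sufficient time-variability and sufficient action-variability assumptions are precisely what guarantees that the families $\{\mathrm{Jac}_z\}$ and $\{\mathrm{Jac}_a\}$, as the base points vary, are rich enough to \emph{realize} the full patterns $G^z$ and $G^a$ — no cancellation can mask a structurally nonzero entry. Feeding this richness into ``$L^\top \mathrm{Jac}_z\, L$ is supported on $\hat G^z$'' forces, for each $(i',j')$ with $\hat G^z_{i'j'}=0$, that $L_{i i'} L_{j j'} = 0$ whenever $G^z_{ij}=1$; running the same argument on the transposed relation $L^{-\top}\widehat{\mathrm{Jac}}_z L^{-1} = \mathrm{Jac}_z$ and on the action relation $\widehat{\mathrm{Jac}}_a = L^\top \mathrm{Jac}_a$ produces the analogous families of zero constraints on $L$ and $L^{-1}$. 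This is the step that localizes where the zeros of $L$ must lie.

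The main obstacle — and the bulk of the appendix work — is the combinatorial passage from these ``$L$ has many forced zeros'' statements plus the sparsity inequality (item~\ref{ass:combined_sparse}), $\|\hat G\|_0 \le \|G\|_0$, to the two clean conclusions: (i) that a single permutation $P$ simultaneously aligns $G^z$ with $\hat G^z$ and $G^a$ with $\hat G^a$; and (ii) that the residual $C = LP$ is $S$-consistent for each $S \in \{G^z,(G^z)^\top,G^a\}$, i.e.\ $C_{ij}\neq 0 \Rightarrow [\mathbbm{1} - S(\mathbbm{1}-S)^\top]^+_{ij}\neq 0$. One must argue the sparsity budget is \emph{tight} enough that the support of $\hat G$ cannot strictly shrink along any ``mixing'' direction — this is what both forces a bijective realignment (yielding $P$) and leaves exactly the $S$-consistency freedom rather than something looser. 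Concretely I would try to show the joint budget forces $\|\hat G^z\|_0 \le \|G^z\|_0$ and $\|\hat G^a\|_0 \le \|G^a\|_0$ separately under the realignment, handle the $z$-part and the $a$-part in isolation (recovering a time-only and an action-only intermediate result), and then reconcile the permutations the two produce into one $P$. Once $P$ is fixed, collecting the forced zeros of $L$ from all three sources and comparing entry-by-entry with the matrix $[\mathbbm{1} - S(\mathbbm{1}-S)^\top]^+$ yields $S$-consistency of $C$, hence $\theta \eqcon \hat\theta$; the graphical criterion of \citet{lachapelle2022disentanglement} is recovered exactly when those three patterns all collapse to the identity, forcing $C$ diagonal and thus $\theta \eqperm \hat\theta$.
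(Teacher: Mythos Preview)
Your overall strategy --- invoke Thm.~\ref{thm:linear}, differentiate the mechanism identity to get $L^\top(\text{Jacobian w.r.t.\ }z)L$ supported on $\hat G^z$ and $L^\top(\text{difference w.r.t.\ }a)$ supported on $\hat G^a$, then use sufficient variability to make the patterns rigid --- matches the paper's. The gap is in how you propose to extract the permutation and split the sparsity budget.

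You plan to ``handle the $z$-part and the $a$-part in isolation \ldots\ and then reconcile the permutations the two produce into one $P$,'' after first showing the joint budget $\|\hat G\|_0 \le \|G\|_0$ splits into $\|\hat G^z\|_0 \le \|G^z\|_0$ and $\|\hat G^a\|_0 \le \|G^a\|_0$. Neither step is justified as stated: the budget does not split a priori (edges could trade off between $\hat G^z$ and $\hat G^a$), and nothing in your outline forces the two separately-obtained permutations to coincide. The paper sidesteps both issues with a single observation: because $L$ is invertible, the Leibniz determinant formula gives a permutation $\sigma$ with $L_{i,\sigma(i)}\neq 0$ for all $i$ (Lemma~\ref{lemma:L_perm}). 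This $P$ is fixed \emph{before} looking at any graph, so the same $P$ is used in both Lemma~\ref{lemma:thm3} (the $L^\top\Lambda L$ case) and Lemma~\ref{lemma:thm2} (the $L^\top\Lambda$ case). With this common $P$ one gets $G^z \subset P^\top\hat G^z P$ and $G^a \subset P^\top\hat G^a$ simultaneously, and only then does the paper run a single chain of inequalities $\|\hat S^{(1)}\|_0 + \|\hat S^{(2)}\|_0 \le \|\hat G\|_0 \le \|G\|_0 = \|S^{(1)}\|_0 + \|S^{(2)}\|_0 \le \|\hat S^{(1)}\|_0 + \|\hat S^{(2)}\|_0$ that collapses to equalities everywhere, yielding both graph equalities and both separate sparsity budgets as \emph{consequences}, not hypotheses.

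A second, smaller issue: your route to $(G^z)^\top$-consistency via the ``transposed relation $L^{-\top}\widehat{\mathrm{Jac}}_z L^{-1} = \mathrm{Jac}_z$'' would need sufficient variability of $\hat\bflambda$, which is not assumed. The paper instead extracts both $G^z$- and $(G^z)^\top$-consistency from the single relation $(L_{i,\cdot})^\top L_{j,\cdot} \in \sR^{m\times m}_{\hat S}$ for all $(i,j)\in S$: since $L_{j,\sigma(j)}\neq 0$, fixing $j$ constrains the $i$th row of $L$ (giving $S$-consistency), while since $L_{i,\sigma(i)}\neq 0$, fixing $i$ constrains the $j$th row (giving $S^\top$-consistency). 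Both come from the variability of $\bflambda$ alone.
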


The conclusion that $\theta \eqcon \hat\theta$ means that the learned graph $\hat G$ is a permutation of the ground-truth graph $G$ and that the learned representation is either completely entangled, partially disentangled or completely disentangled, depending on the ground-truth graph $G$, as formalized by Def.~\ref{def:consistent_models}.

The \textbf{first assumption} is satisfied for example by the Gaussian case with variance fixed to one since $\bfT(z) = z$ is a diffeomorphism. Rigorous statements of the two \textbf{sufficient variability assumptions}, initially introduced by~\citet{lachapelle2022disentanglement}, are relayed to App.~\ref{sec:combine_thm2_thm3}. Intuitively, they both require that the ground-truth transition function~$\bflambda$ is complex enough. We note that these sufficient variability assumptions play a role similar to the usual \textit{faithfulness} assumption in causal discovery~\citep[Section 6.5]{peters2017elements}. See App~\ref{app:suff_var} for more. The \textbf{sparsity assumption} requires that the learned graph is at least as sparse as the ground-truth graph. In Sec.~\ref{sec:estimation}, we suggest achieving this by enforcing a sparsity constraint on $\hat{G}$.

\textbf{The graphical criterion of~\citet{lachapelle2022disentanglement}.}  Thm.~\ref{thm:combined} can be seen as a generalization of Thm.~5 from \citet{lachapelle2022disentanglement}. The latter requires that the ground-truth graph $G$ satisfies this criterion:\footnote{This graphical criterion is a slight simplification of the one of \citet{lachapelle2022disentanglement}. Prop.~\ref{prop:graph_crit_simplified} shows they are equivalent.} $\forall1 \leq i \leq d_z$,
\begin{align}
    \left( \bigcap_{j \in {\bf Ch}_i^z} {\bf Pa}^z_j \right) \cap \left(\bigcap_{j \in {\bf Pa}_i^z} {\bf Ch}^z_j \right) \cap \left(\bigcap_{\ell \in {\bf Pa}^a_i} {\bf Ch}^a_\ell \right)  = \{i\} \, , \nonumber
\end{align}
where ${\bf Pa}^z_i$ and ${\bf Ch}^z_i$ are the sets of parents and children of node $z_i$ in $G^{z}$, respectively, while ${\bf Ch}^a_\ell$ is the set of children of $a_\ell$ in $G^a$. This assumption allows~\citet{lachapelle2022disentanglement} to identify $\theta$ up to $\eqperm$ (complete disentanglement) instead of up to $\eqcon$ (possibly partial disentanglement). It turns out that, when $G$ satisfies the above criterion, the set of models that are $\eqcon$-equivalent to $\theta$ is equal to the set of models that are $\eqperm$-equivalent to $\theta$. Therefore, applying Thm.~\ref{thm:combined} to a ground-truth model that satisfies the graphical criterion will guarantee complete disentanglement~(see Prop.~\ref{prop:complete_disentanglement}).

\subsubsection{An example \& interpretation}\label{sec:example}
\begin{figure*}[t]
\begin{minipage}{0.5\linewidth}
\centering
\includegraphics[width=0.65\linewidth]{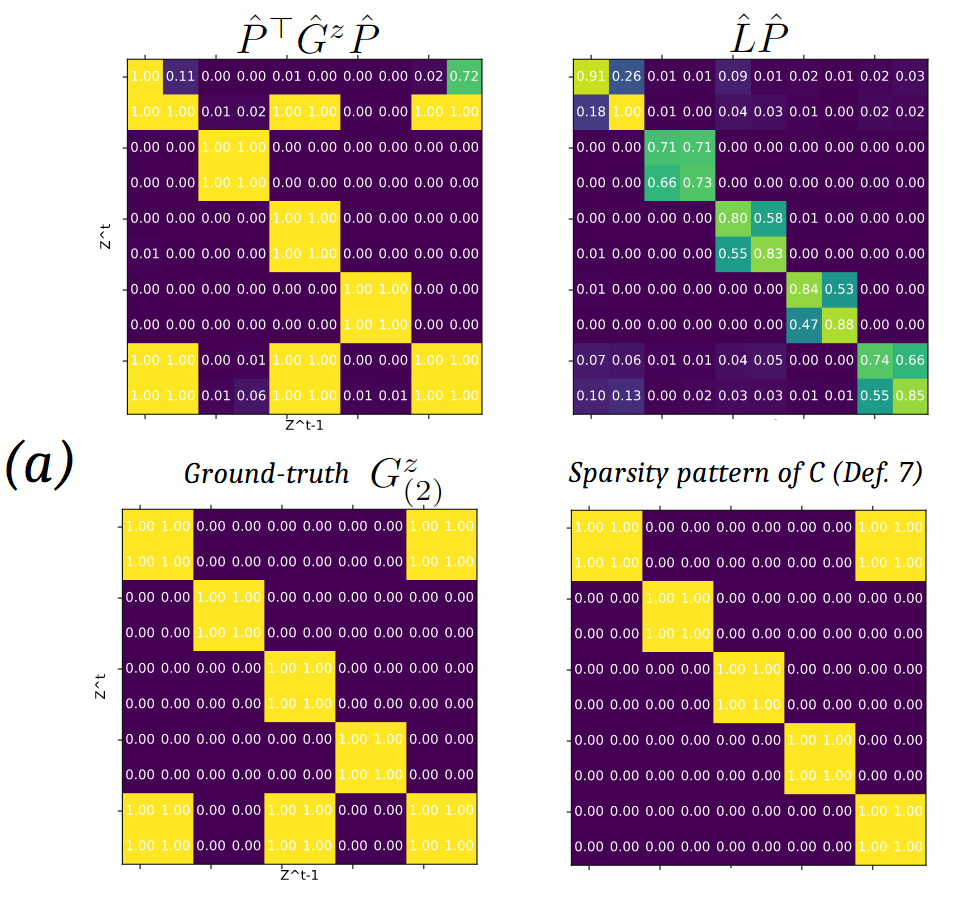}
\end{minipage}
\begin{minipage}{0.5\linewidth}
\centering
\includegraphics[width=0.6\linewidth]{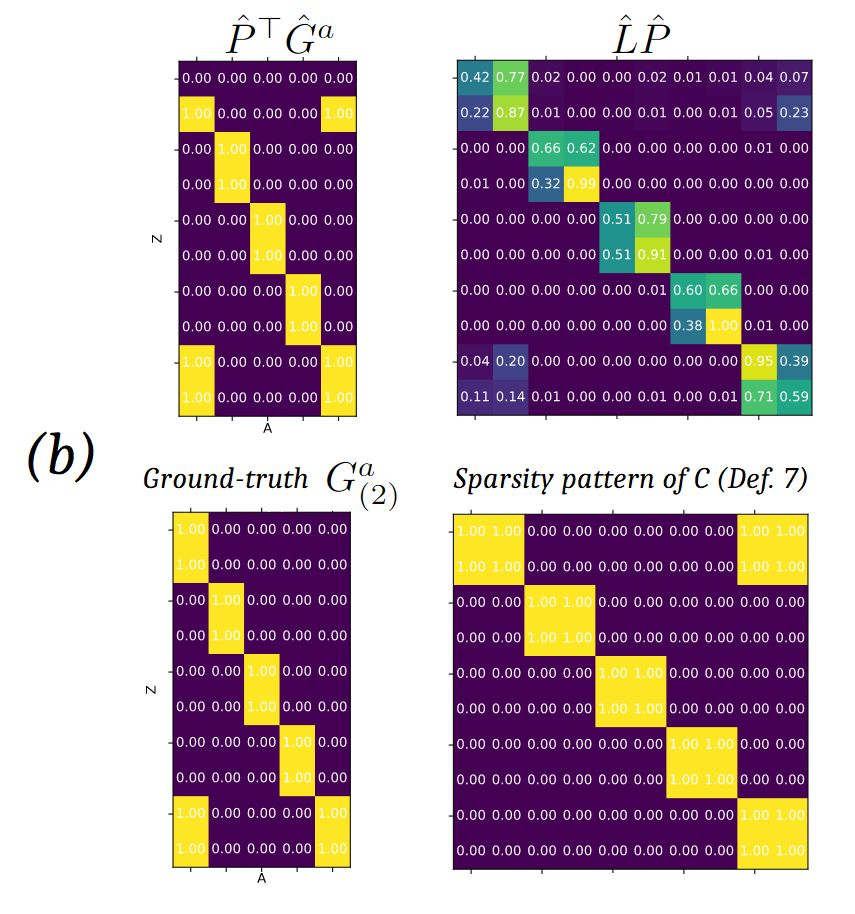}
\end{minipage}
\caption{Typical runs on the dataset with temporal dependence \textbf{(a)} and the dataset with actions \textbf{(b)}. For both figures: \textbf{Top left:} learned graph permuted by $\hat P$ (the permutation found by MCC). \textbf{Bottom left:} the ground-truth graph. \textbf{Top right:} the matrix of coefficients estimated for $R$, permuted by $\hat P$. \textbf{Bottom right:} Expected sparsity pattern of $\hat L \hat P$, according to Thm.~\ref{thm:combined}.}
\label{fig:viz}
\end{figure*}

We now attempt to build intuition about the equivalence $\eqcon$~(Def.~\ref{def:consistent_models}) and Thm.~\ref{thm:combined} by considering an example where the ground-truth $G$ is given by $G^z = \bf 0$ (no temporal dependencies) and $G^a$ is given by the bottom left of Fig.~\ref{fig:viz}b. In that case, what does it mean for a model $\hat \theta$ to be consistent with the ground-truth $\theta$? Following Def.~\ref{def:consistent_models}, we first have that the learned graph $\hat G$ is the same as $G$, up to a permutation. Secondly, we have that their representations are linked via a linear transformation $L = CP^\top$ where $C$ is $G^z\text{-consistent}$, $(G^z)^\top$-consistent and $G^a$-consistent (Def.~\ref{def:S-consistent}). Since $G^z = \bf 0$, the first two consistency properties are vacuous, i.e. they do not impose anything on $C$. However, $G^a$-consistency forces $C$ to have the same zeros as the binary matrix ${[\mathbbm{1} - G^a(\mathbbm{1} - G^a)^\top]^+}$. This binary matrix is represented at the bottom right of Fig.~\ref{fig:viz}b and captures qualitatively how disentangled the learned representation is expected to be (by~Thm.~\ref{thm:combined}). 
What does Thm.~\ref{thm:combined} mean in this context? Assuming the permutation $P$ from Def.~\ref{def:consistent_models} is the identity for simplicity, App.~\ref{sec:interpret} derives the following interpretation: \textit{the ground-truth factor $z_i$ is not a function of the learned factor $\hat z_j$ ($C_{i,j} = 0$) whenever there exists an action $a_\ell$ that targets $z_i$, but not $z_j$}.

A similar exercise can be done with different graphs $G$. For instance, consider the case where $G^a = \bf 0$ (no action) and $G^z$ is given by the bottom left of Fig.~\ref{fig:viz}a. In that case, $C$ will have the same zeros as the bottom right of Fig.~\ref{fig:viz}a.

\vspace{-1mm}
\subsection{Sparse model estimation} \label{sec:estimation}
In order to estimate from data the model presented in previous sections, we use almost the same approach as~\citet{lachapelle2022disentanglement}, except for how sparsity is encouraged.

To estimate the various parameters of the model, we use the well-known framework of variational autoencoders (VAEs) \citep{Kingma2014} in which the decoder neural network corresponds to the mixing function $\bff$. We consider the same approximate posterior as~\citet{lachapelle2022disentanglement}, that is $q(z^{\leq T} \mid x^{\leq T}, a^{<T}) := \prod_{t=1}^{T}q(z^t \mid x^t)$, where $q(z^t \mid x^t)$ is a Gaussian distribution with mean and diagonal covariance outputted by a neural network $\texttt{encoder}(x^t)$. In our experiments, the transition functions $\bflambda_i$ are parameterized by fully connected neural networks that look only at a fixed window of $s$ lagged latent variables. In all experiments, $\hat{p}(z_i^t \mid z^{<t}, a^{<t})$ is Gaussian with a learned variance that does not depend on $(z^{<t}, a^{<t})$ (see App.~\ref{sec:ours_details} for details). This variational inference model induces the following evidence lower bound (ELBO) on $\log \hat{p}(x^{\leq T}|a^{<T})$:
\begin{align}
    \sum_{t=1}^T\mathop{\sE}_{Z^t \sim q(\cdot| x^t)} [\log \hat{p}(x^t \mid Z^t) ] -& \nonumber\\
    \mathop{\sE}_{Z^{<t} \sim q(\cdot \mid x^{<t})} KL(q(Z^t \mid x^t) &|| \hat{p}(Z^t \mid Z^{<t}, a^{<t})) \, .\label{eq:elbo}
\end{align}
See~\citep{lachapelle2022disentanglement} for a derivation of the above.

In order to obtain $\theta \eqcon \hat\theta$. Thm.~\ref{thm:combined} suggests that, while fitting the model, we should restrict $\hat G$ to have at most the same number of edges as $G$. To achieve this in practice, \citet{lachapelle2022disentanglement} introduced additional regularizing terms to the ELBO objective: $-\alpha_z ||\hat{G}^z||_0$ and $-\alpha_a ||\hat{G}^a||_0$. Moreover, to make the objective amenable to gradient-based optimization, they treat $\hat{G}_{i,j}^z$ and $\hat{G}_{i, \ell}^a$ as independent Bernoulli random variables with probabilities of success $\texttt{sigmoid}(\gamma_{i,j}^z)$ and $\texttt{sigmoid}(\gamma_{i,\ell}^a)$, respectively, and optimize the continuous parameters $\gamma^z$ and $\gamma^a$ using the Gumbel-Softmax gradient estimator~\citep{jang2016categorical, maddison2016concrete}. We employ a similar strategy, but instead of adding regularization terms, we add a sparsity constraint of the form $\sE ||\hat G||_0 \leq \beta$ and solve it using a variant of gradient descent-ascent on the associated Lagrangian function, as originally suggested by \citet{gallego2021flexible} to learn sparse neural networks. We use the python library \texttt{Cooper}~\citep{gallegoPosada2022cooper} which implements this algorithm for PyTorch. The main advantage of the constrained approach is that the hyperparameter $\beta$, the upper bound of the constraint, is easier to interpret than the regularizer coefficients $\alpha_z$ and $\alpha_a$, which results in easier value selection, e.g. via cross-validation. Moreover, this interpretability allowed us to design a very simple schedule for the value of $\beta$: We start training with $\beta = \max_{G} ||G||_0$ and linearly decrease its value until the desired number edges is reached. See App.~\ref{sec:ours_details} for optimization details. 

\begin{table*}[t]
\begin{minipage}{0.5\linewidth}
\begin{tabular}{l|l|llll}
\toprule
Graph       & Sparsity & SHD                     & MCC                           & $R_\text{con}$                & $R$ \\
\midrule
$G^z_{(1)}$ &        No & ---                     &        .61\scriptsize$\pm$.05 &               .70\scriptsize$\pm$.07 &    .98\scriptsize$\pm$.00 \\
            &        \textbf{Yes} &  \textbf{1.2\scriptsize$\pm$1.8} &        \textbf{.87\scriptsize$\pm$.01} &               \textbf{1.0\scriptsize$\pm$.00} &    \textbf{1.0\scriptsize$\pm$.00} \\
\midrule
$G^z_{(2)}$ &        No & ---                     &        .68\scriptsize$\pm$.03 &               .78\scriptsize$\pm$.02 &    .98\scriptsize$\pm$.00 \\
            &        \textbf{Yes} &  \textbf{5.6\scriptsize$\pm$5.0} &        \textbf{.86\scriptsize$\pm$.02} &               \textbf{.99\scriptsize$\pm$.01} &    \textbf{1.0\scriptsize$\pm$.00} \\
\bottomrule
\end{tabular}
\end{minipage}
\begin{minipage}{0.5\linewidth}
\begin{tabular}{l|l|llll}
\toprule
Graph       & Sparsity & SHD                     & MCC                           & $R_\text{con}$                & $R$ \\
\midrule
$G^a_{(1)}$ &        No &  ---                    &        .67\scriptsize$\pm$.04 &               .80\scriptsize$\pm$.08 &    .96\scriptsize$\pm$.00 \\
            &        \textbf{Yes} & \textbf{ 0.4\scriptsize$\pm$0.9} &        \textbf{.87\scriptsize$\pm$.03} &               \textbf{.99\scriptsize$\pm$.00} &    \textbf{.99\scriptsize$\pm$.00} \\
\midrule
$G^a_{(2)}$ &        No & ---                     &        .69\scriptsize$\pm$.05 &               .83\scriptsize$\pm$.02 &    .95\scriptsize$\pm$.00 \\
            &        \textbf{Yes} &  \textbf{1.6\scriptsize$\pm$1.7} &        \textbf{.81\scriptsize$\pm$.06} &               \textbf{.98\scriptsize$\pm$.03} &    \textbf{.99\scriptsize$\pm$.01} \\
\bottomrule
\end{tabular}
\end{minipage}
\caption{\textbf{Left table:} datasets with temporal dependencies. \textbf{Right table:} datasets with actions. In both tables, two different ground-truth graphs are considered (see App.~\ref{sec:syn_data} for their definitions), and for each one, we compare performance with and without the sparsity constraint. For SHD, lower is better, for MCC, $R_\text{con}$ and $R$, higher is better. By design, we always have ${0 \leq \text{MCC} \leq R_\text{con} \leq R \leq 1}$. Metrics are averaged over 5 random initializations and ``$\pm$'' indicates the standard deviation.}
\label{tab:exp}
\end{table*}

\vspace{-2mm}
\section{Experiments} \label{sec:exp}
The goal of this section is to demonstrate empirically that Thm.~\ref{thm:combined} holds in practice, i.e. that we can identify the equivalence class of models that are consistent~(Def.~\ref{def:consistent_models}) to the ground-truth model. Our experimental setting is largely based on the one of~\citet{lachapelle2022disentanglement} and our implementation is also built on their publicly available code.

\textbf{Synthetic datasets.} We used the same synthetic datasets as~\citet{lachapelle2022disentanglement}, but with different ground-truth graphs to highlight partially identifiable cases where complete disentanglement is not guaranteed by previous works. In these cases, our theory can predict qualitatively how disentangled the learned representation is expected to be, via the $\eqcon$-equivalence~(Def.~\ref{def:consistent_models}). We consider two types of datasets, those with temporal dependencies, and those with actions. In both types of datasets, the ground-truth decoder $\bff$ is a neural network initialized randomly. The latent variable $Z$ and observation $X$ have dimensionality $d_z = 10$ and $d_x = 20$, respectively. For datasets with actions, $d_a = 5$. Just like in~\citet{lachapelle2022disentanglement}, the ground-truth $p(z^t \mid z^{<t}, a^{<t})$ is Gaussian with covariance $\sigma_z^2 I$ and a mean outputted by some function $\mu_G(z^{t-1}, a^{t-1})$. App.~\ref{sec:syn_data} gives a detailed descriptions of the function $\mu_G$ for both types of datasets. We note that the model is well specified, in the sense that transition model $\hat{p}(z^t \mid z^{t-1}, a^{t-1})$ is also Gaussian with a mean outputted by a MLP. For both types of datasets, we consider two different graphs, $G^z_{(1)}$ and $G^z_{(2)}$ for the temporal type, and $G^a_{(1)}$ and $G^a_{(2)}$ for the action type. These graphs are specified in App.~\ref{sec:syn_data}.

\textbf{Performance metrics.} We report four metrics to verify if we can recover the correct graphical structure as well as the representation, up to the proper equivalence class.

To measure \textit{complete} disentanglement~(Def.~\ref{def:disentanglement}), we report the \textit{mean correlation coefficient}~(MCC), which is obtained by first computing the Pearson correlation matrix $K \in \sR^{d_z \times d_z}$ between the ground-truth representation and the learned representation ($K_{i,j}$ is the correlation between $z_i$ and $\hat z_j$). Then $\text{MCC} = \max_{P \in \text{permutations}} \tfrac{1}{d_z}\sum_{i=1}^{d_z} |(KP)_{i,i}|$. We denote by $\hat{P}$ the optimal permutation found by MCC. 

To evaluate whether the learned representation is linearly equivalent to the ground-truth~(Def.~\ref{def:linear_eq}), we perform linear regression to predict the ground-truth latent factors from the learned ones, and report the mean of the Pearson correlations between the predicted ground-truth latents and the actual ones. This metric is sometimes called the \textit{coefficient of multiple correlation}, and happens to be the square root of the better known \textit{coefficient of determination} denoted by $R^2$. The advantage of using $R$ instead of $R^2$ is that the former is comparable to MCC, and we always have $\text{MCC} \leq R$. Let us denote by $\hat L$ the matrix of estimated coefficients, which should be thought of as an estimation of $L$ in Def.~\ref{def:linear_eq}.

To evaluate whether the learned representation is consistent to the ground-truth~(Def.~\ref{def:consistent_models}), as predicted by Thm.~\ref{thm:combined}, we perform linear regression on $\hat P^\top \hat z$ while constraining the matrix of coefficient to have the same zeros as $C$ from Def.~\ref{def:consistent_models}, and report the mean of the associated coefficients of multiple correlation, denoted by~$R_\text{con}$. As a consequence, we have that $0 \leq \text{MCC} \leq R_\text{con} \leq R \leq 1$. See App.~\ref{sec:R_con} for more details on this novel metric.

\textbf{Sparsity helps.} Table~\ref{tab:exp} shows that the sparsity constraint yields significant improvement in MCC and $R_\text{con}$. When the sparsity constraint is used, the gap between $R_\text{con}$ and $R$ is very small (both are almost 1), indicating that the learned latents that were excluded from the linear regression to compute $R_\text{con}$ add almost no predictive power. This indicates that the learned model is consistent to the ground-truth model~(Def.~\ref{def:consistent_models}), as predicted by Thm.~\ref{thm:combined}. Moreover, the gap between MCC and $R_\text{con}$ is always significant, indicating that the learned representations are not completely disentangled~(Def.~\ref{def:disentanglement}), but are only partially disentangled~(Def.~\ref{def:partial_disentanglement}), as expected. The small SHD values indicates the graph is properly learned. See Fig.~\ref{fig:viz}a,b to visualize typical learned graphs. In all runs using the sparsity constraint, we set the upper bound to be $\beta := ||G||_0$. In practice, $||G||_0$ is unknown and $\beta$ must be chosen, e.g. using \textit{unsupervised disentanglement ranking}~(UDR)~\citep{Duan2020UDR}.

\vspace{-3mm}
\section{Conclusion}
We introduced a generalization of the theory of \textit{disentanglement via mechanism sparsity}~\citep{lachapelle2022disentanglement} which applies to all ground-truth causal graphs $G$. We defined a novel equivalence relation over models, we named \textit{consistency}~(Def.~\ref{def:consistent_models}), and gave conditions for when the corresponding equivalence class can be identified from observations by enforcing sparsity~(Thm.~\ref{thm:combined}). We showed that the equivalences ``$\eqcon$'' and ``$\eqperm$'' coincide when $G$ satisfies the criterion of~\citet{lachapelle2022disentanglement}, allowing complete instead of partial disentanglement. Finally, we proposed to enforce sparsity by solving a constrained optimization problem and validated this approach on synthetic data. 

\begin{contributions} 
    \textbf{Sébastien Lachapelle} wrote the paper, performed the experiments, came up with the theoretical results and proved them. \textbf{Simon Lacoste-Julien} provided supervision that led to clarifying various aspects of this work.
\end{contributions}

\begin{acknowledgements} 
    This research was partially supported by the Canada CIFAR AI Chair Program, by an IVADO excellence PhD scholarship and by a Google Focused Research award. The experiments were in part enabled by computational resources provided by Calcul Quebec and Compute Canada. Simon Lacoste-Julien is a CIFAR Associate Fellow in the Learning in Machines \& Brains program.
\end{acknowledgements}

\bibliography{uai2022-template}

\appendix
\onecolumn
\newpage

\tableofcontents

\section{Theory} \label{app:theory}

\subsection{Minimal sufficient statistics for exponential families}
The following defines what a \textit{minimal sufficient statistics} is for an exponential family. This property ensures that the parameter of an exponential family is identifiable. See for example~\citet[p. 40]{WainwrightJordan08} for details.

\begin{definition}[Minimal sufficient statistic]\label{def:minimal_statistic}
Given a parameterized distribution in the exponential family, as in~\eqref{eq:z_transition}, we say its sufficient statistic $\bfT_i$ is minimal when there is no $v \not= 0$ such that $v^\top\bfT_i(z)$ is constant for all $z \in \mathcal{Z}$.
\end{definition}

\subsection{Theory for disentanglement via mechanism sparsity}

\subsubsection{First insight} \label{sec:insight}
Recall that the conditions of Thm.~\ref{thm:linear} implies that the learned model $\hat\theta$ is linearly equivalent to the ground-truth model~$\theta$, i.e.
\begin{align}
    \bfT(\bff^{-1}(x)) &= L \bfT(\hat{\bff}^{-1}(x)) + b \label{eq:linear_rep_double}\\
    L^\top \bflambda(\bff^{-1}(x^{<t}), a^{<t}) + c &= \hat{\bflambda}(\hat{\bff}^{-1}(x^{<t}), a^{<t}) \label{eq:linear_natural_double} \, .
\end{align}
The following specifies an important consequence of linear identifiability. Note that this argument is taken from~\citet{lachapelle2022disentanglement}.

\begin{lemma} \label{lemma:master_equation}
Assume the dimensionality of every sufficient statistics $\bfT_i$ is $k=1$.\footnote{This hypothesis is necessary only for \eqref{eq:master_1} and not for \eqref{eq:master_2}.} If two models $\theta := (\bff, \bflambda, G)$ and $\hat{\theta} = (\hat{\bff}, \hat{\bflambda}, \hat{G})$ are linearly equivalent, i.e. $\theta \sim_L \hat{\theta} $ (Def.~\ref{def:linear_eq}), then for all $z^{<t}, a^{<t}, \tau, \vec{\epsilon}$ in their respective supports, 
\begin{align}
    L^\top D_z^\tau\bflambda(z^{<t}, a^{<t})D\bfT(z^\tau)^{-1}L  
    &= D_z^\tau\hat{\bflambda}(\bfv(z^{<t}), a^{<t})D\bfT(\bfv(z^\tau))^{-1} \, ,\  \text{and} \label{eq:master_1} \\
    L^\top \Delta^{\tau} \bflambda(z^{<t}, a^{<t}, \vec{\epsilon})
    &= \Delta^{\tau} \hat{\bflambda}(\bfv(z^{<t}), a^{<t}, \vec{\epsilon})\, . \label{eq:master_2}
\end{align}
where $D^\tau_z \bflambda$ and $D^\tau_z \hat{\bflambda}$ denote Jacobian matrices with respect to $z^\tau$ and $\Delta^{\tau} \bflambda$ and $\Delta^{\tau} \hat{\bflambda}$ denote matrices of partial differences with respect to $a^\tau$, i.e. 
$$\Delta^{\tau} \bflambda(z^{<t}, a^{<t}, \vec{\epsilon}) := [\Delta^\tau_1\bflambda(z^{<t}, a^{<t}, \epsilon_1) \hdots \Delta^\tau_{d_a}\bflambda(z^{<t}, a^{<t}, \epsilon_{d_a})] \in \sR^{d_z \times d_a}\, .$$ 
See Equation~\eqref{eq:partial_difference} for the definition of $\Delta^\tau_\ell\bflambda(z^{<t}, a^{<t}, \epsilon_\ell)$.
\end{lemma}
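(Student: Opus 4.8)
The plan is to derive both identities purely by algebraic manipulation of the two equations defining linear equivalence, after rewriting them in terms of the latent representations. Specializing Definition~\ref{def:linear_eq} to $x=\bff(z)$ and writing $\bfv := \hat\bff^{-1}\circ\bff$ (a diffeomorphism from $\gZ$ onto $\hat\gZ$, since both $\bff,\hat\bff$ are diffeomorphisms and their images agree), the two conditions of $\theta \eqlin \hat\theta$ become, for all $z,z^{<t},a^{<t}$ in their supports,
\begin{align*}
\bfT(z) &= L\,\bfT(\bfv(z)) + b, \\
L^\top \bflambda(z^{<t},a^{<t}) + c &= \hat\bflambda(\bfv(z^{<t}),a^{<t}),
\end{align*}
where $\bfv(z^{<t}) := [\bfv(z^1)\ \cdots\ \bfv(z^{t-1})]$, and where the masks $G$ and $\hat G$ are implicit inside $\bflambda$ and $\hat\bflambda$ respectively.

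First I would establish \eqref{eq:master_2}, which requires neither $k=1$ nor differentiability. Fix $\tau$, an index $\ell$, and $\epsilon_\ell$ such that $a^{<t}$ and $a^{<t}+\epsilon_\ell E_{\ell,\tau}$ both lie in the support of $A^{<t}$. Evaluating the second displayed equation at these two action sequences while holding $z^{<t}$ fixed and subtracting, the constant $c$ cancels, and by the definition~\eqref{eq:partial_difference} of the partial difference one gets $L^\top \Delta^\tau_\ell\bflambda(z^{<t},a^{<t},\epsilon_\ell) = \Delta^\tau_\ell\hat\bflambda(\bfv(z^{<t}),a^{<t},\epsilon_\ell)$. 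Stacking these column-vector identities over $\ell = 1,\dots,d_a$ gives exactly \eqref{eq:master_2}.

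Next, for \eqref{eq:master_1} I would differentiate. Here I use $k=1$ (so that $\bfT$ is a diffeomorphism of $\gZ$ onto $\bfT(\gZ)$, hence $D\bfT$ is invertible everywhere), that $L$ is invertible by Definition~\ref{def:linear_eq}, and that $\bflambda_i,\hat\bflambda_i,\bfv$ are differentiable. Differentiating the second displayed equation with respect to $z^\tau$ and applying the chain rule to $z^\tau \mapsto \hat\bflambda(\bfv(z^{<t}),a^{<t})$ (only the $\tau$-th time slice depends on $z^\tau$) yields
\begin{align*}
L^\top D_z^\tau\bflambda(z^{<t},a^{<t}) = D_z^\tau\hat\bflambda(\bfv(z^{<t}),a^{<t})\, D\bfv(z^\tau).
\end{align*}
Separately, differentiating the first displayed equation with respect to $z$ gives $D\bfT(z) = L\, D\bfT(\bfv(z))\, D\bfv(z)$, hence $D\bfv(z^\tau) = D\bfT(\bfv(z^\tau))^{-1} L^{-1} D\bfT(z^\tau)$. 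Substituting this into the previous display and right-multiplying both sides by $D\bfT(z^\tau)^{-1} L$ produces \eqref{eq:master_1} after cancelling $L^{-1}L$.

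The only delicate point is the bookkeeping around the time-indexed composition and the masks: one must read ``$D_z^\tau$'' consistently on both sides as the Jacobian with respect to the $\tau$-th time-slot argument evaluated at the indicated point, noting that the mask $\hat G^z$ merely makes $D_z^\tau\hat\bflambda$ a $d_z\times d_z$ matrix with zero columns outside the parent set (and similarly for $D_z^\tau\bflambda$ via $G^z$), which does not affect the chain rule. I expect this notational care — rather than the algebra, which is a single line — to be the main thing to get right.
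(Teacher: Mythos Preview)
Your proposal is correct and follows essentially the same approach as the paper: you differentiate the natural-parameter relation with respect to $z^\tau$, compute $D\bfv$ from the sufficient-statistic relation, substitute, and take finite differences in $a^\tau$ for the second identity. The only cosmetic differences are that the paper derives $D\bfv$ by first inverting to $\bfv(z)=\bfT^{-1}(L^{-1}(\bfT(z)-b))$ and then differentiating (rather than differentiating $\bfT(z)=L\bfT(\bfv(z))+b$ directly as you do), and that it treats \eqref{eq:master_1} before \eqref{eq:master_2}; the algebra is otherwise identical.
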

\begin{proof}
We can rearrange~\eqref{eq:linear_rep_double} to obtain
\begin{align}
    \hat{\bff}^{-1}(x) &= \bfT^{-1}(L^{-1}(\bfT(\bff^{-1}(x)) - b))  \\
    \hat{\bff}^{-1} \circ \bff(z) &= \bfT^{-1}(L^{-1}(\bfT(z) - b)) \\
    {\bfv}(z) &= \bfT^{-1}(L^{-1}(\bfT(z) - b))\, , \label{eq:v_minus_one}
\end{align}
where we defined $\bfv := \hat{\bff}^{-1} \circ \bff$. Taking the derivative of~\eqref{eq:v_minus_one} w.r.t. $z$, we obtain
\begin{align}
    D\bfv(z) &= D\bfT^{-1}(L^{-1}(\bfT(z) - b))L^{-1}D\bfT(z) \\
    &= D\bfT^{-1}(\bfT(\bfv(z)))L^{-1}D\bfT(z) \\
    &= D\bfT(\bfv(z))^{-1}L^{-1}D\bfT(z)\, . \label{eq:D_v_minus_one_}
\end{align}
We can rewrite~\eqref{eq:linear_natural_double} as
\begin{align}
    L^\top \bflambda(z^{<t}, a^{<t}) + c &= \hat{\bflambda}(\bfv(z^{<t}), a^{<t}) \, . \label{eq:to_diff}
\end{align}
By taking the derivative of the above equation w.r.t. $z^\tau$ for some $\tau \in \{1, ..., t-1\}$, we obtain
\begin{align}
    L^\top D_z^\tau\bflambda(z^{<t}, a^{<t}) = D^\tau_z\hat{\bflambda}(\bfv(z^{<t}), a^{<t})D \bfv(z^{\tau}) \, ,
\end{align}
where we use $D_z^\tau$ to make explicit the fact that we are taking the derivative with respect to $z^\tau$.
By plugging~\eqref{eq:D_v_minus_one_} in the above equation and rearranging the terms, we get the first desired equation:
\begin{align}
    \boxed{ L^\top D_z^\tau\bflambda(z^{<t}, a^{<t})D\bfT(z^\tau)^{-1}L  = D_z^\tau\hat{\bflambda}(\bfv(z^{<t}), a^{<t})D\bfT(\bfv(z^\tau))^{-1} \, . } \label{eq:lambda_lambda_tilde}
\end{align}

To obtain the second equation, we take a partial difference w.r.t. $a^\tau_\ell$ (defined in~\eqref{eq:partial_difference}) on both sides of \eqref{eq:to_diff} to obtain
\begin{align}
    L^\top \Delta^\tau_\ell \bflambda(z^{<t}, a^{<t}, \epsilon) &= \Delta^\tau_\ell \hat{\bflambda}(\bfv(z^{<t}), a^{<t}, \epsilon)\, , \label{eq:partial_diff}
\end{align}
where $\epsilon$ is some real number. We can regroup the partial differences for every $\ell \in \{1, ..., d_a\}$  and get
\begin{align}
    &\Delta^\tau \bflambda(z^{<t}, a^{<t}, \vec{\epsilon}) := \left[ \Delta_1^{\tau} \bflambda(z^{<t}, a^{<t}, \epsilon_1) \dots \Delta^{\tau}_{d_a} \bflambda(z^{<t}, a^{<t}, \epsilon_{d_a}) \right] \in \sR^{d_z \times d_a} \, . \nonumber
\end{align}
This allows us to rewrite~\eqref{eq:partial_diff} and obtain the second desired equation
\begin{align}
    \boxed{L^\top \Delta^\tau \bflambda(z^{<t}, a^{<t}, \vec{\epsilon}) = \Delta^\tau \hat{\bflambda}(\bfv(z^{<t}), a^{<t}, \vec{\epsilon})\, . } \label{eq:lambda_Llambda}
\end{align}
\end{proof}

Following the exposition of~\citet{lachapelle2022disentanglement} to improve readability and present our results in their full generality, consider an arbitrary function of the form
\begin{align}
    \Lambda: \Gamma \rightarrow \sR^{m \times n} \, ,
\end{align}
where $\Gamma$ is some arbitrary set. Depending on the context, this function $\Lambda(\gamma)$ will correspond either to $D_z^\tau\bflambda(z^{<t}, a^{<t})D\bfT(z^\tau)^{-1}$, where $\Gamma$ is the support of $(z^{<t}, a^{<t}, \tau)$, or $\Delta^\tau \bflambda(z^{<t}, a^{<t}, \vec{\epsilon})$, where $\Gamma$ is the support of $(z^{<t}, a^{<t}, \vec{\epsilon}, \tau)$.

By doing the following substitutions:
\begin{align}
    L^\top \underbrace{D_z^\tau\bflambda(z^{<t}, a^{<t})D\bfT(z^\tau)^{-1}}_{\Lambda(\gamma)}L  
    = \underbrace{D_z^\tau\hat{\bflambda}(\bfv(z^{<t}), a^{<t})D\bfT(\bfv(z^\tau))^{-1}}_{\hat{\Lambda}(\gamma)} \, ,
\end{align}
we get the equation:
\begin{align}
    L^\top \Lambda(\gamma)L  
    = \hat{\Lambda}(\gamma) \, , \label{eq:lambda_1}
\end{align}
where the argument $\gamma \in \Gamma$ of the abstract function $\Lambda(\gamma)$ corresponds to $(z^{<t}, a^{<t}, \tau)$. We can do an analogous substitution
\begin{align}
    L^\top \underbrace{\Delta^{\tau} \bflambda(z^{<t}, a^{<t}, \vec{\epsilon})}_{\Lambda'(\gamma')}
    = \underbrace{\Delta^{\tau} \hat{\bflambda}(\bfv(z^{<t}), a^{<t}, \vec{\epsilon})}_{\hat{\Lambda}'(\gamma')}\, ,
\end{align}
which yields
\begin{align}
    L^\top \Lambda'(\gamma') = \hat{\Lambda}'(\gamma') \, , \label{eq:lambda_2}
\end{align}
where the argument $\gamma' \in \Gamma'$ of the abstract function $\Lambda(\gamma')$ corresponds to $(z^{<t}, a^{<t}, \vec{\epsilon}, \tau)$.

\textbf{Key observation from~\citet{lachapelle2022disentanglement}:} Notice how the zeros of $\Lambda(\gamma)$ and $\hat{\Lambda}(\gamma)$ corresponds to the missing edges in $G^z$ and $\hat{G}^z$, respectively, and how the zeros of $\Lambda'(\gamma')$ and $\hat{\Lambda}'(\gamma')$ corresponds to the missing edges in $G^a$ and $\hat{G}^a$, respectively. The intuition for why sparsity induce disentanglement is that enforcing sparsity of $G$ results in a sparse $\hat{\Lambda}(\gamma)$ and $\hat{\Lambda}'(\gamma')$, which will result in a sparse $L$ via equations~\eqref{eq:lambda_1}~\&~\eqref{eq:lambda_2}. Since $L$ relates the ground-truth representation with the learned one, a sparse $L$ means a ``more disentangled'' representation. The lemmas and definitions of the following section make this intuition precise.

\subsubsection{Central Lemmas and Definitions}
In order to formalize the intuition presented in the above section, we need to set up some notation and definitions. Many notation choices, definitions and results are taken from~\citet{lachapelle2022disentanglement}.

\textbf{Notation.} The $j$th column of $\Lambda(\gamma)$ and its $i$th row will be denoted as $\Lambda_{\cdot, j}(\gamma)$ and $\Lambda_{i, \cdot}(\gamma)$, respectively. For convenience, we will sometimes treat a binary vector $b$ as a set of indices $\{i \mid b_i = 1\}$ and sometimes treat a binary matrix $B$ as a set index couples $\{(i,j) \mid B_{i,j} = 1\}$. For example, this will allow us to write $B_{\cdot, j_1} \cap B_{\cdot, j_2}$, which should be understood as either the index set $\{i \mid B_{\cdot, j_1} = 1\} \cap \{i \mid B_{\cdot, j_2} = 1\}$ or the binary vector $B_{\cdot, j_1} \odot B_{\cdot, j_2}$ (where $\odot$ is the element-wise product), depending on the context. Another example would be the complement of a binary vector $b^c$ which should be understood as either $\{i \mid b_i = 0\}$ or $\mathbbm{1} - b$, where $\mathbbm{1}$ denotes a vector filled with ones. The usefulness of this notation will become apparent later on.  

We introduce further notations in the following definitions.

\begin{definition}[Aligned subspaces of $\sR^m$] Given a binary vector $b \in \{0, 1\}^{m}$, we define
\begin{align}
    \sR^m_b := \{x \in \sR^m \mid b_i = 0 \implies x_i = 0\} \, .
\end{align}
\end{definition}

\begin{definition}[Aligned subspaces of $\sR^{m \times n}$]  Given a binary matrix $B \in \{0,1\}^{m\times n}$, we define
\begin{align}
    \sR^{m\times n}_B := \{M \in \sR^{m \times n} \mid B_{i,j}=0 \implies M_{i,j} = 0\} \, .
\end{align}
\end{definition}

Next, we define the \textit{sparsity pattern} of $\Lambda$, which compactly captures which of its entries are always zero. 

\begin{definition}[Sparsity pattern of $\Lambda$~\citep{lachapelle2022disentanglement}]\label{def:sparsity_pattern} The \textit{sparsity pattern of $\Lambda: \Gamma \rightarrow \sR^{m \times n}$} is a binary matrix $S \in \{0,1\}^{m \times n}$ such that 
$$S_{i,j} = 1 \iff \exists \gamma \in \Gamma, \Lambda_{i,j}(\gamma) \not= 0 \, .$$
\end{definition}

An other way to phrase this is to say that the sparsity pattern of $\Lambda$ is the sparsest binary matrix $S$ such that $\Lambda(\Gamma) \subset \sR_S^{m \times n}$. 

We are now ready to present the lemmas that will be central to the main theorems of this work. 
\begin{lemma}[\citet{lachapelle2022disentanglement}]\label{lemma:sparse_L_double}
Let $S, S' \in \{0,1\}^{m \times m}$ and let $(A^{(i,j)})_{(i,j) \in S}$ be a basis of $\sR^{m\times m}_S$. Let $L$ be a real $m\times m$ matrix. Then
\begin{align}
    \forall\ (i, j) \in S,\ L^\top A^{(i,j)} L \in \sR^{m\times m}_{S'} \iff \forall\ (i,j) \in S,\ (L_{i, \cdot})^\top L_{j, \cdot} \in \sR^{m \times m}_{S'}\, .
\end{align}
\end{lemma}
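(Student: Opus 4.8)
The plan is to reduce both sides of the stated equivalence to the single assertion that $\Phi(M) \in \sR^{m\times m}_{S'}$ for every $M \in \sR^{m\times m}_{S}$, where $\Phi(M) := L^\top M L$. The key facts exploited are that $\Phi$ is linear and $\sR^{m\times m}_{S'}$ is a linear subspace, so that membership in $\sR^{m\times m}_{S'}$ of the image under $\Phi$ of \emph{any} spanning set of $\sR^{m\times m}_{S}$ is equivalent to the same membership for the whole subspace. The two spanning sets relevant here are the given basis $(A^{(i,j)})_{(i,j)\in S}$ and the standard matrix units.

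First I would record the only computation in the proof. Let $e_1,\dots,e_m$ denote the standard basis of $\sR^m$ and, for $(i,j)\in S$, let $E^{(i,j)} := e_i e_j^\top$, the matrix with a single $1$ in position $(i,j)$; the family $(E^{(i,j)})_{(i,j)\in S}$ is a basis of $\sR^{m\times m}_{S}$. Then
\begin{align}
    \Phi(E^{(i,j)}) = L^\top e_i e_j^\top L = (L^\top e_i)(e_j^\top L) = (L_{i,\cdot})^\top L_{j,\cdot}\,, \nonumber
\end{align}
since $L^\top e_i$ is the $i$-th column of $L^\top$, i.e. $(L_{i,\cdot})^\top$, and $e_j^\top L = L_{j,\cdot}$.

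Next I would treat the two directions, which are symmetric. For $(\Rightarrow)$: assuming $\Phi(A^{(i,j)}) \in \sR^{m\times m}_{S'}$ for all $(i,j)\in S$, then since $(A^{(i,j)})_{(i,j)\in S}$ spans $\sR^{m\times m}_{S}$, $\Phi$ is linear, and $\sR^{m\times m}_{S'}$ is closed under linear combinations, we get $\Phi(M)\in\sR^{m\times m}_{S'}$ for every $M\in\sR^{m\times m}_{S}$; taking $M=E^{(i,j)}$ and using the identity above yields $(L_{i,\cdot})^\top L_{j,\cdot}\in\sR^{m\times m}_{S'}$ for each $(i,j)\in S$. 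For $(\Leftarrow)$: assuming $(L_{i,\cdot})^\top L_{j,\cdot} = \Phi(E^{(i,j)}) \in \sR^{m\times m}_{S'}$ for all $(i,j)\in S$, then since $(E^{(i,j)})_{(i,j)\in S}$ spans $\sR^{m\times m}_{S}$, the same linearity argument gives $\Phi(M)\in\sR^{m\times m}_{S'}$ for all $M\in\sR^{m\times m}_{S}$, and applying this to $M=A^{(i,j)}$ gives $L^\top A^{(i,j)} L\in\sR^{m\times m}_{S'}$.

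There is essentially no serious obstacle: this is a routine linear-algebra fact once the identity $L^\top E^{(i,j)} L = (L_{i,\cdot})^\top L_{j,\cdot}$ is observed. The only subtlety worth flagging is that $(A^{(i,j)})_{(i,j)\in S}$ is an \emph{arbitrary} basis, not the standard one, so the argument must route through the span of the basis (using linearity of $\Phi$ in both directions) rather than matching basis elements one-to-one; the hypothesis that $\sR^{m\times m}_{S'}$ is a linear subspace is precisely what makes this passage valid.
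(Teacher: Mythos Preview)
Your proof is correct and follows essentially the same approach as the paper: both directions use the identity $L^\top e_i e_j^\top L = (L_{i,\cdot})^\top L_{j,\cdot}$ together with linearity of $M \mapsto L^\top M L$ to pass between the given basis $(A^{(i,j)})$ and the standard matrix units $(e_i e_j^\top)$ via the span $\sR^{m\times m}_S$. Your explicit naming of $\Phi$ and the symmetric phrasing are a slight cosmetic improvement, but the underlying argument is identical to the paper's.
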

\begin{proof}
We start with direction ``$\implies$''. Choose $(i_0, j_0) \in S$. Since $e_{i_0}e_{j_0}^\top \in \sR^{m \times m}_S$ (where $e_i$ denotes the vector with a 1 at entry $i$ and 0 elsewhere) and the matrices $A^{(i,j)}$ form a basis of $\sR^{m \times m}_S$, we can write $e_{i_0}e_{j_0}^\top = \sum_{(i,j) \in S} \alpha_{i,j} A^{(i,j)}$ for some coefficients $\alpha_{i,j}$. Thus
\begin{align}
    (L_{i_0, \cdot})^\top L_{j_0, \cdot} &= L^\top e_{i_0} e^\top_{j_0} L \\
    &= L^\top \left( \sum_{(i,j) \in S} \alpha_{i,j} A^{(i, j)}\right)L \\
    &= \sum_{(i,j) \in S} \alpha_{i,j} L^\top A^{(i, j)}L \in \sR^{m\times m}_{S'}\, ,
\end{align}
where the final ``$\in$" holds because each element of the sum is in $\sR^{m\times m}_{S'}$.

We now show the reverse direction ``$\impliedby$''. Let $A \in \sR^{m\times m}_S$. We can write
\begin{align}
    A &= \sum_{(i,j) \in S} A_{i,j}e_i e_j^\top \\
    L^\top A L &= \sum_{(i,j) \in S} A_{i,j}L^\top e_i e_j^\top L = \sum_{(i,j) \in S} A_{i,j}(L_{i,\cdot})^\top L_{j, \cdot} \in \sR^{m\times m}_{S'} \, ,
\end{align}
where the last ``$\in$'' hold because every term in the sum is in $\sR^{m\times m}_{S'}$.
\end{proof}

\begin{lemma}[\citet{lachapelle2022disentanglement}]\label{lemma:sparse_L}
Let $s, s' \in \{0,1\}^m$ and $(a^{(i)})_{i \in s}$ be a basis of $\sR^m_s$. Let $L$ be a real $m\times m$ matrix. Then
\begin{align}
    \forall\ i \in s,\ L^\top a^{(i)} \in \sR^m_{s'} \iff \forall\ i \in s,\ (L_{i, \cdot})^\top \in \sR^m_{s'}\, .
\end{align}
\end{lemma}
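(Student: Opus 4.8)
The plan is to prove this exactly as the matrix version (Lemma~\ref{lemma:sparse_L_double}) was proved, specializing the bilinear map $A \mapsto L^\top A L$ on $\sR^{m\times m}_S$ down to the linear map $a \mapsto L^\top a$ on $\sR^m_s$, and matrices to vectors. The only structural facts used are that $\sR^m_{s'}$ is a linear subspace, hence closed under linear combinations, together with the bookkeeping identity $L^\top e_i = (L_{i,\cdot})^\top$ (the $i$th column of $L^\top$ is the transpose of the $i$th row of $L$).

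For the ``$\impliedby$'' direction, I would assume $(L_{i,\cdot})^\top \in \sR^m_{s'}$ for every $i \in s$, take an arbitrary $a \in \sR^m_s$, and expand $a = \sum_{i\in s} a_i e_i$ (the sum runs only over $i\in s$ precisely because $a \in \sR^m_s$). Then $L^\top a = \sum_{i\in s} a_i (L_{i,\cdot})^\top$ is a linear combination of vectors in $\sR^m_{s'}$, hence lies in $\sR^m_{s'}$. Applying this with $a = a^{(i)}$ for each $i \in s$ yields the desired conclusion; in fact it gives the slightly stronger statement that $L^\top a \in \sR^m_{s'}$ for every $a \in \sR^m_s$, not just for the basis elements.

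For the ``$\implies$'' direction, I would fix $i_0 \in s$ and use that $(a^{(i)})_{i\in s}$ is a basis of $\sR^m_s$ while $e_{i_0} \in \sR^m_s$, so $e_{i_0} = \sum_{i\in s}\alpha_i a^{(i)}$ for some scalars $\alpha_i$. Then $(L_{i_0,\cdot})^\top = L^\top e_{i_0} = \sum_{i\in s}\alpha_i\, L^\top a^{(i)}$ is a linear combination of the vectors $L^\top a^{(i)}\in \sR^m_{s'}$ (by hypothesis), hence lies in $\sR^m_{s'}$. Since $i_0\in s$ was arbitrary, this completes the proof.

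There is essentially no obstacle here: the statement is the ``vector shadow'' of Lemma~\ref{lemma:sparse_L_double}, and the two short direct computations above are self-contained. The only place that deserves a moment of care is the index bookkeeping — checking that the expansions of $a \in \sR^m_s$ and of $e_{i_0} \in \sR^m_s$ involve only indices in $s$, and recalling that ``$(L_{i,\cdot})^\top \in \sR^m_{s'}$'' says precisely that the $i$th column of $L^\top$ is supported inside $s'$.
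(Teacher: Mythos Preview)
Your proof is correct and matches the paper's own proof essentially line for line: both directions proceed by expanding the relevant vector ($e_{i_0}$ for ``$\implies$'', an arbitrary $a\in\sR^m_s$ for ``$\impliedby$'') in the appropriate basis and using that $\sR^m_{s'}$ is closed under linear combinations, together with $L^\top e_i = (L_{i,\cdot})^\top$. The only cosmetic difference is that the paper presents the ``$\implies$'' direction first.
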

\begin{proof}
We start with ``$\implies$''. Choose $i_0 \in s$. We can write the one-hot vector $e_{i_0}$ as $\sum_{i \in s}\alpha_i a^{(i)}$ for some coefficients $\alpha_i$ (since $(a^{(i)})_{i \in s}$ forms a basis). Thus
\begin{align}
    (L_{i_0, \cdot})^\top = L^\top e_{i_0} = L^\top\sum_{i \in s} \alpha_i a^{(i)} = \sum_{i \in s} \alpha_i L^\top a^{(i)} \in \sR^m_{s'}\, ,
\end{align}
where the final ``$\in$'' holds because each element of the sum is in $\sR^m_{s'}$.

We now show ``$\impliedby$''. Let $a \in \sR^m_s$. We can write
\begin{align}
    a &= \sum_{i \in s} a_i e_i \\
    L^\top a &= \sum_{i \in s} a_i L^\top e_i = \sum_{i \in s} a_i (L_{i, \cdot})^\top \in \sR^m_{s'} \, ,
\end{align}
where the last ``$\in$'' holds because all terms in the sum are in $\sR^m_{s'}$.
\end{proof}

The following simple Lemma will be useful throughout this section. The argument is taken from~\citet{lachapelle2022disentanglement}.

\begin{lemma}[Sparsity pattern of an invertible matrix contains a permutation] \label{lemma:L_perm}
Let $L \in \sR^{m\times m}$ be an invertible matrix. Then, there exists a permutation $\sigma$ such that $L_{i, \sigma(i)} \not=0$ for all $i$. 
\end{lemma}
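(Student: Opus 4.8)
# Proof Proposal for Lemma~\ref{lemma:L_perm}

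The plan is to use the Leibniz expansion of the determinant. Since $L$ is invertible, $\det L \neq 0$. Recall that
\[
\det L = \sum_{\sigma \in S_m} \sign(\sigma) \prod_{i=1}^m L_{i, \sigma(i)} \, ,
\]
where the sum ranges over all permutations $\sigma$ of $\{1, \dots, m\}$. If there were no permutation $\sigma$ with $L_{i, \sigma(i)} \neq 0$ for all $i$, then every single term in this sum would contain at least one factor $L_{i, \sigma(i)} = 0$, hence every product $\prod_{i=1}^m L_{i, \sigma(i)}$ would vanish. This would force $\det L = 0$, contradicting invertibility. Therefore at least one permutation $\sigma$ satisfies $L_{i, \sigma(i)} \neq 0$ for all $i$, which is exactly the claim.

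This is a short argument, so there is no real "main obstacle"; the only thing to be careful about is making sure the logical negation is handled correctly — namely that "no valid permutation exists" is equivalent to "for every permutation $\sigma$, some index $i$ has $L_{i,\sigma(i)} = 0$," which then makes each Leibniz term zero. An alternative route, if one prefers to avoid determinants, is a Hall's marriage theorem / König's theorem argument: view the zero/nonzero pattern of $L$ as a bipartite graph between rows and columns, with an edge $(i,j)$ whenever $L_{i,j} \neq 0$; a permutation as sought is precisely a perfect matching, and the absence of one would (by König's theorem) yield a set of rows whose nonzero entries are confined to fewer than $|$that set$|$ columns, forcing a nontrivial linear dependence among the rows of $L$ and contradicting invertibility. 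The determinant argument is cleaner, so I would present that one.
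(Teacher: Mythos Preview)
Your proof is correct and is essentially identical to the paper's: both invoke the Leibniz expansion of $\det L$ and observe that $\det L \neq 0$ forces at least one term $\prod_i L_{i,\sigma(i)}$ to be nonzero. The Hall's-theorem alternative you mention is a nice remark but is not used in the paper.
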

\begin{proof}
Since the matrix $L$ is invertible, its determinant is non-zero, i.e.
\begin{align}
    \det(L) := \sum_{\sigma\in \mathfrak{S}_m} \text{sign}(\sigma) \prod_{i=1}^m L_{i, \sigma(i)} \neq 0 \, , 
\end{align}
where $\mathfrak{S}_m$ is the set of $m$-permutations. This equation implies that at least one term of the sum is non-zero, meaning
\begin{align}
    \exists \sigma\in \mathfrak{S}_m, \forall i \leq m, L_{i, \sigma(i)} \neq 0 \; .
\end{align}
\end{proof}

The exact form of the ground-truth graph $G$ will force some of the entries of the matrix $L$, which relates the ground-truth and the learned representations, to be zero. Understanding which entries of $L$ are zero is very important to understand \textit{qualitatively} how disentangled the learned representation is expected to be. We now recall the notion of $S$-consistency (introduced in the main text) which will be crucial to precisely relate the form of the ground-truth graph $G$ to the sparsity pattern of $L$ via the consistency equivalence relation (Def.~\ref{def:consistent_models}) in Thm.~\ref{thm:combined}. Note that it is reformulated with the notation introduce in this appendix.
\begingroup
\def\thetheorem{\ref{def:S-consistent}}
\begin{definition}[$S$-consistency]
Given a binary matrix $S \in \{0,1\}^{m \times n}$, a matrix $C \in \sR^{m \times m}$ is \emph{$S$-consistent} if
$$C \in \sR^{m \times m}_{[\mathbbm{1} - S(\mathbbm{1} - S)^\top]^+} \, ,$$
where $[\cdot]^+ := \max\{0, \cdot\}$ and $\mathbbm{1}$ is a matrix filled with ones (assuming implicitly its correct size).
\end{definition}
\addtocounter{theorem}{-1}
\endgroup

The following characterization of $S$-consistency will be useful later on to prove Lemma~\ref{lemma:thm3}~\&~\ref{lemma:thm2}, to give an intuitive interpretation of $S$-consistency (Sec.~\ref{sec:interpret}) and to relate $S$-consistency to the graphical criterion introduced by~\citet{lachapelle2022disentanglement} (Sec.~\ref{sec:connect_lachapelle2022}).

\begin{lemma}[Characterizing $S$-consistency]\label{lemma:charac_S-consistent}
Let $C \in \sR^{m \times m}$ and $S \in \{0,1\}^{m \times n}$. The following statements are equivalent.
\begin{enumerate}
    \item $C$ is $S$-consistent (Def.~\ref{def:S-consistent});
    \item $\forall i, (C_{i, \cdot})^\top \in \sR^{m}_{\bigcap_{k \in S_{i, \cdot}}S_{\cdot, k}}$;
    \item $\forall j, C_{\cdot, j} \in \sR^{m}_{\bigcap_{k \in S^c_{j, \cdot}}S^c_{\cdot, k}}$.
\end{enumerate}
\end{lemma}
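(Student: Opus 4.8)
The plan is to reduce all three statements to a single concrete condition on the pair $(C,S)$, denoted $(\star)$: for all $i,j$, if $S_{i,\cdot}\not\subseteq S_{j,\cdot}$ (viewing rows of $S$ as sets of column indices) then $C_{i,j}=0$.

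First I would compute the binary matrix $[\mathbbm{1} - S(\mathbbm{1}-S)^\top]^+$ explicitly. Expanding the product, $(S(\mathbbm{1}-S)^\top)_{i,j} = \sum_{k=1}^n S_{i,k}(1-S_{j,k})$, which counts the columns $k$ with $S_{i,k}=1$ and $S_{j,k}=0$; in the set notation of this appendix this is $|S_{i,\cdot}\cap S^c_{j,\cdot}|$. Since this is a non-negative integer, $[1-(S(\mathbbm{1}-S)^\top)_{i,j}]^+$ equals $1$ when the count is $0$ and $0$ otherwise, so $[\mathbbm{1} - S(\mathbbm{1}-S)^\top]^+_{i,j}=1 \iff S_{i,\cdot}\subseteq S_{j,\cdot} \iff S^c_{j,\cdot}\subseteq S^c_{i,\cdot}$. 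Hence statement 1, i.e.\ $C\in\sR^{m\times m}_{[\mathbbm{1}-S(\mathbbm{1}-S)^\top]^+}$, is by definition of the aligned subspace exactly $(\star)$.

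Next I would unpack statements 2 and 3 and show each is also a restatement of $(\star)$. For statement 2: $(C_{i,\cdot})^\top\in\sR^m_{\bigcap_{k\in S_{i,\cdot}}S_{\cdot,k}}$ means $C_{i,j}=0$ whenever $j\notin\bigcap_{k\in S_{i,\cdot}}S_{\cdot,k}$; and $j\in\bigcap_{k\in S_{i,\cdot}}S_{\cdot,k}$ says $S_{j,k}=1$ for every $k$ with $S_{i,k}=1$, i.e.\ $S_{i,\cdot}\subseteq S_{j,\cdot}$ (vacuously true, and correctly so, when $S_{i,\cdot}=\emptyset$), so quantifying over all $i$ gives $(\star)$. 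Symmetrically, $i\in\bigcap_{k\in S^c_{j,\cdot}}S^c_{\cdot,k}$ says $S_{i,k}=0$ for every $k$ with $S_{j,k}=0$, i.e.\ $S^c_{j,\cdot}\subseteq S^c_{i,\cdot}$, which is the contrapositive form of $S_{i,\cdot}\subseteq S_{j,\cdot}$; hence statement 3 quantified over all $j$ is again $(\star)$. Since all three are equivalent to $(\star)$, they are equivalent to one another. I do not anticipate a genuine obstacle; the only care needed is in handling the overloading between binary vectors/matrices and index sets (and the empty-row edge case), so I would record the set-inclusion reformulation of $[\mathbbm{1}-S(\mathbbm{1}-S)^\top]^+$ once, cleanly, and reuse it for all three equivalences.
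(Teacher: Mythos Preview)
Your proposal is correct and follows essentially the same approach as the paper: both reduce all three statements to the entrywise condition ``$\exists k$ with $S_{i,k}=1$ and $S_{j,k}=0$ $\implies C_{i,j}=0$,'' which you phrase as $S_{i,\cdot}\not\subseteq S_{j,\cdot}$ and the paper leaves in the existential form. The only cosmetic difference is that you package this common condition as a named $(\star)$ and invoke the set-inclusion/contrapositive reformulation explicitly, whereas the paper just rephrases the existential twice.
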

\begin{proof} We proceed by showing how both the second and third statements are equivalent to the first one. Choose arbitrary $i$ and $j$. 
\begin{align}
    [\mathbbm{1} - S(\mathbbm{1} - S)^\top]_{i,j}^+ = 0 &\iff 1 \leq S_{i, \cdot}(\mathbbm{1} - S_{j, \cdot})^\top \\
    &\iff \exists k \text{ s.t. } S_{i,k} = 1 \text{ and } S_{j, k} = 0 \label{eq:red_star}
\end{align}
One can rephrase~\eqref{eq:red_star} as
\begin{align}
    \exists k \in S_{i, \cdot} \text{ s.t. } j \not\in S_{\cdot, k} \iff j \not\in \bigcap_{k \in S_{i, \cdot}}S_{\cdot, k} \,,
\end{align}
which proves the first and second statements are equivalent. One can also rephrase~\eqref{eq:red_star} as
\begin{align}
    \exists k \in S^c_{j, \cdot} \text{ s.t. } i \not\in S^c_{\cdot, k} \iff i \not\in \bigcap_{k \in S^c_{j, \cdot}} S^c_{\cdot, k} \,,
\end{align}
which proves the first and third statements are equivalent.
\end{proof}

Later in Sec.~\ref{sec:S_consistent_group}, we show that the set of invertible and $S$-consistent matrices form a group under matrix multiplication, i.e. that it is closed under matrix multiplication and inversion. This will be crucial to show that the relation~$\eqcon$ (Def.~\ref{def:consistent_models}) is an equivalence relation (Sec.~\ref{sec:proof_consistence_equivalence}).

We are now ready to show the central lemmas that can be directly applied to easily prove the main theorem of this work, Thm.~\ref{thm:combined}. Note that Lemmas~\ref{lemma:thm3}~\&~\ref{lemma:thm2} can be thought of as generalizations of Lemmas~17~\&~18 from~\citet{lachapelle2022disentanglement}, respectively. The difference is that we do not assume anything about the specific form of $S$, which yields a different (sometime weaker) conclusion.

\begin{lemma}[$L^\top \Lambda(\cdot) L$ sparse implies $L$ sparse] \label{lemma:thm3}
Let $\Lambda: \Gamma \rightarrow \sR^{m \times m}$ with sparsity pattern $S$~(Def.~\ref{def:sparsity_pattern}). Let $L \in \sR^{m \times m}$ be an invertible matrix and $\hat{S}$ be the sparsity pattern of $\hat{\Lambda}(.):=L^\top\Lambda(\cdot)L$. Let $\sigma$ be a permutation such that for all $i$, $L_{i, \sigma(i)} \not=0$ (Lemma~\ref{lemma:L_perm}) and let $P$ be its associated permutation matrix, i.e. $Pe_i = e_{\sigma(i)}$ for all $i$. Assume that
\begin{enumerate}
    \item \textbf{[Sufficient Variability]} $\text{span}(\Lambda(\Gamma)) = \sR^{m\times m}_{S}$\,. \label{ass:lem_suff_double}
\end{enumerate}
Then $S \subset P^\top\hat{S}P$. Further assume that
\begin{enumerate}[resume]
    \item \textbf{[Sparsity]} $||\hat{S}||_0 \leq 
    ||S||_0$ \,. \label{ass:lem_sparse_double}
\end{enumerate}
Then $S = P^\top\hat{S}P$ and $L = CP^\top$ where $C$ is $S$-consistent and $S^\top$-consistent.
\end{lemma}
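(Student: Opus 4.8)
The argument has three stages: first derive the inclusion $S \subset P^\top \hat S P$ from sufficient variability alone, then upgrade it to equality using the sparsity hypothesis, and finally read off the structure of $L$ using that its ``permuted diagonal'' is nonzero. For the first stage I would start from the observation that, since $\mathrm{span}(\Lambda(\Gamma)) = \sR^{m\times m}_S$, each standard basis matrix $e_i e_j^\top$ with $(i,j)\in S$ lies in $\mathrm{span}(\Lambda(\Gamma))$, so $L^\top e_i e_j^\top L = (L_{i,\cdot})^\top L_{j,\cdot}$ is a linear combination of the matrices $L^\top\Lambda(\gamma)L = \hat\Lambda(\gamma) \in \sR^{m\times m}_{\hat S}$, hence $(L_{i,\cdot})^\top L_{j,\cdot} \in \sR^{m\times m}_{\hat S}$ for every $(i,j)\in S$ (this is exactly the content of Lemma~\ref{lemma:sparse_L_double} with $S' = \hat S$ and the standard basis of $\sR^{m\times m}_S$). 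Letting $\sigma$ be the permutation from Lemma~\ref{lemma:L_perm} with $L_{i,\sigma(i)}\neq 0$ for all $i$, the $(\sigma(i),\sigma(j))$ entry of $(L_{i,\cdot})^\top L_{j,\cdot}$ equals $L_{i,\sigma(i)}L_{j,\sigma(j)}\neq 0$, which forces $\hat S_{\sigma(i),\sigma(j)} = 1$, i.e. $(P^\top\hat S P)_{i,j} = 1$. Thus $S \subset P^\top \hat S P$.

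For the second stage, I would note that conjugation by a permutation matrix does not change the number of nonzero entries, so $\|P^\top\hat S P\|_0 = \|\hat S\|_0 \le \|S\|_0$ by the sparsity assumption; together with $S\subset P^\top\hat S P$ this forces $S = P^\top\hat S P$, equivalently $\hat S = P S P^\top$.

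For the third stage I would set $C := LP$, so that $L = CP^\top$ and, by the choice of $\sigma$, $C_{i,i} = L_{i,\sigma(i)} \neq 0$ for all $i$. Conjugating the relation $(L_{i,\cdot})^\top L_{j,\cdot}\in\sR^{m\times m}_{\hat S}$ (valid for $(i,j)\in S$) by $P$ and using $\hat S = P S P^\top$ gives $(C_{i,\cdot})^\top C_{j,\cdot} = P^\top (L_{i,\cdot})^\top L_{j,\cdot} P \in \sR^{m\times m}_{P^\top \hat S P} = \sR^{m\times m}_S$ for all $(i,j)\in S$, which entrywise reads
$$ (a,b)\in S \ \text{ and }\ (c,d)\notin S \ \implies\ C_{a,c}\,C_{b,d} = 0 \, . $$
To conclude $C$ is $S$-consistent I would use Lemma~\ref{lemma:charac_S-consistent}(2): it suffices to show $C_{p,q} = 0$ whenever there is $k_0$ with $(p,k_0)\in S$ and $(q,k_0)\notin S$; applying the displayed implication with $(a,b) = (p,k_0)$ and $(c,d) = (q,k_0)$ gives $C_{p,q}C_{k_0,k_0} = 0$, and since $C_{k_0,k_0}\neq 0$ we get $C_{p,q} = 0$. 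Symmetrically, for $S^\top$-consistency, Lemma~\ref{lemma:charac_S-consistent}(2) applied to $S^\top$ reduces the goal to $C_{p,q}=0$ whenever $(k_0,p)\in S$ and $(k_0,q)\notin S$ for some $k_0$, and applying the displayed implication with $(a,b)=(k_0,p)$, $(c,d)=(k_0,q)$ gives $C_{k_0,k_0}C_{p,q}=0$, hence $C_{p,q}=0$ again.

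\textbf{Main obstacle.} The conceptual content is light — it is just the two moves (i) sufficient variability turns ``$\hat\Lambda$ always $\hat S$-sparse'' into ``$(L_{i,\cdot})^\top L_{j,\cdot}$ is $\hat S$-sparse for every $(i,j)\in S$'', and (ii) the nonzero permuted diagonal of $L$ lets one cancel a diagonal factor $C_{k_0,k_0}$ to extract a zero of $C$. The only real difficulty is bookkeeping: keeping the permutation conjugations straight (which of $P$, $P^\top$, $\sigma$, $\sigma^{-1}$ appears where), and correctly translating ``$C$ is $S$-consistent'' (resp. $S^\top$-consistent) through Lemma~\ref{lemma:charac_S-consistent} into the precise entrywise statement that the displayed implication discharges.
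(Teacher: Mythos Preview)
Your proof is correct and follows essentially the same approach as the paper: both use Lemma~\ref{lemma:sparse_L_double} to obtain $(L_{i,\cdot})^\top L_{j,\cdot}\in\sR^{m\times m}_{\hat S}$ for $(i,j)\in S$, derive $S\subset P^\top\hat S P$ from the nonzero entries $L_{i,\sigma(i)}$, upgrade to equality via the sparsity count, and then exploit $C_{k,k}\neq 0$ to read off the consistency conditions. Your Stage~3 is a slightly more direct entrywise variant of the paper's Step~4 (which extracts row constraints on $L$, intersects, and conjugates by $P$), but the underlying idea is identical.
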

\begin{proof}
We separate the proof in four steps. The first step leverages the Assumption~\ref{ass:lem_suff_double} and Lemma~\ref{lemma:sparse_L_double} to show that $L$ must contain ``many" zeros. The second step leverages the invertibility of $L$ to show that $PSP^\top \subset \hat{S}$. The third step uses Assumption~\ref{ass:lem_sparse_double} to establish $PSP^\top = \hat{S}$ and the fourth step concludes that $L = CP^\top$ where $C$ is both $S$-consistent and $S^\top$-consistent.

\textbf{Step 1:} By Assumption~\ref{ass:lem_suff_double}, there exists $(\gamma_{i,j})_{(i,j) \in S}$ such that $(\Lambda(\gamma_{i,j}))_{(i,j) \in S}$ spans $\sR^{m\times m}_{S}$. Moreover, by the definition of $\hat S$  as sparsity pattern of $L^\top \Lambda(.)L$~(Definition~\ref{def:sparsity_pattern}), we have for all $(i,j) \in S$
\begin{align}
    L^\top \Lambda(\gamma_{i,j})L \in \sR^{m\times m}_{\hat{S}}\, .
\end{align}
Then, by Lemma~\ref{lemma:sparse_L_double}, we must have
\begin{align}
    \forall\ (i,j) \in S,\ (L_{i, \cdot})^\top L_{j, \cdot} \in \sR^{m\times m}_{\hat{S}}\,. \label{eq:LL_S}
\end{align}
\textbf{Step 2:} Since $\forall i, L_{i, \sigma(i)} \neq 0$, \eqref{eq:LL_S} implies that for all $(i, j) \in S$,
\begin{align}
    (\sigma(i), \sigma(j)) \in \hat{S} \, ,
\end{align}
which, in other words, means that
\begin{align}
    PSP^\top &\subset \hat{S} \,. \\
\end{align}
This proves the first claim of the theorem.

\textbf{Step 3:} By Assumption~\ref{ass:lem_sparse_double}, $||\hat{S}||_0 \leq ||S||_0 = ||PSP^\top||_0$, we must have that
\begin{align}
    PSP^\top &= \hat{S} \, , \label{eq:double_0.1}
\end{align}
which proves the second statement of the theorem.

\textbf{Step 4:} We notice that, since $L_{j, \sigma(j)}\not= 0$, \eqref{eq:LL_S} implies
\begin{align}
    &\forall i,\ \forall j \in S_{i, \cdot},\ (L_{i, \cdot})^\top \in \sR^m_{\hat{S}_{\cdot, \sigma(j)}} \label{eq:pre_intersect1}\\
    \text{and }&\forall j,\ \forall i \in S_{\cdot, j},\ (L_{j, \cdot})^\top \in \sR^m_{\hat{S}_{\sigma(i), \cdot}} \, ,
\end{align}
We interchange indices $i$ and $j$ in the second equation above (this is purely a change of notation), which yields
\begin{align}
    \forall i,\ \forall j \in S_{\cdot, i},\ (L_{i, \cdot})^\top \in \sR^m_{\hat{S}_{\sigma(j), \cdot}} \, , \label{eq:pre_intersect2}
\end{align}

Equations~\eqref{eq:pre_intersect1}~\&~\eqref{eq:pre_intersect2} can be rewritten as
\begin{align}
    &\forall i,\ (L_{i, \cdot})^\top \in \bigcap_{j \in S_{i, \cdot}}\sR^m_{\hat{S}_{\cdot, \sigma(j)}} = \sR^m_{\bigcap_{j \in S_{i, \cdot}}\hat{S}_{\cdot, \sigma(j)}} \\
    &\forall i,\ (L_{i, \cdot})^\top \in \bigcap_{j \in S_{\cdot, i}}\sR^m_{\hat{S}_{\sigma(j), \cdot}} = \sR^m_{\bigcap_{j \in S_{\cdot, i}}\hat{S}_{\sigma(j), \cdot}}
\end{align}

Applying left multiplying both equations above by $P^\top$, we obtain 

\begin{align}
    &\forall i,\ ((LP)_{i, \cdot})^\top = P^\top (L_{i, \cdot})^\top \in P^\top \sR^m_{\bigcap_{j \in S_{i, \cdot}}\hat{S}_{\cdot, \sigma(j)}} = \sR^m_{\bigcap_{j \in S_{i, \cdot}} {S}_{\cdot, j}} \label{eq:91}\\
    &\forall i,\ ((LP)_{i, \cdot})^\top = P^\top (L_{i, \cdot})^\top \in P^\top\sR^m_{\bigcap_{j \in S_{\cdot, i}}\hat{S}_{\sigma(j), \cdot}} = \sR^m_{\bigcap_{j \in S_{\cdot, i}}{S}_{j, \cdot}} = \sR^m_{\bigcap_{j \in (S^\top)_{i, \cdot}}(S^\top)_{\cdot , j}} \label{eq:92}
\end{align}
    
Let $C := LP$. Equations~\eqref{eq:91}~\&~\eqref{eq:92} imply that $C$ is $S$-consistent and $S^\top$-consistent, respectively (by Lemma~\ref{lemma:charac_S-consistent}). Since $L = CP^\top$, this completes the proof.  
\end{proof}

\begin{lemma}[$L^\top \Lambda(.)$ sparse implies $L$ sparse] \label{lemma:thm2}
Let $\Lambda: \Gamma \rightarrow \sR^{m \times n}$ with sparsity pattern $S$. Let $L \in \sR^{m \times m}$ be an invertible matrix and $\hat{S}$ be the sparsity pattern of $\hat{\Lambda} := L^\top\Lambda$. Let $\sigma$ be a permutation such that for all $i$, $L_{i, \sigma(i)} \not=0$ (Lemma~\ref{lemma:L_perm}) and let $P$ be its associated permutation matrix, i.e. $P e_i = e_{\sigma(i)}$ for all $i$. Assume that
\begin{enumerate}
    \item \textbf{[Sufficient Variability]} For all $j \in \{1, ..., n\}$, $\text{span}(\Lambda_{\cdot, j}(\Gamma)) = \sR^m_{S_{\cdot, j}}$\, . \label{ass:lem_suff}
\end{enumerate}
Then $S \subset P^\top\hat{S}$. Further assume that
\begin{enumerate}[resume]
    \item \textbf{[Sparsity]} $||\hat{S}||_0 \leq ||S||_0$ \, . \label{ass:lem_sparsity}
\end{enumerate}
Then $S = P^\top\hat{S}$ and $L = CP^\top$ where $C$ is $S$-consistent.
\end{lemma}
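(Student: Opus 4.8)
The plan is to follow the same four-step skeleton as the proof of Lemma~\ref{lemma:thm3}, but working one column of $\Lambda$ at a time, since here $L$ multiplies $\Lambda$ only on the left and $\Lambda$ need not be square (so there is no ``column'' counterpart to exploit). This makes it in fact a simpler situation than Lemma~\ref{lemma:thm3}: we only need the vector version of the sparsity lemma, Lemma~\ref{lemma:sparse_L}, rather than its matrix version. \textbf{Step 1 (forcing zeros into $L$).} Fix a column index $j \in \{1,\dots,n\}$. By the sufficient variability assumption, $\mathrm{span}(\Lambda_{\cdot,j}(\Gamma)) = \sR^m_{S_{\cdot,j}}$, so among the vectors $\{\Lambda_{\cdot,j}(\gamma)\}_{\gamma\in\Gamma}$ one can pick a basis $(a^{(i)})_{i\in S_{\cdot,j}}$ of $\sR^m_{S_{\cdot,j}}$. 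By definition of $\hat S$ as the sparsity pattern of $\hat\Lambda = L^\top\Lambda$, every $L^\top\Lambda_{\cdot,j}(\gamma)$ lies in $\sR^m_{\hat S_{\cdot,j}}$, hence so does each $L^\top a^{(i)}$. Applying Lemma~\ref{lemma:sparse_L} with $s=S_{\cdot,j}$ and $s'=\hat S_{\cdot,j}$ yields $(L_{i,\cdot})^\top \in \sR^m_{\hat S_{\cdot,j}}$ for every $i\in S_{\cdot,j}$. Re-indexing over $j$: for every $i$ and every $j \in S_{i,\cdot}$, we have $(L_{i,\cdot})^\top \in \sR^m_{\hat S_{\cdot,j}}$.

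\textbf{Steps 2--3 (pinning down the sparsity pattern).} Since $L_{i,\sigma(i)}\neq 0$, the membership just obtained forces $\hat S_{\sigma(i),j}=1$ whenever $(i,j)\in S$; equivalently $PS \subset \hat S$, i.e.\ $S \subset P^\top\hat S$, which is the first claim. Then $\|\hat S\|_0 \le \|S\|_0 = \|PS\|_0$ combined with $PS\subset\hat S$ forces $PS=\hat S$, i.e.\ $S = P^\top\hat S$, the second claim.

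\textbf{Step 4 (identifying $C$).} From Step 1, $(L_{i,\cdot})^\top \in \bigcap_{j\in S_{i,\cdot}}\sR^m_{\hat S_{\cdot,j}} = \sR^m_{\bigcap_{j\in S_{i,\cdot}}\hat S_{\cdot,j}}$. Using $\hat S = PS$, so that $\hat S_{\cdot,j} = P\,S_{\cdot,j}$, and left-multiplying by $P^\top$, we obtain $((LP)_{i,\cdot})^\top \in \sR^m_{\bigcap_{j\in S_{i,\cdot}}S_{\cdot,j}}$ for all $i$. Setting $C:=LP$, the characterization of $S$-consistency (Lemma~\ref{lemma:charac_S-consistent}, item~2) says precisely that $C$ is $S$-consistent; since $L=CP^\top$, this finishes the proof.

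\textbf{Expected difficulty.} There is no substantial obstacle here — the result is essentially a one-sided analogue of Lemma~\ref{lemma:thm3}. The only points requiring care are the index bookkeeping in Step~4 when transporting the ``$\hat S$-supported'' conclusion back through $P^\top$ to an ``$S$-supported'' one, and observing that, in contrast to Lemma~\ref{lemma:thm3} where the conjugation $L^\top\Lambda L$ produced both $S$-consistency and $S^\top$-consistency, only the left factor $L^\top$ appears in $L^\top\Lambda$, so we can only conclude $S$-consistency (there is simply no second relation to extract).
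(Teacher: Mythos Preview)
Your proposal is correct and follows essentially the same four-step argument as the paper's proof: applying Lemma~\ref{lemma:sparse_L} columnwise to force $(L_{i,\cdot})^\top\in\sR^m_{\hat S_{\cdot,j}}$ for each $j\in S_{i,\cdot}$, reading off $PS\subset\hat S$ from the nonzero diagonal $L_{i,\sigma(i)}$, upgrading to equality via the sparsity count, and then transporting the intersected support through $P^\top$ to land on the Lemma~\ref{lemma:charac_S-consistent} characterization of $S$-consistency for $C=LP$. Your closing remark explaining why only $S$-consistency (and not $S^\top$-consistency) falls out here is a correct and useful observation that the paper leaves implicit.
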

\begin{proof}
We separate the proof in four steps. The first step leverages the Assumption~\ref{ass:lem_suff} and Lemma~\ref{lemma:sparse_L} to show that $L$ must contain ``many'' zeros. The second step leverages the invertibility of $L$  to show that $PS \subset \hat{S}$. The third step uses Assumption~\ref{ass:lem_sparsity} to show this inclusion is in fact an equality and the fourth step concludes that $L$ can be written as an $S$-consistent matrix times $P^\top$.

\textbf{Step 1:} Fix $j \in \{1, ..., n\}$. By Assumption~\ref{ass:lem_suff}, there exists $(\gamma_i)_{i \in S_{\cdot, j}}$ such that $(\Lambda_{\cdot, j}(\gamma_i))_{i \in S_{\cdot, j}}$ spans $\sR^m_{S_{\cdot, j}}$. Moreover, by the definition of $\hat S$  as sparsity pattern of $L^\top\Lambda(.)$ ~(Definition~\ref{def:sparsity_pattern}), we have for all $i \in S_{\cdot, j}$
\begin{align}
    L^\top\Lambda_{\cdot, j}(\gamma_i) \in \sR^m_{\hat{S}_{\cdot, j}}\, .
\end{align}
By Lemma~\ref{lemma:sparse_L}, we must have
\begin{align}
    \forall\ i \in S_{\cdot, j},\ (L_{i, \cdot})^\top \in \sR^m_{\hat{S}_{\cdot, j}}\,.
\end{align}
Since $j$ was arbitrary, this holds for all $j$, which allows us to rewrite as
\begin{align}
    \forall (i,j) \in S,\ (L_{i, \cdot})^\top \in \sR^m_{\hat{S}_{\cdot, j}} \,. \label{eq:S_L_hatS}  
\end{align}

\textbf{Step 2:} Since $L_{i, \sigma(i)} \not= 0$ for all $i$, \eqref{eq:S_L_hatS} implies that
\begin{align}
    \forall (i,j) \in S,\ (\sigma(i), j) \in {\hat{S}_{\cdot, j}}\, ,
\end{align}
which can be rephrased as
\begin{align}
    &PS \subset \hat{S} \, . \label{eq:lemma8step2}
\end{align}
This proves the first statement of the theorem.

\textbf{Step 3:}
By Assumption~\ref{ass:lem_sparsity}, $||\hat{S}||_0 \leq ||S||_0 = ||PS||_0$, so the inclusion \eqref{eq:lemma8step2} is actually an equality
\begin{align}
    PS &= \hat{S} \, , \label{eq:lemma8step3}
\end{align}
which proves the second statement.

\textbf{Step 4:} We notice that~\eqref{eq:S_L_hatS} can be rewritten as
\begin{align}
    \forall i, \forall j \in S_{i, \cdot},\ (L_{i, \cdot})^\top \in \sR^m_{\hat{S}_{\cdot, j}}\,, 
\end{align}
which is equivalent to
\begin{align}
    \forall i,\ (L_{i, \cdot})^\top \in \bigcap_{j \in S_{i, \cdot}} \sR^m_{\hat{S}_{\cdot, j}} = \sR^m_{\bigcap_{j \in S_{i, \cdot}} \hat{S}_{\cdot,j}} \, .\label{eq:L_intersect}
\end{align}
We apply $P^\top$ on both sides of the above line and get
\begin{align}
    \forall i,\ ((LP)_{i, \cdot})^\top = P^\top (L_{i, \cdot})^\top \in P^\top\sR^m_{\bigcap_{j \in S_{i, \cdot}} \hat{S}_{\cdot,j}} = \sR^m_{\bigcap_{j \in S_{i, \cdot}} S_{\cdot,j}} \, ,
\end{align}
which means, by Lemma~\ref{lemma:charac_S-consistent}, that $LP$ is $S$-consistent. By defining $C:=LP$, we have that $L=CP^\top$, which is what we wanted to prove.
\end{proof}

\subsubsection{Invertible $S$-consistent matrices form a group under matrix multiplication} \label{sec:S_consistent_group}
The following theorem shows that, perhaps surprisingly, the set of invertible $S$-consistent matrices forms a \textit{group} under matrix multiplication, i.e. that the set is closed under multiplication and inversion. This will be very useful to show that the consistence relation over models, $\eqcon$ (Def.~\ref{def:consistent_models}), is an equivalence relation. The proof can be safely skipped at first read.

\begin{theorem} \label{thm:group_of_S_consistent}
Let $S \in \{0,1\}^{m \times n}$.
\begin{enumerate}
    \item The identity matrix $I$ is $S$-consistent;
    \item For any invertible $S$-consistent matrices $C$ and $C'$, the matrix product $C C'$ is also $S$-consistent;
    \item For any invertible $S$-consistent matrix $C$, $C^{-1}$ is also $S$-consistent.
\end{enumerate}
In other words, the set of invertible matrices that are $S$-consistent forms a group under matrix multiplication.
\end{theorem}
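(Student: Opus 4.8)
The plan is to reformulate $S$-consistency as a combinatorial triangularity condition and then run the three group axioms through it. Concretely, I would first introduce the preorder $\preceq$ on $\{1,\dots,m\}$ defined by $i \preceq j \iff S_{i,\cdot} \subseteq S_{j,\cdot}$ (the set of columns where row $i$ of $S$ is $1$ is contained in that of row $j$); this relation is reflexive and transitive. Using Lemma~\ref{lemma:charac_S-consistent}, one checks that $\bigcap_{k \in S_{i,\cdot}} S_{\cdot,k} = \{j : i \preceq j\}$ and $\bigcap_{k \in S^c_{j,\cdot}} S^c_{\cdot,k} = \{i : i \preceq j\} =: D_j$, so that a matrix $C$ is $S$-consistent if and only if $C_{i,j} \neq 0 \implies i \preceq j$, equivalently if and only if, for every $j$, the $j$-th column of $C$ lies in $\sR^m_{D_j}$.

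With this reformulation, parts 1 and 2 are immediate. For the identity, $I_{i,j} \neq 0$ forces $i = j$ and $i \preceq i$ by reflexivity. For the product, if $(CC')_{i,j} = \sum_k C_{i,k} C'_{k,j} \neq 0$, then some $k$ satisfies $C_{i,k} \neq 0$ and $C'_{k,j} \neq 0$, hence $i \preceq k$ and $k \preceq j$, hence $i \preceq j$ by transitivity; and a product of invertible matrices is invertible.

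Part 3 is the real content. The key observation I would establish is that for every down-set $W \subseteq \{1,\dots,m\}$ of the preorder $\preceq$ (i.e.\ $i \in W$ and $i' \preceq i$ imply $i' \in W$), an $S$-consistent matrix $C$ satisfies $C(\sR^m_W) \subseteq \sR^m_W$: if $v$ is supported on $W$ and $(Cv)_i \neq 0$, then some $j \in W$ has $C_{i,j} \neq 0$, so $i \preceq j \in W$, giving $i \in W$. Since $C$ is invertible, this inclusion of equal-dimensional spaces is an equality, so $C^{-1}(\sR^m_W) = \sR^m_W$ too. Now I would note that each $D_{j_0}$ is a down-set containing $j_0$, so the standard basis vector $e_{j_0} \in \sR^m_{D_{j_0}}$, hence $C^{-1} e_{j_0} \in \sR^m_{D_{j_0}}$, i.e.\ the $j_0$-th column of $C^{-1}$ is supported on $D_{j_0}$. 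As $j_0$ is arbitrary, the column characterization (Lemma~\ref{lemma:charac_S-consistent}, statement 3) shows $C^{-1}$ is $S$-consistent, completing the proof that the invertible $S$-consistent matrices form a group.

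The main obstacle is identifying the right invariant subspaces in part 3 — namely the coordinate subspaces $\sR^m_W$ indexed precisely by the down-sets of $\preceq$ (rather than by arbitrary index sets); once this is seen, the argument that invertibility promotes "maps into" to "maps onto," and the trick of testing $C^{-1}$ on the basis vectors $e_{j_0}$ with $W = D_{j_0}$, are both short. A minor point to be careful about is getting the direction of the preorder and the "down-set" condition consistent with the asymmetric definition of $S$-consistency (rows constrained by $\bigcap_k S_{\cdot,k}$, columns by $\bigcap_k S^c_{\cdot,k}$), which is exactly what Lemma~\ref{lemma:charac_S-consistent} pins down.
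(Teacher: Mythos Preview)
Your proof is correct, and the preorder reformulation is exactly right: via Lemma~\ref{lemma:charac_S-consistent}, $S$-consistency is precisely the condition $C_{i,j}\neq 0 \Rightarrow S_{i,\cdot}\subseteq S_{j,\cdot}$, i.e.\ $C$ is ``upper triangular'' with respect to the preorder $\preceq$. Parts~1 and~2 then follow from reflexivity and transitivity, which is essentially the same argument the paper gives (the paper's disjointness of the two index sets in part~2 is transitivity unpacked).

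The genuinely different part is your argument for closure under inversion. The paper proves part~3 by a hands-on Gaussian elimination: it shows that any invertible $S$-consistent $C$ can be reduced to the identity using only elementary row operations whose elementary matrices are themselves $S$-consistent (including a somewhat delicate pivoting step that finds an $S$-consistent permutation of the trailing block via the cyclic group generated by a permutation extracted from Lemma~\ref{lemma:L_perm}), and then writes $C^{-1}$ as a product of these elementary matrices, appealing to part~2. Your route is more conceptual: you observe that the coordinate subspaces indexed by down-sets of $\preceq$ are exactly the $C$-invariant coordinate subspaces, invertibility upgrades ``maps into'' to ``maps onto,'' and applying this with $W=D_{j_0}$ for each $j_0$ reads off the column support of $C^{-1}$ directly. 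This avoids the elimination bookkeeping entirely and makes transparent \emph{why} the inverse stays $S$-consistent (it is a statement about invariant subspaces of a linear isomorphism). The paper's approach, on the other hand, is more constructive and yields an explicit factorization of $C^{-1}$ into $S$-consistent elementary matrices, which could be useful if one ever needed such a decomposition.
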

\begin{proof}
First, let $i \leq m$. Notice that 
$[\mathbbm{1} - S(\mathbbm{1} - S)^\top]^+_{i,i} = [1 - S_{i, \cdot} (\mathbbm{1}_{i, \cdot} - S_{i, \cdot})^\top]^+ = 1$. Thus, $I$ is $S$-consistent.

Second, we show closure under matrix multiplication. Let $i, j$ such that $[\mathbbm{1} - S(\mathbbm{1} - S)^\top]^+_{i,j} = 0$. Consider $(CC')_{i,j} = C_{i, \cdot}C'_{\cdot, j}$. By Lemma~\ref{lemma:charac_S-consistent}, we have that
\begin{align}
    (C_{i, \cdot})^\top \in \sR^{m}_{\bigcap_{k \in S_{i, \cdot}}S_{\cdot, k}}\\
    C'_{\cdot, j} \in \sR^{m}_{\bigcap_{k \in S^c_{j, \cdot}}S^c_{\cdot, k}}
\end{align}
Notice that if the intersection $\left(\bigcap_{k \in S_{i, \cdot}}S_{\cdot, k}\right) \cap \left( \bigcap_{k \in S^c_{j, \cdot}}S^c_{\cdot, k} \right)$ is empty, the dot product $C_{i, \cdot}C'_{\cdot, j}$ is zero and the second statement of this theorem holds. By~\eqref{eq:red_star} from the proof of Lemma~\ref{lemma:charac_S-consistent}, there exists a $k$ such that $k \in S_{i, \cdot}$ and $k \in S^c_{j,\cdot}$, and, since $S_{\cdot, k} \cap S^c_{\cdot, k} = \emptyset$, the initial intersection is itself empty.

Third, we show that the inverse $C^{-1}$ is also $S$-consistent. Notice that, since $C$ is invertible, there exists a sequence of \textit{elementary row operations} that will transform $C$ into the identity. This process is sometimes called Gaussian elimination or Gauss-Jordan elimination. The elementary row operations are (i) swapping two rows, (ii) multiplying a row by a nonzero number, and (iii) adding a multiple of one row to another. These three elementary operation can be performed by left multiplying by an \textit{elementary matrix}, which have the following forms:

(i) Swapping two rows: 
\begin{align}
\begin{bmatrix}
1 &        &   &        &   &        &  \\
  & \ddots &   &        &   &        &  \\
  &        & 0 &        & 1 &        &  \\
  &        &   & \ddots &   &        &  \\
  &        & 1 &        & 0 &        &  \\
  &        &   &        &   & \ddots &  \\
  &        &   &        &   &        & 1
\end{bmatrix}\, ; \label{eq:row_swap}
\end{align}

(ii) Multiplying a row by a nonzero number: 
\begin{align}
\begin{bmatrix}
1 &        &   &        &   &        &  \\
  & \ddots &   &        &   &        &  \\
  &        & 1 &        &   &        &  \\
  &        &   & \alpha&   &        &  \\
  &        &   &        & 1 &        &  \\
  &        &   &        &   & \ddots &  \\
  &        &   &        &   &        & 1
\end{bmatrix}\, ; \label{eq:mult_row}
\end{align}

(iii) Adding a multiple of a row to another: 
\begin{align}
\begin{bmatrix}
1 &        &   &        &   &        &  \\
  & \ddots &   &        &   &        &  \\
  &        & 1 &        &   &        &  \\
  &        &   & \ddots      &   &        &  \\
  &        & \alpha &        & 1 &        &  \\
  &        &   &        &   & \ddots &  \\
  &        &   &        &   &        & 1
\end{bmatrix} \, . \label{eq:add_row}
\end{align}

We will show that it is possible to transform $C$ into the identity \emph{by using only elementary matrices that are themselves $S$-consistent}, i.e. that there exists a sequence of $S$-consistent elementary matrices $E_1$, ..., $E_p$, such that $E_p ... E_2 E_1 C = I$. Since this implies $C^{-1} = E_p ... E_2 E_1$ and all elementary matrices are $S$-consistent, $C^{-1}$ is also $S$-consistent (using closure under multiplication shown above).

We now construct the sequence of $E_i$ using standard Gaussian elimination. Start by initializing $M:= C$. Throughout the algorithm, $M$ will be gradually transformed by elementary operations that are $S$-consistent (and invertible), thus $M$ will remain $S$-consistent (and invertible). We consider every column $j = 1, ..., m$ from left to right. If $M_{j,j} = 0$, we will show that rows $j, ..., m$ can be permuted to obtain $M_{j,j} \not= 0$ using an $S$-consistent permutation, but we delay this technical step to the end of the proof to avoid breaking the flow of the exposition. For now, assume $M_{j,j} \not=0$. Rescale row $j$ so that $M_{j,j} = 1$ using matrix of the form~\eqref{eq:mult_row}, which is $S$-consistent. Then, put zeroes below $M_{j,j}$ by adding a multiple of row $j$ to each row $i > j$ such that $M_{i,j} \not= 0$. Each of these operations corresponds to an elementary matrix of the form~\eqref{eq:add_row} where the nonzero entry below the diagonal is at position $(i, j)$. Since $M$ is $S$-consistent and $M_{i, j} \not=0$, these elementary matrices must also be $S$-consistent. Once every element below $M_{j,j}$ are zero go to the next column. Do that for all columns.

At this point, $M$ is upper triangular with a diagonal filled with ones. We must now remove every nonzero elements above the diagonal by a process similar to what we just did. Start with column $j = n$ up to $j = 1$, from left to right. To remove every nonzero elements above $M_{j,j}$, we can add a multiple of row $j$ to the rows $i < j$ that have $M_{i,j} \not= 0$. This is equivalent to multiplying $M$ by an elementary matrix of the form~\eqref{eq:add_row} with its off diagonal nonzero entry by at position $(i,j)$. Again, since $M_{i,j} \not=0$ and $M$ is $S$-consistent, this elementary matrix must also be $S$-consistent. Once all elements above $M_{j,j}$ are zeros, go to the next column and repeat for every columns until column $j=1$ is reached.

At this point, $M = I$, which is what we wanted to show.

We now have to show what to do when $M_{j,j} \not=0$. We know that $M$ has the following form
\begin{align}
    M = \begin{bmatrix}
    U & A \\
    0 & B 
    \end{bmatrix} \, ,
\end{align}
where $U \in \sR^{(j - 1) \times (j - 1)}$ is an upper triangular matrix with only ones on its diagonal and $B$ is a square matrix with $B_{1,1} = 0$. Since $M$ is invertible, $B$ is invertible too (otherwise, $\det(M) = \det(U)\det(B) = 0)$. Thus, by Lemma~\ref{lemma:L_perm}, there exists a permutation $\sigma$ such that for all $i$, $B_{\sigma(i), i} \not= 0$. Consider its corresponding permutation matrix $P := [e_{\sigma(1)} \cdots e_{\sigma(m - j + 1)}]$. Notice that the matrix
\begin{align}
\begin{bmatrix}
I_{j-1} & 0 \\
0       & P 
\end{bmatrix}
\end{align}
is $S$-consistent, since otherwise $M$ is not. We know that the \textit{cyclic group} $\{P^k \mid k \in \sZ \}$ forms a subgroup of the group of permutations, and thus has finite order. Thus, there exists $\ell \in \sN$ such that $P^\ell = I$, and thus $P^{-1} = P^{\ell - 1}$~\citep[Section 2.4]{artin2013algebra}. Recall $P^{-1} = P^\top$ since $P$ is a permutation. This means
\begin{align}
\begin{bmatrix}
I_{j-1} & 0 \\
0       & P^\top 
\end{bmatrix} \label{eq:good_perm}
\end{align}
is $S$-consistent, since it is a product of $S$-consistent matrices. Notice how $(P^\top B)_{i,i} = e_{\sigma(i)}^\top B_{\cdot, i} = B_{\sigma(i), i} \not= 0$. In particular $(P^\top B)_{1,1} \not= 0$. We can thus update $M$ by applying matrix~\eqref{eq:good_perm} to it to get a nonzero entry at $(j,j)$:
\begin{align}
M \leftarrow 
\underbrace{
\begin{bmatrix}
    U & A \\
    0 & P^\top B 
\end{bmatrix}}_{\substack{\text{Still $S$-consistent} \\ \text{+ entry ($j$,$j$) nonzero}}} =
\underbrace{
    \begin{bmatrix}
I_{j-1} & 0 \\
0       & P^\top 
\end{bmatrix}}_\text{$S$-consistent}
\underbrace{
\begin{bmatrix}
    U & A \\
    0 & B 
    \end{bmatrix}}_{M} \, ,
\end{align}
which completes the proof.
\end{proof}

\subsubsection{The consistency relation (Def.~\ref{def:consistent_models}) is an equivalence relation} \label{sec:proof_consistence_equivalence}
We start by showing a fact that will be useful to show that $\eqcon$ is an equivalence relation.
\begin{lemma}\label{lemma:pre_eq}
Let $S \in \{0,1\}^{m \times n}$. 
\begin{enumerate}
    \item A matrix $C \in \sR^{m \times m}$ is $S$-consistent if and only if $C$ is $SP$-consistent, where $P$ is an $n \times n$ permutation matrix.
    \item A matrix $C \in \sR^{m \times m}$ is $S$-consistent if and only if $PCP^\top$ is $PS$-consistent, where $P$ is a $m \times m$ permutation matrix.
    \item When m = n, a matrix $C \in \sR^{m \times m}$ is $S$-consistent if and only if $PCP^\top$ is $PSP^\top$-consistent, where $P$ is a $m \times m$ permutation matrix.
\end{enumerate}
\end{lemma}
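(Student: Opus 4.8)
Lemma~\ref{lemma:pre_eq} has three parts, all asserting that $S$-consistency behaves well under permutations: (1) right-multiplying $S$ by a column permutation $P$ leaves the set of $S$-consistent matrices unchanged; (2) conjugating $C$ by a row permutation $P$ corresponds to left-multiplying $S$ by $P$; (3) the combination, when $m=n$, conjugating both $C$ and $S$ by the same $P$.

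**Plan.** The key is the characterization of $S$-consistency via the binary matrix $[\mathbbm{1} - S(\mathbbm{1} - S)^\top]^+$ (Def.~\ref{def:S-consistent}): $C$ is $S$-consistent iff $C \in \sR^{m\times m}_{[\mathbbm{1} - S(\mathbbm{1} - S)^\top]^+}$. So each part reduces to computing how this binary matrix transforms under the given operation on $S$.

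For part (1), I would just compute directly. If $P$ is an $n\times n$ permutation matrix, then $SP(\mathbbm{1} - SP)^\top = SP(\mathbbm{1}P - SP)^\top = SP(( \mathbbm{1} - S)P)^\top = SPP^\top(\mathbbm{1}-S)^\top = S(\mathbbm{1}-S)^\top$, using $\mathbbm{1}P = \mathbbm{1}$ (permuting columns of the all-ones matrix does nothing) and $PP^\top = I$. Hence $[\mathbbm{1} - SP(\mathbbm{1}-SP)^\top]^+ = [\mathbbm{1} - S(\mathbbm{1}-S)^\top]^+$, so the $S$-consistent and $SP$-consistent matrices are literally the same set.

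For part (2), with $P$ an $m\times m$ permutation: $PS(\mathbbm{1} - PS)^\top = PS(\mathbbm{1} - PS)^\top$. Here I write $\mathbbm{1} = P\mathbbm{1}$ (permuting rows of all-ones does nothing), so $\mathbbm{1} - PS = P(\mathbbm{1} - S)$, giving $PS(\mathbbm{1}-PS)^\top = PS(\mathbbm{1}-S)^\top P^\top$, and therefore $\mathbbm{1} - PS(\mathbbm{1}-PS)^\top = P\mathbbm{1}P^\top - PS(\mathbbm{1}-S)^\top P^\top = P[\mathbbm{1} - S(\mathbbm{1}-S)^\top]P^\top$. Since applying $[\cdot]^+$ commutes with the conjugation by a permutation (it acts entrywise and permutation conjugation just relabels entries), $[\mathbbm{1} - PS(\mathbbm{1}-PS)^\top]^+ = P[\mathbbm{1}-S(\mathbbm{1}-S)^\top]^+P^\top$. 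Finally, $C \in \sR^{m\times m}_{B}$ iff $PCP^\top \in \sR^{m\times m}_{PBP^\top}$ (just a relabeling of indices), so $C$ is $S$-consistent iff $PCP^\top$ is $PS$-consistent. Part (3) follows by composing: $C$ is $S$-consistent iff ($S = SP^\top \cdot P$, and $SP^\top$ is a valid binary matrix) by part (1) $C$ is $SP^\top$-consistent, iff by part (2) $PCP^\top$ is $PSP^\top$-consistent.

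**Main obstacle.** There is no serious obstacle here — it is a routine verification. The only mild care needed is (a) being precise that $[\cdot]^+$ and the subspace membership $\sR^{m\times m}_B$ both commute with conjugation/relabeling by permutation matrices, and (b) for part (1) noting the subtlety that $SP$ has size $m\times n$ still, so part (1) is genuinely a statement that changes nothing, whereas parts (2),(3) change the representative $S$. I would state these as one-line remarks and then present the three short computations above.
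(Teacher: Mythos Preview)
Your proposal is correct and follows essentially the same approach as the paper's proof: both compute directly how the binary matrix $[\mathbbm{1} - S(\mathbbm{1}-S)^\top]^+$ transforms under the relevant permutation (unchanged for part~1, conjugated by $P$ for part~2), using $\mathbbm{1}P=\mathbbm{1}$, $P\mathbbm{1}=\mathbbm{1}$, and $PP^\top=I$, and both obtain part~3 by combining the first two. Your exposition is in fact slightly more explicit than the paper's about the intermediate steps and about why $[\cdot]^+$ and the subspace membership commute with permutation conjugation.
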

\begin{proof}
To show the first statement, we simply have to notice that
\begin{align}
    [\mathbbm{1} - SP(\mathbbm{1} - SP)^\top]^+ &= [\mathbbm{1} - SPP^\top(\mathbbm{1} - S)^\top]^+ \\
    &= [\mathbbm{1} - S(\mathbbm{1} - S)^\top]^+
\end{align}

To show the second statement, we start with
\begin{align}
    C &\in \sR^{m\times m}_{[\mathbbm{1} - S(\mathbbm{1} - S)^\top]^+} \\
    \iff\ PCP^\top &\in P\sR^{m\times m}_{[\mathbbm{1} - S(\mathbbm{1} - S)^\top]^+}P^\top \\
    &= \sR^{m\times m}_{P[\mathbbm{1} - S(\mathbbm{1} - S)^\top]^+P^\top}\\
    &= \sR^{m\times m}_{[\mathbbm{1} - PS(\mathbbm{1} - PS)^\top]^+} \, .
\end{align}

The third statement, is a combination of the first two.
\end{proof}

\begin{proposition}\label{prop:consistence_equivalence}
The consistency relation, $\eqcon$ (Def.~\ref{def:consistent_models}), is an equivalence relation.
\end{proposition}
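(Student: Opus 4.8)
The plan is to verify the three defining properties of an equivalence relation for $\eqcon$ (Def.~\ref{def:consistent_models}), building on three ingredients: that linear equivalence $\eqlin$ (Def.~\ref{def:linear_eq}) is itself an equivalence relation; that the invertible $S$-consistent matrices form a group under matrix multiplication (Thm.~\ref{thm:group_of_S_consistent}); and that $S$-consistency transforms predictably under permutation conjugation (Lemma~\ref{lemma:pre_eq}). I would first record the (routine) verification that $\eqlin$ is reflexive via $(L,b,c) = (I,0,0)$, symmetric via $(L,b,c) \mapsto (L^{-1}, -L^{-1}b, -L^{-\top}c)$, and transitive by composing the two affine relations so that $(L_1, L_2)$ compose to $L_1 L_2$. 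I would also note that whenever $\theta \eqlin \tilde\theta$ holds with $L = CP^\top$, the factor $C = LP$ is invertible (since $L$ and $P$ are), so Thm.~\ref{thm:group_of_S_consistent} always applies to it. \textbf{Reflexivity} of $\eqcon$ is then immediate: take $P = I$ and $C = I$; condition~1 of Def.~\ref{def:consistent_models} is trivial, and condition~2 holds because $\theta \eqlin \theta$ with $L = I = I\cdot I^\top$ and $I$ is $S$-consistent for every $S$ (first item of Thm.~\ref{thm:group_of_S_consistent}).

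\textbf{Symmetry.} Suppose $\theta \eqcon \tilde\theta$ is witnessed by a permutation $P$ and matrix $C$, so $G^z = P^\top \tilde G^z P$, $G^a = P^\top \tilde G^a$, and $\theta \eqlin \tilde\theta$ with $L = CP^\top$. I claim $\tilde\theta \eqcon \theta$ is witnessed by $Q := P^\top$ and $\tilde C := P C^{-1} P^\top$. The graph conditions hold since $Q^\top G^z Q = P G^z P^\top = \tilde G^z$ and $Q^\top G^a = P G^a = \tilde G^a$. For the linear condition, symmetry of $\eqlin$ gives $\tilde\theta \eqlin \theta$ with linear map $L^{-1} = (CP^\top)^{-1} = P C^{-1} = \tilde C\, Q^\top$. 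Finally, by Thm.~\ref{thm:group_of_S_consistent} the matrix $C^{-1}$ is $G^z$-, $(G^z)^\top$- and $G^a$-consistent; Lemma~\ref{lemma:pre_eq}(3) upgrades the first two into $\tilde G^z$- and $(\tilde G^z)^\top$-consistency of $\tilde C = P C^{-1} P^\top$ (using $P G^z P^\top = \tilde G^z$), and Lemma~\ref{lemma:pre_eq}(2) upgrades the third into $\tilde G^a$-consistency of $\tilde C$ (using $P G^a = \tilde G^a$).

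\textbf{Transitivity.} Suppose $\theta \eqcon \tilde\theta$ via $(P_1, C_1)$ and $\tilde\theta \eqcon \hat\theta$ via $(P_2, C_2)$. I claim $\theta \eqcon \hat\theta$ is witnessed by $P := P_2 P_1$ and $C := C_1 (P_1^\top C_2 P_1)$. Substituting the two sets of graph identities gives $G^z = P_1^\top \tilde G^z P_1 = P_1^\top P_2^\top \hat G^z P_2 P_1 = P^\top \hat G^z P$, and similarly $G^a = P_1^\top P_2^\top \hat G^a = P^\top \hat G^a$. For the linear condition, transitivity of $\eqlin$ gives $\theta \eqlin \hat\theta$ with $L = (C_1 P_1^\top)(C_2 P_2^\top) = C_1 (P_1^\top C_2 P_1)(P_2 P_1)^\top = C P^\top$. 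It remains to check the consistencies of $C$: Lemma~\ref{lemma:pre_eq}(3) with permutation $P_1^\top$ turns $\tilde G^z$- and $(\tilde G^z)^\top$-consistency of $C_2$ into $G^z$- and $(G^z)^\top$-consistency of $P_1^\top C_2 P_1$ (using $P_1^\top \tilde G^z P_1 = G^z$), and Lemma~\ref{lemma:pre_eq}(2) turns $\tilde G^a$-consistency of $C_2$ into $G^a$-consistency of $P_1^\top C_2 P_1$ (using $P_1^\top \tilde G^a = G^a$). Since $C_1$ already has all three consistencies and $P_1^\top C_2 P_1$ is invertible, closure under multiplication (second item of Thm.~\ref{thm:group_of_S_consistent}, applied once per consistency type) shows that $C = C_1 (P_1^\top C_2 P_1)$ is $G^z$-, $(G^z)^\top$- and $G^a$-consistent.

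\textbf{Main obstacle.} There is no deep difficulty here; the only real work is bookkeeping: tracking how the reference binary matrices transform under each permutation conjugation (e.g. making sure one gets $P_1^\top \tilde G^z P_1 = G^z$ and not its transpose or inverse), invoking the correct case of Lemma~\ref{lemma:pre_eq} for the square blocks $G^z, (G^z)^\top$ (case~3) versus the rectangular block $G^a$ (case~2), and confirming that invertibility is available at each appeal to Thm.~\ref{thm:group_of_S_consistent}. Once the compositional structure of $\eqlin$, the group of invertible $S$-consistent matrices, and Lemma~\ref{lemma:pre_eq} are in place, the argument is essentially mechanical.
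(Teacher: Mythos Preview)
Your proposal is correct and follows essentially the same approach as the paper's proof: both establish reflexivity via $L=I$, symmetry via $\tilde C = P C^{-1} P^\top$ with $\tilde P = P^\top$, and transitivity by composing the linear maps and invoking Thm.~\ref{thm:group_of_S_consistent} plus Lemma~\ref{lemma:pre_eq} to transport the consistency properties through the permutations. Your transitivity step is in fact slightly more streamlined than the paper's: you conjugate $C_2$ directly by $P_1^\top$ to land in the $G$-frame, whereas the paper first passes through the $\hat G$-frame via $\hat C = P_2 C_2 P_2^\top$ and then conjugates by $P^\top = (P_2 P_1)^\top$, but since $(P_2 P_1)^\top \hat C (P_2 P_1) = P_1^\top C_2 P_1$ these are literally the same matrix and the two routes coincide.
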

\begin{proof}
First, recall the fact that an intersection of subgroups is a subgroup. This means that, the set of invertible matrices that are $G^z$-consistent, $(G^z)^\top$-consistent and $G^a$-consistent is a group, and thus is closed under matrix multiplication and inversion.

\textbf{Reflexivity.} It is easy to see that $\theta \eqcon \theta$, by simply setting $L:= I$.

\textbf{Symmetry.} Assume $\theta \eqcon \tilde \theta$. Hence, we have $G^z =  P^\top \tilde G^z P$ and $G^a = P^\top \tilde G^a$ as well as
\begin{align}
    \bfT(\bff^{-1}(x)) = CP^\top \bfT(\tilde\bff^{-1}(x)) + b\, , \text{and}\\
    PC^\top \bflambda(\bff^{-1}(x^{<t}), a^{<t}) + c = \tilde{\bflambda}(\tilde{\bff}^{-1}(x^{<t}), a^{<t})\, .
\end{align}
where the matrix $C$ is $G^z\text{-consistent}$, $(G^z)^\top$-consistent and $G^a$-consistent.

In order to show symmetry, we just need to show that the inverse of $CP^\top$ can be written as $\tilde C \tilde P^\top$ where $\tilde P$ is some permutation and $\tilde C$ is $\tilde G^z$-consistent, $(\tilde G^z)^\top$-consistent and $\tilde G^a$-consistent. Notice that $(CP^\top)^{-1} = PC^{-1}$ and that $C^{-1}$ is consistent to $G^z$, $(G^z)^\top$ and $G^a$ by closure under inversion. Thus, by Lemma~\ref{lemma:pre_eq}, we have that $\tilde C := PC^{-1}P^\top$ is $\tilde G^z$-consistent, $(\tilde G^z)^\top$-consistent, $\tilde G^a$-consistent. Hence
\begin{align}
    (CP^\top)^{-1} = PC^{-1} = \underbrace{PC^{-1}P^\top}_{\tilde C} \underbrace{P}_{\tilde P^\top} = \tilde C \tilde{P}^\top\, .
\end{align}

\textbf{Transitivity.} Suppose $\theta \eqcon \tilde \theta$ and $\tilde\theta \eqcon \hat\theta$. This means
\begin{align}
    G^z =  P_1^\top \tilde G^z P_1\ \text{and}\ G^a = P_1^\top \tilde G^a\,, \label{eq:G_rel_1} \\
    \bfT(\bff^{-1}(x)) = C_1P_1^\top \bfT(\tilde\bff^{-1}(x)) + b_1\, , \text{and} \label{eq:rep_rel_1}\\
    P_1C_1^\top \bflambda(\bff^{-1}(x^{<t}), a^{<t}) + c_1 = \tilde{\bflambda}(\tilde{\bff}^{-1}(x^{<t}), a^{<t})\, , \label{eq:trans_rel_1}
\end{align}
where $C_1$ is consistent to $G^z$, $(G^z)^\top$ and $G^a$; and
\begin{align}
    \tilde G^z =  P_2^\top \hat G^z P_2\ \text{and}\ \tilde G^a = P_2^\top \hat G^a\, \label{eq:G_rel_2},\\
    \bfT(\tilde\bff^{-1}(x)) = C_2P_2^\top \bfT(\hat\bff^{-1}(x)) + b_2\, , \text{and} \label{eq:rep_rel_2}\\
    P_2C_2^\top \tilde\bflambda(\bff^{-1}(x^{<t}), a^{<t}) + c_2 = \hat{\bflambda}(\hat{\bff}^{-1}(x^{<t}), a^{<t})\, , \label{eq:trans_rel_2}
\end{align}
where $C_2$ is consistent to $\tilde G^z$, $(\tilde G^z)^\top$ and $\tilde G^a$.

To show that $\theta \eqcon \hat\theta$, we first combine~\eqref{eq:G_rel_1} with~\eqref{eq:G_rel_2} to get
\begin{align}
    G^z = \underbrace{P_1^\top P_2^\top}_{P^\top} \hat G^z \underbrace{P_2 P_1}_{P}\ \text{and}\ G^a = \underbrace{P_1^\top P_2^\top}_{P^\top} \hat G^a \, . \label{eq:G_hatG}
\end{align}
Moreover, we can combine~\eqref{eq:rep_rel_1} with~\eqref{eq:rep_rel_2} to get
\begin{align}
    \bfT(\bff^{-1}(x)) &= C_1P_1^\top (C_2P_2^\top \bfT(\hat\bff^{-1}(x)) + b_2) + b_1 \\
    &= C_1P_1^\top C_2P_2^\top \bfT(\hat\bff^{-1}(x)) +  (C_1P_1^\top b_2 + b_1)\, ,
\end{align}
and the same can be done for~\eqref{eq:trans_rel_1} and \eqref{eq:trans_rel_2}. We must now show that $C_1P_1^\top C_2P_2^\top = CP^\top$ where $C$ is some matrix consistent to $G^z$, $(G^z)^\top$ and $G^a$~(Def.~\ref{def:S-consistent}). Notice that
\begin{align}
    C_1 P_1^\top C_2 P_2^\top &= C_1 P_1^\top P_2^\top \underbrace{(P_2 C_2 P_2^\top)}_{\hat C} \, , 
\end{align}
where $\hat C := P_2 C_2 P_2^\top$ is consistent to $\hat G^z$, $(\hat G^z)^\top$ and $\hat G^a$, by Lemma~\ref{lemma:pre_eq}. We can further write
\begin{align}
     C_1 P_1^\top C_2 P_2^\top &= C_1 P_1^\top P_2^\top {\hat C}\\
     &= C_1 \underbrace{(P_1^\top P_2^\top {\tilde C} P_2 P_1)}_{C'} \underbrace{P_1^\top P_2^\top}_{P^\top} \,,
\end{align}
where $C' := P_1^\top P_2^\top {\tilde C} P_2 P_1$ is consistent to $G^z$, $(G^z)^\top$ and $G^a$, by Lemma~\ref{lemma:pre_eq} and~\eqref{eq:G_hatG}. Since $C_1$ is also consistent to $G^z$, $(G^z)^\top$ and $G^a$, the product $C:=C_1C'$ also is, because of closure under multiplication~(Thm.~\ref{thm:group_of_S_consistent}). This concludes the proof that $\theta \eqcon \hat \theta$.
\end{proof}

\subsubsection{Proof of Theorem~\ref{thm:combined}} \label{sec:combine_thm2_thm3}

Finally, we can prove Thm.~\ref{thm:combined}. Note that its proof reuses many arguments initially introduced by~\citet{lachapelle2022disentanglement}. In fact, the statement of Thm.~\ref{thm:combined} is identical to Thm.~5 of \citet{lachapelle2022disentanglement} except for (i) the absence of the graphical criterion~(Def.~\ref{def:graph_crit}) and (ii) the conclusion, which is $\theta \eqcon \hat \theta$ instead of $\theta \eqperm \hat \theta$. App.~\ref{sec:connect_lachapelle2022} shows how Thm.~\ref{thm:combined} can be seen as a generalization of Thm.~5 from~\citet{lachapelle2022disentanglement}.

\begingroup
\def\thetheorem{\ref{thm:combined}}
\begin{theorem}[Disentanglement via mechanism sparsity]
Suppose we have two models as described in Sec.~\ref{sec:model} with parameters $\theta = (\bff, \bflambda, G)$ and $\hat{\theta} = (\hat{\bff}, \hat{\bflambda}, \hat{G})$ representing the same distribution, i.e. $\sP_{X^{\leq T} \mid a ; \theta} = \sP_{X^{\leq T} \mid a ; \hat{\theta}}$ for all $a\in \gA^T$. Suppose the assumptions of Thm.~\ref{thm:linear} hold and that
\begin{enumerate}
    \item The sufficient statistic $\bfT$ is $d_z$-dimensional ($k=1$) and is a diffeomorphism from $\gZ$ to $\bfT(\gZ)$. \label{ass:diffeomorphism}
    \item \textbf{[Sufficient time-variability]} There exist $\{(z_{(p)}, a_{(p)}, \tau_{(p)})\}_{p=1}^{||G^z||_0}$ belonging to their respective support such that \label{ass:temporal_suff_var_combined}
    \begin{align}
        \vecspan \left\{D^{\tau_{(p)}}_{z}\bflambda(z_{(p)}, a_{(p)}) D_z\bfT(z^{\tau_{(p)}}_{{(p)}})^{-1}\right\}_{p=1}^{||G^z||_0} = \sR^{d_z\times d_z}_{G^z} \, , \nonumber
    \end{align}
     where $D^{\tau_{(p)}}_{z}$ and $D_z$ are the Jacobian operators with respect to $z^{\tau_{(p)}}$ and $z$, respectively.
\end{enumerate}
Then, there exists a permutation matrix  $P$ such that $G^z \subset P^\top\hat{G}^z P$. Further assume that
\begin{enumerate}[resume]
    \item \textbf{[Sufficient action-variability]}
    For all $\ell \in \{1, ..., d_a\}$, there exist $\{(z_{(p)}, a_{(p)}, \epsilon_{(p)}, \tau_{(p)})\}_{p=1}^{|{\bf Ch}^a_\ell|}$ belonging to their respective support such that \label{ass:action_suff_var_combined}
    \begin{align}
        \vecspan \left\{\Delta_\ell^{\tau_{(p)}} \lambda(z_{(p)}, a_{(p)}, \epsilon_{(p)}) \right)_{p=1}^{|{\bf Ch}^a_\ell|} 
        =\sR^{d_z}_{{\bf Ch}^a_\ell} \, , \nonumber
    \end{align}
     where ${\bf Ch}^a_\ell$ is the set of children of $a_\ell$ and $\Delta_\ell^{\tau} \bflambda(z^{<t}, a^{<t}, \epsilon)$ is a partial difference defined by
    \begin{align}
    \Delta_\ell^{\tau} \bflambda(z^{<t}, a^{<t}, \epsilon) := \bflambda(z^{<t}, a^{<t} + \epsilon E_{\ell, \tau}) - \bflambda(z^{<t}, a^{<t})\, , \label{eq:partial_difference}
\end{align}
where $\epsilon \in \sR$ and $E_{\ell, \tau} \in \sR^{d_a \times t}$ is the one-hot matrix with the entry $(\ell, \tau)$ set to one. Thus,~\eqref{eq:partial_difference} is the discrete analog of a partial derivative w.r.t. $a^{\tau}_\ell$.
\end{enumerate}
Then $G^a \subset P^\top\hat{G}^a$. Further assume that
\begin{enumerate}[resume]
    \item \textbf{[Sparsity]} $||\hat{G}||_0 \leq ||G||_0$. \label{ass:combined_sparse}
\end{enumerate}
Then, $\hat \theta$ is consistent with $\theta$, i.e. $\theta \eqcon \hat\theta$ (Def.~\ref{def:consistent_models}).
\end{theorem}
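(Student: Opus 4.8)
The plan is to apply linear identifiability (Thm.~\ref{thm:linear}), convert it into the two master equations of Lemma~\ref{lemma:master_equation}, and then invoke the abstract sparsity lemmas (Lemmas~\ref{lemma:thm3} and~\ref{lemma:thm2}) twice with a single common permutation, finishing with a cardinality argument.

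First I would observe that the assumptions of Thm.~\ref{thm:linear} give $\theta \eqlin \hat\theta$, hence an invertible $L$ (with vectors $b,c$) satisfying the two clauses of Def.~\ref{def:linear_eq}; since $k=1$, Lemma~\ref{lemma:master_equation} then yields, pointwise on the relevant supports,
\begin{align*}
    L^\top \Lambda^{(1)}(\gamma) L = \hat\Lambda^{(1)}(\gamma)\, , \qquad L^\top \Lambda^{(2)}(\gamma') = \hat\Lambda^{(2)}(\gamma')\, ,
\end{align*}
where $\Lambda^{(1)}(\gamma) := D_z^\tau\bflambda(z^{<t},a^{<t})D\bfT(z^\tau)^{-1}$ and $\hat\Lambda^{(1)}(\gamma) := D_z^\tau\hat\bflambda(\bfv(z^{<t}),a^{<t})D\bfT(\bfv(z^\tau))^{-1}$ with $\gamma=(z^{<t},a^{<t},\tau)$, and $\Lambda^{(2)}(\gamma') := \Delta^\tau\bflambda(z^{<t},a^{<t},\vec\epsilon)$, $\hat\Lambda^{(2)}(\gamma') := \Delta^\tau\hat\bflambda(\bfv(z^{<t}),a^{<t},\vec\epsilon)$ with $\gamma'=(z^{<t},a^{<t},\vec\epsilon,\tau)$. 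Next I would pin down sparsity patterns. Since $k=1$, each $\bfT_i$ depends on $z_i$ only and $\bfT$ is a diffeomorphism, the Jacobians $D\bfT(\cdot)$ and $D\bfT(\bfv(\cdot))$ are diagonal and invertible, so right-multiplying by their inverses merely rescales columns; thus $\Lambda^{(1)}$ and $\hat\Lambda^{(1)}$ have the same sparsity patterns as $D_z^\tau\bflambda$ and $D_z^\tau\hat\bflambda$, which by the $G^z$- (resp. $\hat G^z$-) masking of $\bflambda_i$ (resp. $\hat\bflambda_i$) in~\eqref{eq:z_transition} lie in $\sR^{d_z\times d_z}_{G^z}$ (resp. $\sR^{d_z\times d_z}_{\hat G^z}$). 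The sufficient time-variability assumption states $\vecspan(\Lambda^{(1)}(\Gamma)) = \sR^{d_z\times d_z}_{G^z}$, so the sparsity pattern of $\Lambda^{(1)}$ is exactly $S^{(1)} = G^z$, whereas for $\hat\Lambda^{(1)}$ we only obtain $\hat S^{(1)} \subset \hat G^z$. Analogously, using the $G^a$- (resp. $\hat G^a$-) masking and applying the sufficient action-variability assumption column by column, $\Lambda^{(2)}$ has sparsity pattern $S^{(2)} = G^a$ and $\hat\Lambda^{(2)}$ has $\hat S^{(2)} \subset \hat G^a$.

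Then I would fix one permutation $\sigma$ with $L_{i,\sigma(i)}\ne 0$ for all $i$ (Lemma~\ref{lemma:L_perm}), with associated matrix $P$, and use this same $P$ in both lemmas. The first conclusion of Lemma~\ref{lemma:thm3} applied to $\Lambda^{(1)}$ (whose sufficient-variability hypothesis is precisely Assumption~\ref{ass:temporal_suff_var_combined}) gives $G^z = S^{(1)} \subset P^\top\hat S^{(1)} P \subset P^\top\hat G^z P$; the first conclusion of Lemma~\ref{lemma:thm2} applied to $\Lambda^{(2)}$ gives $G^a = S^{(2)} \subset P^\top\hat S^{(2)} \subset P^\top\hat G^a$, which are the two intermediate claims of the theorem. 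Bringing in Assumption~\ref{ass:combined_sparse}, $||\hat G^z||_0 + ||\hat G^a||_0 = ||\hat G||_0 \le ||G||_0 = ||G^z||_0 + ||G^a||_0$, while the inclusions give $||G^z||_0 \le ||\hat S^{(1)}||_0 \le ||\hat G^z||_0$ and $||G^a||_0 \le ||\hat S^{(2)}||_0 \le ||\hat G^a||_0$; summing and comparing forces all four inequalities to be equalities, so $||\hat S^{(1)}||_0 = ||S^{(1)}||_0$, $||\hat S^{(2)}||_0 = ||S^{(2)}||_0$, and the set inclusions collapse to $G^z = P^\top\hat G^z P$ and $G^a = P^\top\hat G^a$ — the first clause of Def.~\ref{def:consistent_models}.

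Finally, with the sparsity hypotheses of Lemmas~\ref{lemma:thm3} and~\ref{lemma:thm2} now verified, their second conclusions give $L = CP^\top$ with $C := LP$ simultaneously $G^z$-consistent and $(G^z)^\top$-consistent (from Lemma~\ref{lemma:thm3}) and $G^a$-consistent (from Lemma~\ref{lemma:thm2}); together with the graph relations just obtained and $\theta\eqlin\hat\theta$, this is exactly Def.~\ref{def:consistent_models}, so $\theta\eqcon\hat\theta$. The hard part will be the bookkeeping in the middle steps: ensuring the single permutation $P$ extracted from $L$ is reused in both invocations of the sparsity lemmas so the two statements about $C$ can be merged, and keeping straight that we have equality $S^{(1)}=G^z$ but only inclusion $\hat S^{(1)}\subset\hat G^z$ (sufficient variability is assumed for the ground-truth mechanism, not the learned one), which is precisely what makes the cardinality sandwich valid. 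The group property of invertible $S$-consistent matrices (Thm.~\ref{thm:group_of_S_consistent}) is not needed for this proof itself but is what guarantees $\eqcon$ is a genuine equivalence relation.
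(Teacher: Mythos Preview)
Your proposal is correct and follows essentially the same route as the paper's proof: obtain linear identifiability, pass to the two master equations of Lemma~\ref{lemma:master_equation}, identify the sparsity patterns $S^{(1)}=G^z$, $S^{(2)}=G^a$, $\hat S^{(1)}\subset\hat G^z$, $\hat S^{(2)}\subset\hat G^a$, apply the first conclusions of Lemmas~\ref{lemma:thm3} and~\ref{lemma:thm2} with the common permutation from Lemma~\ref{lemma:L_perm}, run the cardinality sandwich to turn all inclusions into equalities, and finish with the second conclusions of those lemmas. Your explicit remarks that $D\bfT$ is diagonal (hence preserves sparsity patterns under right multiplication) and that the same $P$ must be reused in both lemma invocations are exactly the bookkeeping points the paper relies on.
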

\addtocounter{theorem}{-1}
\endgroup
\begin{proof}
First of all, since the assumptions of Thm.~\ref{thm:linear} hold, we have that $\theta$ and $\hat{\theta}$ are linearly equivalent. Since $k=1$ (assumption~\ref{ass:diffeomorphism}), we can apply Lemma~\ref{lemma:master_equation} to obtain the following equations:
\begin{align}
    L^\top \underbrace{D_z^\tau\bflambda(z^{<t}, a^{<t})D\bfT(z^\tau)^{-1}}_{\Lambda^{(1)}(\gamma)}L  
    &= \underbrace{D_z^\tau\hat{\bflambda}(\bfv(z^{<t}), a^{<t})D\bfT(\bfv(z^\tau))^{-1}}_{\hat{\Lambda}^{(1)}(\gamma)} \text{, and} \\
    L^\top \underbrace{\Delta^{\tau} \bflambda(z^{<t}, a^{<t}, \vec{\epsilon})}_{\Lambda^{(2)}(\gamma)}
    &= \underbrace{\Delta^{\tau} \hat{\bflambda}(\bfv(z^{<t}), a^{t\shortminus 1}, \vec{\epsilon})}_{\hat{\Lambda}^{(2)}(\gamma)}\, ,
\end{align}
where we use the labelling of Sec.~\ref{sec:insight} with $\Lambda$ functions. Let us introduce $S^{(1)}, \hat{S}^{(1)}, S^{(2)}$ and $\hat{S}^{(2)}$, the sparsity patterns of $\Lambda^{(1)},\hat\Lambda^{(1)}, \Lambda^{(2)}$ and $\hat\Lambda^{(2)}$, respectively.
As was hinted at in Sec.~\ref{sec:insight}, the relationship between the sparsity patterns and the graphs is
\begin{align}
    S^{(1)} &\subset G^z,\ \ \ \hat{S}^{(1)} \subset \hat G^z \, , \label{eq:S_G_1}\\
    S^{(2)} &\subset G^a, \ \ \ \hat{S}^{(2)} \subset \hat G^a \, . \label{eq:S_G_2}
\end{align}
Because of assumptions~\ref{ass:temporal_suff_var_combined}~\&~\ref{ass:action_suff_var_combined}, we must have that
\begin{align}
    S^{(1)} &= G^z\, , \label{eq:S_eq_G} \\
    S^{(2)} &= G^a. \nonumber
\end{align}

Notice how assumption~\ref{ass:temporal_suff_var_combined} corresponds to assumption~\ref{ass:lem_suff_double} of Lemma~\ref{lemma:thm3} and how assumption~\ref{ass:action_suff_var_combined} corresponds to assumption~\ref{ass:lem_suff} of Lemma~\ref{lemma:thm2}. This means we can obtain the first conclusion of both Lemmas~\ref{lemma:thm3}~\&~\ref{lemma:thm2}, i.e. that
\begin{align}
    S^{(1)} \subset P^\top\hat S^{(1)}P\, , \text{and}\ S^{(2)} \subset P^\top\hat S^{(2)} \,,
\end{align}
which implies 
\begin{align}
    ||S^{(1)}||_0 \leq ||\hat S^{(1)}||_0\ \text{and}\ ||S^{(2)}||_0 \leq ||\hat S^{(2)}||_0 \label{eq:S_leq_hatS}
\end{align}

All the above together with the sparsity assumption ($||\hat G||_0 \leq ||G||_0$) allows to write
\begin{align}
    ||\hat{S}^{(1)}||_0 + ||\hat{S}^{(2)}||_0 &\leq ||\hat{G}^z||_0 + ||\hat{S}^{(2)}||_0  && \text{[By \eqref{eq:S_G_1}]} \label{eq:ineq_1}\\
    &\leq ||\hat{G}^z||_0 +||\hat{G}^a||_0 && \text{[By~\eqref{eq:S_G_2}]}\\
    &= || \hat G ||_0\\
    &\leq ||G||_0 && \text{[By assumption~\ref{ass:combined_sparse} (Sparsity)]}\\
    &= ||G^z||_0 + ||G^a||_0 \\
    &= ||S^{(1)}||_0 + ||S^{(2)}||_0 && \text{[By~\eqref{eq:S_eq_G}]}\\
    &\leq ||\hat S^{(1)}||_0 + ||S^{(2)}||_0 && \text{[By~\eqref{eq:S_leq_hatS}]}\\ 
    &\leq ||\hat S^{(1)}||_0 + ||\hat S^{(2)}||_0 && \text{[By~\eqref{eq:S_leq_hatS}]}
    \, . \label{eq:S_ineq}
\end{align}
Since the l.h.s. of \eqref{eq:ineq_1} equals the r.h.s. of~\eqref{eq:S_ineq}, all the above inequalities are actually equalities. Hence we have
\begin{align}
    ||\hat S^{(1)}||_0 = || S^{(1)}||_0\ \text{and}\ ||\hat S^{(2)}||_0 = || S^{(2)}||_0 \, , \label{eq:L0_equality}
\end{align}
as well as
\begin{align}
    ||\hat S^{(1)}||_0 = ||\hat G^z||_0\ \text{and}\ ||\hat S^{(2)}||_0 = ||\hat G^a||_0 \, . 
\end{align}
The latter, combined with the r.h.s. of \eqref{eq:S_G_1} and \eqref{eq:S_G_2}, implies that 
\begin{align}
    \hat{S}^{(1)} = \hat G^z\ \text{and}\ \hat{S}^{(2)} = \hat G^a \, .
\end{align}
The equalities of \eqref{eq:L0_equality} respectively implies the inequalities of the sparsity assumption of Lemmas~\ref{lemma:thm3}~\&~\ref{lemma:thm2}, which allows us to obtain their second and most important conclusion i.e. that $S^{(1)} = P^\top\hat S^{(1)} P$, $S^{(2)} = P^\top\hat S^{(2)}$ and that $L = C P^\top$ where $C$ is $S^{(1)}$-consistent, $(S^{(1)})^\top$-consistent (Lemma~\ref{lemma:thm3}) and $S^{(2)}$-consistent (Lemma~\ref{lemma:thm2}). Notice that because $S^{(1)} = G^z$, $S^{(2)} = G^a$, $\hat S^{(1)} = \hat G^z$ and $\hat S^{(2)} = \hat G^a$, these are equivalent to what we wanted to show, i.e. that $\theta \eqcon \hat \theta$.
\end{proof}

\subsubsection{Understanding the sufficient variability assumptions of Thm.~\ref{thm:combined}}\label{app:suff_var}
To gain a better understanding of \textit{sufficient time-variability} and \textit{sufficient action-variability} assumptions of Thm.~\ref{thm:combined}, we provide examples of transition functions $\bflambda$ that do not satisfy them. The synthetic datasets used in our experiments are examples of processes satisfying the sufficient variability assumption, their exact form can be found in~App.~\ref{sec:syn_data}.

For the sake of simplicity, assume the latent variables are Gaussian with a variance fixed to one, which implies that $\bfT$ is the identity. Further assume that the system is Markovian, meaning $\bflambda(z^{<t}, a^{<t}) = \bflambda(z^{t-1}, a^{t-1})$. The sufficient time-variability thus reduces to: There exist $\{(z_{(p)}, a_{(p)})\}_{p=1}^{||G^z||_0}$ belonging to their respective support such that
\begin{align}
        \vecspan \left\{D_{z}\bflambda(z_{(p)}, a_{(p)}) \right\}_{p=1}^{||G^z||_0} = \sR^{d_z\times d_z}_{G^z} \, . \nonumber
\end{align}
Now, assume $\bflambda(z^{t-1}, a^{t-1}) := Wz^{t-1}$, with $W \in \sR^{d_z \times d_z}_{G^z}$. This implies that $D_{z}\bflambda(z^{t-1}, a^{t-1}) = W$, which clearly means that the sufficient time-variability assumption is not satisfied. In this context, this assumption requires that $\bflambda$ is sufficiently nonlinear, in the sense that its Jacobian matrix varies sufficiently. We postulate that this assumption is a reasonable one, given how complex real world dynamics can be.

Similarly, assume that $\bflambda(z^{<t-1}, a^{t-1}) = Wa^{t-1}$, with $W \in \sR^{d_z \times d_a}_{G^a}$. We thus have that
\begin{align}
    \Delta_\ell \bflambda(z^{t-1}, a^{t-1}, \epsilon) &:= \bflambda(z^{t-1}, a^{t-1} + \epsilon e_{\ell}) - \bflambda(z^{t-1}, a^{t-1}) \\
    &= W(a^{t-1} + \epsilon e_\ell) - Wa^{t-1} \\
    &= \epsilon W_{\cdot, \ell}\,.
\end{align}
Unless every $a_\ell$ has exactly one child, the sufficient action-variability assumption is violated, which, again, shows how linearity can cause problem.

\subsubsection{Connecting to the graphical criterion of~\citet{lachapelle2022disentanglement}} \label{sec:connect_lachapelle2022}
We now clarify how the graphical criterion of~\citet{lachapelle2022disentanglement}, which guarantees complete disentanglement, is related to Thm.~\ref{thm:combined}. Let us first recall what this criterion is about.

\begin{definition}[Graphical criterion of \citet{lachapelle2022disentanglement}]\label{def:graph_crit} A graph $G = [G^z\ G^a] \in \{0,1\}^{d_z \times (d_z + d_a)}$ satisfies the criterion of~\citet{lachapelle2022disentanglement} if, for all $i \in \{1, ..., d_z\}$,
\begin{align}
    \left( \bigcap_{j \in {\bf Ch}_i^z} {\bf Pa}^z_j \right) \cap \left(\bigcap_{j \in {\bf Pa}_i^z} {\bf Ch}^z_j \right) \cap \left(\bigcap_{\ell \in {\bf Pa}^a_i} {\bf Ch}^a_\ell \right)  = \{i\} \, , \nonumber
\end{align}
where ${\bf Pa}^z_i$ and ${\bf Ch}^z_i$ are the sets of parents and children of node $z_i$ in $G^{z}$, respectively, while ${\bf Ch}^a_\ell$ is the set of children of $a_\ell$ in $G^a$.
\end{definition}

We note that the above definition is slightly different from the original one, since the intersections run over ${\bf Ch}_i^z$, ${\bf Pa}_i^z$ and ${\bf Pa}^a_i$ instead of over some sets of indexes $\gI, \gJ \subset \{1, ..., d_z\}$ and $\gL \subset \{1, ..., d_a\}$. This slightly simplified criterion is equivalent to the original one, which we now demonstrate for the interested reader.

\begin{proposition}\label{prop:graph_crit_simplified}
Let $G = [G^z\ G^a] \in \{0,1\}^{d_z \times (d_z + d_a)}$. The criterion of Def.~\ref{def:graph_crit} holds for $G$ if and only if the following holds for $G$: For all $i \in \{1, ..., d_z\}$, there exist sets $\gI, \gJ \subset \{1, ..., d_z\}$ and $\gL \subset \{1, ..., d_a\}$ such that
    \begin{align}
        \left( \bigcap_{j \in \gI} {\bf Pa}^z_j \right) \cap \left(\bigcap_{j \in \gJ} {\bf Ch}^z_j \right) \cap \left(\bigcap_{\ell \in \gL} {\bf Ch}^a_\ell \right)  = \{i\} \, , \nonumber
    \end{align}
\end{proposition}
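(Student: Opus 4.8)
The plan is to prove the two implications of Prop.~\ref{prop:graph_crit_simplified} separately. The direction ``Def.~\ref{def:graph_crit} $\Rightarrow$ simplified criterion'' is immediate: for a fixed $i$, simply instantiate the existential with $\gI := {\bf Ch}_i^z$, $\gJ := {\bf Pa}_i^z$ and $\gL := {\bf Pa}_i^a$, and the intersection in Prop.~\ref{prop:graph_crit_simplified} is then exactly the one appearing in Def.~\ref{def:graph_crit}, hence equals $\{i\}$. So all the content is in the converse.

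For the converse, suppose that for some fixed $i$ there exist $\gI, \gJ \subset \{1, ..., d_z\}$ and $\gL \subset \{1, ..., d_a\}$ with
\[
    \Big(\bigcap_{j \in \gI} {\bf Pa}^z_j\Big) \cap \Big(\bigcap_{j \in \gJ} {\bf Ch}^z_j\Big) \cap \Big(\bigcap_{\ell \in \gL} {\bf Ch}^a_\ell\Big) = \{i\}\,.
\]
The first step is to note that, since $i$ lies in this intersection, every $j \in \gI$ satisfies $i \in {\bf Pa}^z_j$, i.e. $j \in {\bf Ch}^z_i$; thus $\gI \subset {\bf Ch}^z_i$, and likewise $\gJ \subset {\bf Pa}^z_i$ and $\gL \subset {\bf Pa}^a_i$. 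The second step uses the elementary fact that enlarging the index set of an intersection can only shrink it, so that
\[
    \Big(\bigcap_{j \in {\bf Ch}^z_i} {\bf Pa}^z_j\Big) \cap \Big(\bigcap_{j \in {\bf Pa}^z_i} {\bf Ch}^z_j\Big) \cap \Big(\bigcap_{\ell \in {\bf Pa}^a_i} {\bf Ch}^a_\ell\Big) \subset \{i\}\,.
\]
The third step checks the reverse inclusion $\{i\} \subset$ (left-hand side): by the very definitions of parents and children, $j \in {\bf Ch}^z_i \iff i \in {\bf Pa}^z_j$, $j \in {\bf Pa}^z_i \iff i \in {\bf Ch}^z_j$, and $\ell \in {\bf Pa}^a_i \iff i \in {\bf Ch}^a_\ell$, so $i$ belongs to each of the three intersected sets. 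Combining the two inclusions gives that the left-hand side equals $\{i\}$ for every $i$, which is precisely the criterion of Def.~\ref{def:graph_crit}.

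This is really just bookkeeping, so I do not expect a genuine obstacle; the single point that warrants a line of care is the convention for empty intersections — when, say, ${\bf Ch}^z_i = \emptyset$, one reads $\bigcap_{j \in \emptyset}(\cdot)$ as the ambient set $\{1, ..., d_z\}$ — under which the inclusions $\gI \subset {\bf Ch}^z_i$ and the monotonicity step above still hold verbatim, and the membership argument is unaffected since it quantifies over the (possibly empty) canonical neighborhoods.
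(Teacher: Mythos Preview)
Your proof is correct and follows essentially the same approach as the paper's own proof: both directions are handled identically (instantiate $\gI, \gJ, \gL$ with the canonical neighborhoods for ``$\Rightarrow$''; for ``$\Leftarrow$'', use membership of $i$ to force $\gI \subset {\bf Ch}^z_i$, $\gJ \subset {\bf Pa}^z_i$, $\gL \subset {\bf Pa}^a_i$, then monotonicity of intersection plus the tautological membership of $i$). Your explicit remark on the empty-intersection convention is a welcome addition that the paper leaves implicit.
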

\begin{proof}
The direction ``$\implies$'' is trivial, since we can simply choose $\gI := {\bf Ch}_i^z$, $\gJ:={\bf Pa}_i^z$ and $\gL:={\bf Pa}_i^a$.

To show the other direction, we notice that we must have $\gI \subset {\bf Ch}^z_i$, $\gJ \subset {\bf Pa}_i^z$ and $\gL \subset {\bf Pa}_i^a$, otherwise one of the sets in the intersection would not contain $i$, contradicting the criterion. Thus, the criterion of Def.~\ref{def:graph_crit} intersects the same sets or more sets. Moreover these potential additional sets must contain $i$ because of the obvious facts that $j \in {\bf Ch}_i^z \iff i \in {\bf Pa}^z_j$ and $\ell \in {\bf Pa}_i^a \iff i \in {\bf Ch}^a_\ell$, thus they do not change the result of the intersection.
\end{proof}

We can now derive the fact that, if all assumptions of Thm.~\ref{thm:combined} and the graphical criterion of Def.~\ref{def:graph_crit} hold, then the learned representation will be completely disentangled:

\begin{proposition}[Complete disentanglement as a special case] \label{prop:complete_disentanglement}
Suppose all assumptions of Thm.~\ref{thm:combined} and the graphical criterion of Def.~\ref{def:graph_crit}. Then, $\hat \theta$ is completely disentangled, i.e. $\hat \theta$ and $\theta$ are permutation-equivalent. 
\end{proposition}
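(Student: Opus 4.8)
The plan is to invoke Thm.~\ref{thm:combined} to get $\theta \eqcon \hat\theta$ (Def.~\ref{def:consistent_models}) and then upgrade this to $\theta \eqperm \hat\theta$ (Def.~\ref{def:perm_eq}) by using the graphical criterion to force the $S$-consistent matrix in the definition of consistency to be diagonal. Concretely, since all assumptions of Thm.~\ref{thm:combined} hold, there is a permutation matrix $P$ such that $G^z = P^\top \hat G^z P$, $G^a = P^\top \hat G^a$, and $\theta \eqlin \hat\theta$ with $L = CP^\top$, where $C \in \sR^{d_z \times d_z}$ is $G^z$-consistent, $(G^z)^\top$-consistent and $G^a$-consistent. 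Because $L$ is invertible (it is the linear equivalence matrix from Thm.~\ref{thm:linear}) and $P$ is a permutation, $C = LP$ is invertible. Comparing Def.~\ref{def:consistent_models} with Def.~\ref{def:perm_eq}, it therefore suffices to prove that $C$ is diagonal: then $C$ is an invertible diagonal matrix $D$ and $L = DP^\top$, which, together with the two relations on the graphs, is exactly permutation equivalence, hence complete disentanglement (Def.~\ref{def:disentanglement}).

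To show $C$ is diagonal, I would apply the row characterization of $S$-consistency (statement 2 of Lemma~\ref{lemma:charac_S-consistent}) to each of the three consistency properties, after identifying the rows/columns of the relevant binary matrices with parent/children sets of $G$. Reading row $i$ of $G^z$ as ${\bf Pa}^z_i$ and column $j$ of $G^z$ as ${\bf Ch}^z_j$, $G^z$-consistency gives $(C_{i,\cdot})^\top \in \sR^{d_z}_{\bigcap_{j \in {\bf Pa}^z_i}{\bf Ch}^z_j}$ for every $i$; applying the lemma to $(G^z)^\top$ (whose row $i$ is ${\bf Ch}^z_i$ and column $j$ is ${\bf Pa}^z_j$) gives $(C_{i,\cdot})^\top \in \sR^{d_z}_{\bigcap_{j \in {\bf Ch}^z_i}{\bf Pa}^z_j}$; and applying it to $G^a$ (row $i$ is ${\bf Pa}^a_i$, column $\ell$ is ${\bf Ch}^a_\ell$) gives $(C_{i,\cdot})^\top \in \sR^{d_z}_{\bigcap_{\ell \in {\bf Pa}^a_i}{\bf Ch}^a_\ell}$. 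Intersecting the three constraints yields, for all $i$,
\[
(C_{i,\cdot})^\top \in \sR^{d_z}_{N_i}, \qquad N_i := \Big(\bigcap_{j \in {\bf Ch}^z_i}{\bf Pa}^z_j\Big)\cap\Big(\bigcap_{j \in {\bf Pa}^z_i}{\bf Ch}^z_j\Big)\cap\Big(\bigcap_{\ell \in {\bf Pa}^a_i}{\bf Ch}^a_\ell\Big).
\]
By the graphical criterion of Def.~\ref{def:graph_crit}, $N_i = \{i\}$, so $(C_{i,\cdot})^\top \in \sR^{d_z}_{\{i\}}$, i.e. the $i$-th row of $C$ vanishes off the diagonal; as this holds for all $i$, $C$ is diagonal, and invertibility makes it an invertible diagonal $D$, finishing the proof.

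The main obstacle is purely bookkeeping: one must correctly match the abstract row/column index sets in the definition of $S$-consistency with the parent and children sets of the latent causal graph, keeping straight which of $G^z$, $(G^z)^\top$ and $G^a$ contributes which of the three intersections appearing in the criterion. Once those identifications are pinned down, the result is a one-line application of Lemma~\ref{lemma:charac_S-consistent} together with Def.~\ref{def:graph_crit}; no further estimates are needed. (It is worth recalling explicitly that $N_i$ always contains $i$, so the criterion's ``$=\{i\}$'' is the content, and that the residual freedom in the diagonal entries of $C$ is eliminated by invertibility, ensuring the matrix $D$ in Def.~\ref{def:perm_eq} is genuinely invertible.)
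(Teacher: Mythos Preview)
Your proposal is correct and follows essentially the same argument as the paper's own proof: invoke Thm.~\ref{thm:combined} to obtain $\theta \eqcon \hat\theta$ with $L=CP^\top$, apply statement~2 of Lemma~\ref{lemma:charac_S-consistent} to each of the three $S$-consistency properties, intersect the resulting constraints on the rows of $C$, and use the graphical criterion of Def.~\ref{def:graph_crit} to conclude that $C$ is diagonal. Your identifications of rows and columns of $G^z$, $(G^z)^\top$, $G^a$ with the parent and children sets are correct, and the extra remarks on invertibility of $C$ are a welcome clarification that the paper leaves implicit.
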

\begin{proof}
Since assumptions of Thm.~\ref{thm:combined} holds, we have that $\theta$ and $\hat \theta$ are equivalent up to $CP^\top$ where $C$ is $G^z$-consistent, $(G^z)^\top$-consistent and $G^a$-consistent. Using Lemma~\ref{lemma:charac_S-consistent}, we have that, for all $i$,
\begin{align}
    (C_{i, \cdot})^\top &\in \sR^{d_z}_{\bigcap_{j \in G^z_{i, \cdot}} G^z_{\cdot, j}} \cap \sR^{d_z}_{\bigcap_{j \in ((G^z)^\top)_{i, \cdot}} ((G^z)^\top)_{\cdot, j}} \cap \sR^{d_z}_{\bigcap_{j \in G^a_{i, \cdot}} G^a_{\cdot, j}} \\
    &= \sR^{d_z}_{\left(\bigcap_{j \in G^z_{i, \cdot}} G^z_{\cdot, j}\right) \cap \left(\bigcap_{j \in G^z_{\cdot, i}} G^z_{j, \cdot}\right) \cap \left(\bigcap_{j \in G^a_{i, \cdot}} G^a_{\cdot, j}\right)}\\
    &= \sR^{d_z}_{\left(\bigcap_{j \in {\bf Pa}_i^z} {\bf Ch}^z_j \right) \cap \left( \bigcap_{j \in {\bf Ch}_i^z} {\bf Pa}^z_j \right) \cap \left(\bigcap_{\ell \in {\bf Pa}^a_i} {\bf Ch}^a_\ell \right)} \\
    &= \sR^{d_z}_{\{i\}} \, .
\end{align}
Thus $C$ is in fact a diagonal matrix, and hence $\hat \theta$ is completely disentangled.
\end{proof}

\subsubsection{Interpreting the meaning of Theorem~\ref{thm:combined} and the $\eqcon$-equivalence (Def.~\ref{def:consistent_models})} \label{sec:interpret}
To interpret the conclusion of Thm.~\ref{thm:combined}, which is that the learned model $\hat \theta$ is consistent to the ground-truth model $\theta$, i.e. $\theta \eqcon \hat \theta$~(Def.~\ref{def:consistent_models}), we recall the example introduced in Sec.~\ref{sec:example}: Consider the case where the ground-truth graphs $G^z = \bf 0$ (no temporal dependencies) and $G^a$ is 
$$G^a = \begin{bmatrix}
    1 &   &   &   &  \\
    1 &   &   &   &  \\
      & 1 &   &   &  \\
      & 1 &   &   &  \\
      &   & 1 &   &  \\
      &   & 1 &   &  \\
      &   &   & 1 &  \\
      &   &   & 1 &  \\
    1 &   &   &   & 1\\
    1 &   &   &   & 1
    \end{bmatrix}\,,$$
which does not satisfy the graphical criterion of Def.~\ref{def:graph_crit}. Then, $\theta \eqcon \hat \theta$ implies that: (i) $\hat G$ is the same as $G$, up to a permutation, and (ii) both representations $\bff^{-1}$ and $\hat \bff^{-1}$ are linked by a linear transformation $L = CP^\top$ (assuming $\bfT(z) := z$ for simplicity) where the matrix $C$ is $G^z$-consistent, $(G^z)^\top$-consistent and $G^a$-consistent. The conditions of $G^z$-consistency and $(G^z)^\top$-consistency are vacuous, since $[\mathbbm{1} - {\bf 0}(\mathbbm{1} - {\bf 0})^\top]^+ = \mathbbm{1}$, i.e. they do not enforce anything on $C$. However, $G^a$-consistence forces $C$ to have the same zeros as
\begin{align}
    [\mathbbm{1} - {G^a}(\mathbbm{1} - {G^a})^\top]^+ =  
    \begin{bmatrix}
    1 & 1 &   &   &   &   &   &   & 1 & 1 \\
    1 & 1 &   &   &   &   &   &   & 1 & 1 \\
      &   & 1 & 1 &   &   &   &   &   &   \\
      &   & 1 & 1 &   &   &   &   &   &   \\
      &   &   &   & 1 & 1 &   &   &   &   \\
      &   &   &   & 1 & 1 &   &   &   &   \\
      &   &   &   &   &   & 1 & 1 &   &   \\
      &   &   &   &   &   & 1 & 1 &   &   \\
      &   &   &   &   &   &   &   & 1 & 1 \\
      &   &   &   &   &   &   &   & 1 & 1
    \end{bmatrix} \, .
\end{align}

Lemma~\ref{lemma:charac_S-consistent} gives a different perspective by telling us that $C$ being $G^a$-consistent is equivalent to having $C_{i, j} = 0$ whenever $j \not\in \bigcap_{\ell \in G^a_{i, \cdot}}G^a_{\cdot, \ell}$, which is equivalent to having $j \in \bigcup_{\ell \in {\bf Pa}^a_{i}}({\bf Ch}^a_\ell)^c$. This allows us to see that \textit{the ground-truth factor $z_i$ is not a function of the learned factor $\hat z_j$ ($C_{i,j} = 0$) whenever there exists an action $a_\ell$ that targets $z_i$, but not $z_j$}.

\section{Experiments}
\subsection{Synthetic datasets} \label{sec:syn_data}
We now provide a detailed description of the synthetic datasets used in experiments of Sec.~\ref{sec:exp}, which exactly match those of~\citet{lachapelle2022disentanglement}, except for the graphs used. We nevertheless provide a full description of the datasets used here for completeness.

For all experiments, the dimensionality of $X^t$ is $d_x = 20$ and the ground-truth $\bff$ is a random neural network with three hidden layers of $20$ units with Leaky-ReLU activations with negative slope of 0.2. The weight matrices are sampled according to a 0-1 Gaussian distribution and, to make sure $\bff$ is injective as assumed in all theorems of this paper, we orthogonalize its columns. Inspired by typical weight initialization in NN~\citep{pmlr-v9-glorot10a}, we rescale the weight matrices by $\sqrt{\frac{2}{1 + 0.2^2}}\sqrt{\frac{2}{d_{in} + d_{out}}}$ . The standard deviation of the Gaussian noise added to $\bff(z^t)$ is set to $\sigma = 10^{-2}$ throughout. All datasets consist of 1 million examples.

We now present the different choices of ground-truth ${p(z^t \mid z^{<t}, a^{<t})}$ we explored in our experiments. In all cases considered, it is a Gaussian with covariance $0.0001I$ independent of $(z^{<t}, a^{<t})$ and a mean given by some function $\mu(z^{t - 1}, a^{t - 1})$ carefully chosen to satisfy the assumptions of Thm.~\ref{thm:combined}. Notice that we hence are in the case where $k=1$ which is not covered by the theory of~\citet{iVAEkhemakhem20a}. We suppose throughout that $d_z = 10$ and $d_a = 5$. In all \textit{time-sparsity} experiments, sequences have length $T=2$. In \textit{action-sparsity} experiments, the value of $T$ has no consequence since we assume there is no time dependence.

\textbf{Transition function of the time-sparsity datasets (left of Table~\ref{tab:exp}).} The mean function in this case is given by
\begin{align}
    &\mu(z^{t\shortminus 1}, a^{t\shortminus 1}) := z^{t\shortminus 1} + 0.5 \begin{bmatrix}
           G^z_{1} \cdot \sin(\frac{3}{\pi}z^{t\shortminus 1}) \\
           G^z_{2}\cdot \sin(\frac{4}{\pi}z^{t\shortminus 1} + 1)\\
           \vdots \\
           G^z_{d_z}\cdot  \sin(\frac{d_z + 2}{\pi}z^{t\shortminus 1} + d_z - 1)
         \end{bmatrix} \, , \label{eq:triangular_mean_func}
\end{align}
where $G^z_i$ is the $i$th row of the ground-truth causal graph $G^z$, the $\sin$ function is applied element-wise, the $\cdot$ is the dot product between two vectors and the summation in the $\sin$ function is broadcasted. The various frequencies and phases in the $\sin$ functions ensures the sufficient time-variability assumption of Thm.~\ref{thm:combined} is satisfied.

\textbf{Graphs of the datasets with temporal dependence (left of Table~\ref{tab:exp}).}
\begin{align}
    G_{(1)}^z := \begin{bmatrix}
    1 & 1 &   &   &   &   &   &   &   &   \\
    1 & 1 &   &   &   &   &   &   &   &   \\
      &   & 1 & 1 &   &   &   &   &   &   \\
      &   & 1 & 1 &   &   &   &   &   &   \\
      &   &   &   & 1 & 1 &   &   &   &   \\
      &   &   &   & 1 & 1 &   &   &   &   \\
      &   &   &   &   &   & 1 & 1 &   &   \\
      &   &   &   &   &   & 1 & 1 &   &   \\
      &   &   &   &   &   &   &   & 1 & 1 \\
      &   &   &   &   &   &   &   & 1 & 1
    \end{bmatrix}
    \ \ \ 
    G_{(2)}^z := \begin{bmatrix}
    1 & 1 &   &   &   &   &   &   & 1 & 1 \\
    1 & 1 &   &   &   &   &   &   & 1 & 1 \\
      &   & 1 & 1 &   &   &   &   &   &   \\
      &   & 1 & 1 &   &   &   &   &   &   \\
      &   &   &   & 1 & 1 &   &   &   &   \\
      &   &   &   & 1 & 1 &   &   &   &   \\
      &   &   &   &   &   & 1 & 1 &   &   \\
      &   &   &   &   &   & 1 & 1 &   &   \\
    1 & 1 &   &   & 1 & 1 &   &   & 1 & 1 \\
    1 & 1 &   &   & 1 & 1 &   &   & 1 & 1
    \end{bmatrix}
\end{align}

\textbf{Transition function of the action-sparsity datasets (right of Table~\ref{tab:exp}).} 
The mean function is given by
\begin{align}
    \mu(z^{t\shortminus 1}, a^{t\shortminus 1}) := \begin{bmatrix}
           G^a_{1} \cdot \sin(\frac{3}{\pi}a^{t\shortminus 1}) \\
           G^a_{2}\cdot \sin(\frac{4}{\pi}a^{t\shortminus 1} + 1)\\
           \vdots \\
           G^a_{d_z}\cdot  \sin(\frac{d_z + 2}{\pi}a^{t\shortminus 1} + d_z - 1)
         \end{bmatrix} \, , \label{eq:double_diagonal_mean_func}
\end{align}
which is analogous to~\eqref{eq:triangular_mean_func}.

\textbf{Graphs of the datasets with actions (right of Table~\ref{tab:exp}).}

\begin{align}
    G_{(1)}^a := \begin{bmatrix}
    1 &   &   &   &  \\
    1 &   &   &   &  \\
      & 1 &   &   &  \\
      & 1 &   &   &  \\
      &   & 1 &   &  \\
      &   & 1 &   &  \\
      &   &   & 1 &  \\
      &   &   & 1 &  \\
      &   &   &   & 1\\
      &   &   &   & 1
    \end{bmatrix}
    \ \ 
    G_{(2)}^a := \begin{bmatrix}
    1 &   &   &   &  \\
    1 &   &   &   &  \\
      & 1 &   &   &  \\
      & 1 &   &   &  \\
      &   & 1 &   &  \\
      &   & 1 &   &  \\
      &   &   & 1 &  \\
      &   &   & 1 &  \\
    1 &   &   &   & 1\\
    1 &   &   &   & 1
    \end{bmatrix}
\end{align}

\subsection{Implementation details of the constrained VAE approach} \label{sec:ours_details}
All details of our implementation matches those of~\citet{lachapelle2022disentanglement} (except for the constrained optimization which is novel to our work). We nevertheless repeat all details here for completeness.

\textbf{Learned mechanisms.} Every coordinate $z_i$ of the latent vector has its own mechanism $\hat{p}(z_i^t \mid z^{<t}, a^{<t})$ that is Gaussian with mean outputted by $\hat{\mu}_i(z^{t-1}, a^{t-1})$ (a multilayer perceptron with 5 layers of 512 units) and a learned variance which does not depend on the previous time steps. For learning, we use the typical parameterization of the Gaussian distribution with $\mu$ and $\sigma^2$ and not its exponential family parameterization. Throughout, the dimensionality of $Z^t$ in the learned model always match the dimensionality of the ground-truth (same for baselines). Learning the dimensionality of $Z^t$ is left for future work.

\textbf{Prior of $Z^1$ in time-sparsity experiments.} In \textit{time-sparsity} experiments, the prior of the first latent $\hat{p}(Z^1)$ (when $t=1$) is modelled separately as a Gaussian with learned mean and learned diagonal covariance. Note that this learned covariance at time $t=1$ is different from the subsequent learned conditional covariance at time $t>1$.

\textbf{Learned graphs $\hat{G}^z$ and $\hat{G}^a$.} As explained in Sec.~\ref{sec:estimation}, to allow for gradient-based optimization, each edge $\hat{G}_{i,j}$ is viewed as a Bernoulli random variable with probability of success $\text{sigmoid}(\gamma_{i,j})$, where $\gamma_{i,j}$ is a learned parameter. The gradient of the loss with respect to the parameter $\gamma_{i,j}$ is estimated using the Gumbel-Softmax Gradient estimator~\citep{jang2016categorical, maddison2016concrete}. We found that initializing the parameters $\gamma_{i,j}$ to a large value such that the probability of sampling all edge is almost one improved performance. In \textit{time-sparsity} experiments, there is no action so $\hat{G}^a$ is fixed to $0$, i.e. it is not learned. Analogously, in \textit{action-sparsity} experiments, there is no temporal dependence so $\hat{G}^z$ is fixed to $0$.

\textbf{Encoder/Decoder.} In all experiments, including baselines, both the encoder and the decoder is modelled by a neural network with 6 fully connected hidden layers of 512 units with LeakyReLU activation with negative slope $0.2$. For all VAE-based methods, the encoder outputs the mean and a diagonal covariance. Moreover, $p(x|z)$ has a \textit{learned} isotropic covariance $\sigma^2 I$. Note that $\sigma^2 I$ corresponds to the covariance of the independent noise $N^t$ in the equation $X^t = \bff(Z^t) + N^t$.

\textbf{Constrained optimization.} Let $\text{ELBO}(\hat\bff, \hat\bflambda, \hat G, q)$ be the ELBO objective evaluated on the whole dataset. The constrained optimization we want to solve is
\begin{align}
    \max_{\hat\bff, \hat\bflambda, \gamma, q} \sE_{\hat G \sim \sigma(\gamma)} \text{ELBO}(\hat\bff, \hat\bflambda, \hat G, q)\ \ \text{subject to}\ \ \sE_{\hat G \sim \sigma(\gamma)} ||\hat G||_0 \leq \beta \,.
\end{align}
where $\hat G \sim \sigma(\gamma)$ means that $\hat G_{i,j}$ are independent and distributed according to $\sigma(\gamma_{i,j})$. Because $\sE_{\hat G \sim \sigma(\gamma)} ||\hat G||_0 = ||\sigma(\gamma)||_1$ where $\sigma(\gamma)$ is matrix, the constraint becomes $||\sigma(\gamma)||_1 \leq \beta$. To solve this problem, we perform gradient descent-ascent on the Lagrangian function given by
\begin{align}
    \sE_{\hat G \sim \sigma(\gamma)}\text{ELBO}(\hat\bff, \hat\bflambda, \hat G, q) - \alpha(||\sigma(\gamma)||_1 - \beta)
\end{align}
where the ascent step is performed w.r.t. $\hat\bff, \hat\bflambda, \hat G$ and $q$; and the descent step is performed w.r.t. Lagrangian multiplier $\alpha$, which is forced to remain greater or equal to zero via a simple projection step. As suggested by~\citet{gallego2021flexible}, we perform \textit{dual restarts} which simply means that, as soon as the constraint is satisfied, the Lagrangian multiplier is reset to $0$. We used the library \texttt{Cooper}~\citep{gallegoPosada2022cooper}, which implement many constrained optimization procedure in Python, including the one described above. Note that we use Adam~\citep{Adam} for the ascent steps and standard gradient descent for the descent step on the Lagrangian multiplier $\alpha$.

We also found empircally that the following schedule for $\beta$ is helpful: We start training with $\beta = \max_{\hat G} ||G||_0$ and linearly decreasing its value until the desired number of edges is reached. This avoid getting a sparse graph too quickly while training, thus letting enough time to the model parameters to learn. In each experiment, we trained for 300K iterations, and the $\beta$ takes 150K to reach to go from its initial value to its desired value.

\subsection{Details about the $R_\text{con}$ metric and its relation to MCC and $R$}\label{sec:R_con}
To evaluate whether the learned representation is consistent to the ground-truth~(Def.~\ref{def:consistent_models}), as predicted by Thm.~\ref{thm:combined}, we came up with a novel metric, denoted by $R_\text{con}$. Computing $R_\text{con}$ goes as follows: First, we permute the learned representations $\hat z$ using the permutation $\hat P$ found by MCC~(Sec.~\ref{sec:exp}), i.e. $\hat z_\text{perm} := \hat P^\top \hat z$. Then, we compute the sparsity pattern imposed by the consistency equivalence~(Def.~\ref{def:consistent_models}), denoted by $S_C$. Then, for every $i$, we predict the ground-truth $z_i$ given only the factors allowed, i.e. $(S_C)_{i, \cdot} \odot \hat z_\text{perm}$, and compute the associated coefficient of multiple correlations $R_{\text{con}, i}$ and report the mean, i.e. $R_\text{con} := \frac{1}{d_z} \sum_{i=1}^{d_z} R_{\text{con}, i}$. It is easy to see that we must have $R_\text{con} \leq R$, since $R_\text{con}$ was computed with less features than $R$. Moreover, $\text{MCC} \leq R_\text{con}$, because MCC can be thought of as computing exactly the same thing as for $R_\text{con}$, but by predicting $z_i$ only from $\hat z_{\text{perm},i}$, i.e. with less features than $R_\text{con}$.

This means we always have $0 \leq \text{MCC} \leq R_\text{con} \leq R \leq 1$. This is a nice property which allows to compare all three metrics together.

\end{document}